\theoremstyle{definition} 
\newtheorem{lemma}{Lemma}
\algrenewcommand{\algorithmicrequire}{\textbf{Input:}}
\algrenewcommand{\algorithmicensure}{\textbf{Output:}}
\theoremstyle{thmstyleone}
\theoremstyle{thmstyletwo}
\newtheorem{remark}{Remark}
\theoremstyle{thmstylethree}
\begin{document}

\title[Post-Transfer Learning Statistical Inference in High-Dimensional Regression]{Post-Transfer Learning Statistical Inference in High-Dimensional Regression}

\author[1,2]{Nguyen Vu Khai Tam} %\email{iauthor@gmail.com}

\author[1,2]{Cao Huyen My} %\email{iiauthor@gmail.com}

\author*[1,2,3]{Vo Nguyen Le Duy}\email{duyvnl@uit.edu.vn}

\affil[1]{\orgname{University of Information Technology}, \city{Ho Chi Minh City}, \country{Vietnam}}

\affil[2]{\orgname{Vietnam National University}, \city{Ho Chi Minh City}, \country{Vietnam}}

\affil[3]{\orgname{RIKEN AIP}, \city{Tokyo}, \country{Japan}}

\abstract{

Transfer learning (TL) for high-dimensional regression (HDR) is an important problem in machine learning, particularly when dealing with limited sample size in the target task.
However, there currently lacks a method to quantify the statistical significance of the relationship between features and the response in TL-HDR settings.
In this paper, we introduce a novel statistical inference framework for assessing the reliability of feature selection in TL-HDR, called \emph{PTL-SI} (\underline{P}ost-\underline{TL} \underline{S}tatistical \underline{I}nference).
The core contribution of PTL-SI is its ability to provide valid $p$-values to features selected in TL-HDR, thereby rigorously controlling the false positive rate (FPR) at desired significance level $\alpha$ (e.g., 0.05).
Furthermore, we enhance statistical power by incorporating a strategic divide-and-conquer approach into our framework.
We demonstrate the validity and effectiveness of the proposed PTL-SI through extensive experiments on both synthetic and real-world high-dimensional datasets, confirming its theoretical properties and utility in testing the reliability of feature selection in TL scenarios.
}

\keywords{Transfer Learning, High-dimensional Regression, Uncertainty Quantification, Statistical Hypothesis Testing, $p$-value}

\maketitle

\section{Introduction}\label{sec1}
\begin{figure}[htbp]
    \centering
        \includegraphics[width=1\linewidth]{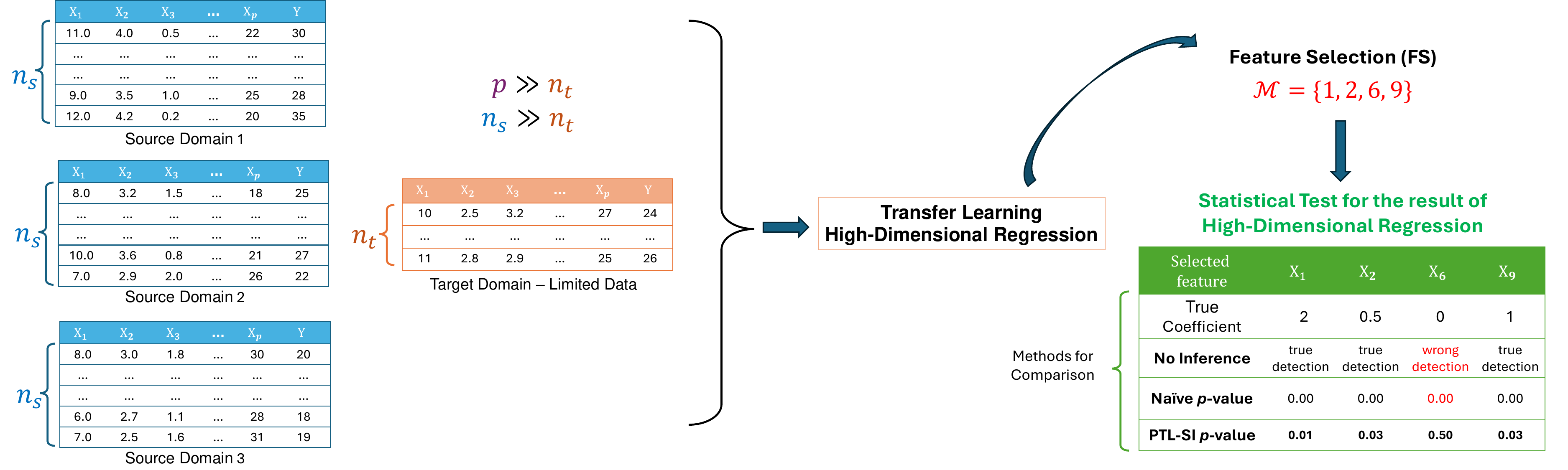}
        \caption{Illustration of the proposed PTL-SI method. Conducting post-transfer learning analysis without statistical inference in high-dimensional regression may lead to the erroneous identification of irrelevant features (e.g., $X_6$). The naive $p$-value is even small for falsely selected feature. In contrast, with the proposed PTL-SI method, we can identify both false positives (FPs) and true positives (TPs). i.e., large $p$-values for irrelevant features and small $p$-values for truly informative ones, thereby enhancing the reliability of feature selection after transfer learning.}
    \label{fig:introduction}
\end{figure}
High-dimensional regression (HDR), where the sample size is much smaller than the number of features, is a fundamental challenge in machine learning and statistics, particularly in data-scarce settings \cite{li2022transfer, he2024transfusion}.  
Such scenarios frequently occur in fields such as genomics \cite{mei2011gene}, financial modeling \cite{yang2016nonnegative}, and medical image analysis \cite{shin2016deep}, where data are inherently high-dimensional but limited in sample size. 
In these contexts, classical learning algorithms often struggle, leading to poorly predictive performance.
\emph{Transfer learning (TL)} has emerged as a promising approach to tackle this challenge.
Rather than relying solely on the limited data from the target task, transfer learning incorporates knowledge from one or more related source tasks. 
By leveraging the potentially abundant and rich information from these auxiliary sources, TL can significantly enhance the efficacy of the regression model for the target task.
For instance, in genomic studies of rare diseases, supplementing the limited target dataset with information from larger, related source studies can reveal critical genetic markers that might remain undetected if the analysis were confined to the target sample alone.

\vspace{8pt}

However, a critical limitation remains: the lack of a principled framework for statistical inference. 
This limitation hinders the interpretability and broader adoption of transfer learning in scientific research, where evaluating the statistical significance of discovered features is often essential. 
In particular, controlling the rate of false discoveries when identifying relevant features is crucial for ensuring the reliability of the findings.
In HDR settings, feature selection (FS) aims to identify the truly influential features from a vast number of irrelevant ones. 
The inclusion of irrelevant features (i.e., false positives) can lead to misleading conclusions and potentially harmful consequences.
For example, incorrectly identifying non-causal genetic markers as risk factors could lead to unnecessary, costly, and potentially detrimental interventions for patients.
Consequently, developing an inference method that can control the false positive rate (FPR) is crucially important.
We note that it is also important to control the false negative rate (FNR).
Following the established statistical practice, we propose a method that provides theoretical control over the false positive rate (FPR) at a pre-specified level (e.g., $\alpha = 0.05$), while simultaneously aiming to minimize the FNR---equivalently, to maximize the true positive rate (TPR)

\vspace{8pt}

Conducting valid statistical inference to control the FPR in TL-HDR poses a significant challenge.
The central difficulty arises from the fact that the features selected for inference are determined by applying TL-HDR algorithms to the data.
This data-dependent selection process violates a key assumption of classical inference methods, which require the set of features to be fixed in advance.
To address this challenge, our approach is inspired by the framework of \emph{Selective Inference (SI)} \cite{lee2016exact}.
However, directly applying existing SI techniques is not feasible, as they are typically designed for specific models and well-defined selection procedures. 
Therefore, we develop a new SI-based method that carefully accommodates the structure of TL-HDR algorithms and their associated feature selection mechanisms.

\vspace{8pt}

In this paper, we primarily focus on developing a statistical inference method for TransFusion \cite{he2024transfusion}, the most recent approach that has demonstrated superior performance on the TL-HDR task.
Furthermore, to demonstrate the adaptability of our proposed method, we show that it can be extended to perform statistical inference for the method introduced in \cite{li2022transfer}, the most cited study in the TL-HDR literature.

\vspace{\baselineskip}

\noindent\textbf{Contributions.}
Our contributions are as follows:
\begin{itemize}
    \item We mathematically formulate the problem of testing FS results in TL-HDR within a hypothesis testing framework. This presents a distinct challenge, as it requires addressing the effects of TL to maintain valid control over the FPR.
    
   \vspace{5pt}
    \item 
    
     We propose a novel statistical method, called \emph{PTL-SI} (\underline{P}ost-\underline{T}ransfer \underline{L}earning \underline{S}tatistical \underline{I}nference), designed to conduct the proposed hypothesis test.
    We demonstrate that achieving control over the FPR in the TL-HDR setting is feasible. To the best of our knowledge, this is the first method capable of conducting valid inference within the context of TL-HDR.
    Furthermore, we present strategic approaches aimed at maximizing the TPR, thereby minimizing the FNR.
    
    \vspace{5pt}
    \item We perform comprehensive experiments on both synthetic and real-world datasets to rigorously validate our theoretical results and showcase the superior performance of the proposed PTL-SI method.
   
\end{itemize}

Figure \ref{fig:introduction} shows an illustrative example of the problem we consider in this paper and the importance of the proposed method. For reproducibility, our implementation is available at: \href{https://github.com/22520896/PTL_SI}{https://github.com/22520896/PTL\_SI}.

\vspace{\baselineskip} 
\noindent\textbf{Related works.}
Traditional statistical inference for FS results often encounters issues with the validity of $p$-values. A common problem arises from the use of naive $p$-values, which are computed under the assumption that the selected features were fixed in advance. However, in the context of TL-HDR, this assumption is violated, making the naive $p$-values invalid. Data splitting (DS) can provide a potential solution by dividing the data, but it comes with the trade-off of reducing the available data for both feature selection and inference, which can diminish statistical power. Moreover, DS is not always feasible, especially when the data exhibits correlations.

\vspace{5pt}

The SI framework has been extensively studied for conducting inference on features selected by FS methods in linear models. Initially introduced for Lasso \cite{lee2016exact}, the core idea of SI is to perform inference conditional on the FS process. This approach helps mitigate bias introduced by the FS step, enabling the computation of valid $p$-values. The seminal work on SI has paved the way for further research on SI in FS \cite{loftus2014significance, fithian2014optimal, tibshirani2016exact, yang2016selective, suzumura2017selective, sugiyama2021more, duy2022more}.
However, these methods are typically designed for scenarios where there is sufficient data for regression on the target task. 
In scenarios where the target task suffers from limited data and TL is necessary to leverage information from related source tasks, these existing methods lose their validity, as they do not account for the effects of the TL process.

\vspace{5pt}

A closely related work and the main motivation for this study is \cite{loi2024statistical}, where the authors propose a framework to compute valid $p$-values for FS results after optimal transport (OT)-based domain adaptation (DA).
This study primarily focuses on developing statistical inference techniques specifically designed for the OT-based DA approach, which is completely different from the TL approaches considered in this paper.
Moreover, the setting in \cite{loi2024statistical} is limited to adaptation from a single source domain to a target domain in low-dimensional problems.
In contrast, we examine a scenario that involves transferring knowledge from \emph{multiple} source tasks to a target task in high-dimensional problems.
Therefore, the method in \cite{loi2024statistical} cannot be applied to our setting.

\section{Problem Statement}\label{sec2}

We consider a transfer learning setting with a single target task and $K$ source tasks.
For the target task, we consider 
\begin{equation}\label{eq:y0}
    \bm{Y}^{(0)} = \left(Y^{(0)}_1, Y^{(0)}_2, \dots, Y^{(0)}_{n_T}\right) \sim \NN \left(\boldsymbol{\mu}^{(0)}, {\Sigma}^{(0)}\right),
\end{equation}
where $n_T$ is the number of instances in the target task, $\boldsymbol{\mu}^{(0)}$ is modeled as a linear function of $p$ features $\bm x^{(0)}_1, \bm x^{(0)}_2, ..., \bm x^{(0)}_p$, and ${\Sigma}^{(0)}$ is the covariance matrix assumed to be known or estimable from independent data.
We focus on the high-dimensional regime, that is, $ n_T \ll p$.
Similarly, for the $K$ source tasks, we consider 
\begin{equation}\label{eq:yk}
    \bm{Y}^{(k)} = \left(Y^{(k)}_1, Y^{(k)}_2, \dots, Y^{(k)}_{n_S}\right) \sim \NN\left(\boldsymbol{\mu}^{(k)}, {\Sigma}^{(k)}\right), \quad \forall k \in [K] = \{ 1, 2, \dots, K\}.
\end{equation}
For the $k$-th source model, $\boldsymbol{\mu}^{(k)}$ is modeled as a linear function of $p$ features $\bm x^{(k)}_1, \bm x^{(k)}_2, ..., \bm x^{(k)}_p$ and ${\Sigma}^{(k)}$ is the known covariance matrix.
We assume, for simplicity, that each source task has the same sample size $n_S$.
The goal is to statistically test the significance of the relationship between the features and the response in the target task after applying transfer learning for high-dimensional regression.

\subsection{Transfer Learning for High-Dimensional Regression (TransFusion \cite{he2024transfusion})}\label{subsec_transfusion}

The procedure of TransFusion is described as follows:

\vspace{8pt}
\textbf{\textit{Step 1. Co-Training}}. This step makes use of both target and source samples to estimate the regression coefficients $\bm \beta^{(k)}$s and $\bm \beta^{(0)}$ by solving the optimization problem:

\[
\hat{\boldsymbol\beta} = \operatorname*{argmin}_{\boldsymbol\beta \in \mathbb{R}^{(K+1)p}} 
\left\{ 
\frac{1}{2N} \sum_{k=0}^{K} \| \bm{Y}^{(k)} - {X}^{(k)} \boldsymbol\beta^{(k)} \|_2^2 
+ \lambda_0 \left( \| \boldsymbol\beta^{(0)} \|_1 + \sum_{k=1}^{K} a_k \| \boldsymbol\beta^{(k)} - \boldsymbol\beta^{(0)} \|_1 \right) 
\right\},
\]
where $N = K n_S + n_T$, $\lambda_0$ is the hyper-parameter and $\left\{a_k\right\}_{k=1}^{K}$ are the weights associated with the $K$ source tasks.
As stated in Appendices A and D of the TransFusion paper \cite{he2024transfusion}, the above optimization problem is equivalent to solving:
\begin{equation}\label{eq:theta}
 \hat{\boldsymbol{\theta}} = \underset{\boldsymbol{\theta} \in \mathbb{R}^{{(K+1)p}}}{\operatorname{argmin}} \left\{ \frac{1}{2N}\| \bm Y - {X}\boldsymbol{\theta}\|^2_2 + \lambda_0 
\sum_{k=0}^Ka_{k}\|\boldsymbol{\theta}^{(k)}\|_1  \right\},   
\end{equation}
where $a_0 = 1$, $\bm Y = \big ( \bm Y^{(1)}, \bm Y^{(2)}, ..., \bm Y^{(K)}, \bm Y^{(0)} \big)^\top$,
\begin{align*}
 X =
\begin{pmatrix}
X^{(1)} & 0 & \cdots & 0 & {X}^{(1)} \\
0 & {X}^{(2)} & \cdots & 0 & {X}^{(2)} \\
\vdots & \vdots & \ddots & \vdots & \vdots \\
0 & 0 & \cdots & {X}^{(K)} & {X}^{(K)} \\
0 & 0 & \cdots & 0 & {X}^{(0)}
\end{pmatrix},
\quad
\boldsymbol{\theta} =
\begin{pmatrix}
\boldsymbol{\beta}^{(1)}-\boldsymbol{\beta}^{(0)} \\ 
\boldsymbol{\beta}^{(2)} -\boldsymbol{\beta}^{(0)}\\ 
\vdots \\
\boldsymbol{\beta}^{(K)}- \boldsymbol{\beta}^{(0)} \\
\boldsymbol{\beta}^{(0)}
\end{pmatrix}.
\end{align*}
The problem \eq{eq:theta} is a weighted LASSO problem and can therefore be directly solved using existing algorithms, such as proximal gradient descent.
After obtaining $\hat{\bm \theta}$, we can directly identify $\hat{\bm \beta}$, which is then used to compute the estimator
\begin{equation}\label{eq:w}
    \hat{\boldsymbol{w}} = \frac{n_S}{N} \sum_{k=1}^K \hat{\boldsymbol{\beta}}^{(k)} + \frac{n_T}{N} \hat{\boldsymbol{\beta}}^{(0)}.
\end{equation}
The motivation for computing the estimator $\hat{\bm w}$ can be found in \S2.1 of \cite{he2024transfusion} 

\vspace{8pt}
\textbf{\textit{Step 2. Local Debias}}. This step aims to refine the initial estimator $\hat{\bm w}$ and compute the final estimated coefficients for the target task:
\begin{align}
    & \hat{\boldsymbol{\delta}} = \underset{\boldsymbol{\delta} \in \mathbb{R}^p}{\operatorname{argmin}} \left\{ \frac{1}{2n_T} \left\| \bm Y^{(0)} - {X}^{(0)} \hat{\boldsymbol{w}} - {X}^{(0)} \boldsymbol{\delta} \right\|_2^2 + \tilde{\lambda} \|\boldsymbol{\delta}\|_1\right\}, \label{eq:delta} \\ 
    & \hat{\boldsymbol{\beta}}^{(0)}_{\rm TransFusion} = \hat{\boldsymbol{w}} + \hat{\boldsymbol{\delta}}  \label{eq:beta_transfusion}.
\end{align}
All the steps in TransFusion can be summarized in Algorithm \ref{algo_transfusion}.

\begin{algorithm}
\caption{TransFusion \cite{he2024transfusion}}\label{algo_transfusion}
\begin{algorithmic}[1]

\Require $\left({X}^{(0)}, \bm Y^{(0)}\right)$, $\left\{ ({X}^{(k)}, \bm Y^{(k)} )\right\}_{k=1}^{K}$, $\lambda_0$, $\tilde{\lambda}$ and $\left\{a_k\right\}_{k=1}^{K}$

\vspace{5pt}
\State Compute $\hat{\boldsymbol{\theta}} \gets$ Eq. \eq{eq:theta}

\vspace{5pt}
\State Calculate  $\hat{\boldsymbol{w}} \gets $ Eq. \eq{eq:w}

\vspace{5pt}
\State Obtain $\hat{\boldsymbol{\delta}} \gets$ Eq. \eq{eq:delta}

\vspace{5pt}
\State $\hat{\boldsymbol{\beta}}^{(0)}_{\rm TransFusion} \gets \hat{\boldsymbol{w}} + \hat{\boldsymbol{\delta}}$ $\quad$ // Eq. \eq{eq:beta_transfusion}

\vspace{5pt}
\Ensure $\hat{\boldsymbol{\beta}}^{(0)}_{\rm TransFusion}$
\end{algorithmic}
\end{algorithm}

\subsection{Feature Selection and Statistical Inference}\label{subsec_feature_selection_statistical_inference}

Since the TransFusion yields sparse solutions, the selected feature set in the target task is defined as:
\begin{align} \label{eq:selected_set}
	\cM = \left\{ j \in [p] : \left (\hat{\bm\beta}^{(0)}_{\rm TransFusion} \right)_j \neq 0 \right\}.
\end{align}
Our goal is to assess whether the features selected in \eq{eq:selected_set} are truly relevant or merely selected by chance. To perform the inference on the $j^{\rm th}$ selected feature, we consider the following statistical hypotheses:
\begin{equation} \label{eq:hypotheses}
    \text{H}_{0, j} : \beta_j = 0 
    \quad \text{vs.} \quad 
    \text{H}_{1, j} : \beta_j \neq 0,
\end{equation}
where $\beta_j = \left [ \Big ( {X^{(0)}_{\cM}}^\top X^{(0)}_{\cM} \Big )^{-1} {X^{(0)}_{\cM}}^\top \bm \mu^{(0)} \right ]_j$, ${X}^{(0)}_{\mathcal{M}}$ is the sub-matrix of ${X}^{(0)}$ made up of columns in the set ${\mathcal{M}}$.

A natural choice of test statistic for testing these hypotheses is the least squares estimate, defined as:
\begin{equation}\label{eq:test_statistic}
    \tau_j = 
    \left [ \Big ( {X^{(0)}_{\cM}}^\top X^{(0)}_{\cM} \Big )^{-1} {X^{(0)}_{\cM}}^\top \bm Y^{(0)} \right ]_j
    = \bm \eta_j^\top \bm Y,
\end{equation}
where $\bm Y = \big ( \bm Y^{(1)}, \bm Y^{(2)}, ..., \bm Y^{(K)}, \bm Y^{(0)} \big)^\top$ as defined in \eq{eq:theta} and $\bm \eta_j$ is the direction of the test statistic defined as:
\begin{align} \label{eq:eta}
\boldsymbol{\eta}_j = \begin{pmatrix}
\mathbf{0}_{Kn_S} \\
{X}^{(0)}_{\mathcal{M}} 
\left( {X}^{(0) \top}_{\mathcal{M}} 
{X}^{(0)}_{\mathcal{M}} \right)^{-1}\bm{e}_j
\end{pmatrix}.
\end{align}
Here,  $\bm 0_{Kn_S} \in \mathbb{R}^{Kn_S}$ denotes a vector of all zeros, $\bm e_j \in \mathbb{R}^{|\mathcal{M}|}$ is a basis vector with a 1 at the $j^{\text{th}}$ position and 0 elsewhere.

\subsection{Decision Making based on $p$-values}\label{subsec_valid_p_value}
After computing the test statistic in \eq{eq:test_statistic}, we proceed to calculate the corresponding $p$-value. Given a significance level $\alpha \in [0,1]$ (e.g., 0.05), we reject the null hypothesis $\text{H}_{0,j}$ and conclude that the $j^{\text{th}}$ feature is relevant if the $p$-value is less than or equal to $\alpha$. Conversely, if the $p$-value exceeds $\alpha$, there is insufficient evidence to support the relevance of the $j^{\rm th}$ selected feature.
\\ \\
\textbf{Challenge of computing a valid $p$-value. } The conventional (or naive) $p$-value is defined as:
\begin{equation*}
p_{j}^{\text{naive}} = \mathbb{P}_{\text{H}_{0,j}}\Big(
\left|\boldsymbol{\eta}_{j}^{\top} \bm Y
\right| \geq 
\left|\boldsymbol{\eta}_{j}^{\top}
\bm Y_{\rm obs}\right|
\Big),
\end{equation*}
where $\bm Y_{\rm obs}$ is an observation (realization) of the random vector $\bm Y$.
If the hypotheses in \eq{eq:hypotheses} are fixed in advance, i.e., non-random, the vector $\boldsymbol{\eta}_{j}$ is independent of both the data and the TransFusion algorithm. Consequently, the naive $p$-value is valid in the sense that
\begin{equation}\label{eq:naive}
\mathbb{P}\Big(
\underbrace{p_{j}^{\text{naive}} \leq \alpha \mid \text{H}_{0,j} \text{ is true}}_{\text{a false positive}}
\Big) = \alpha, \quad \forall\alpha\in[0,1],
\end{equation}
i.e., the false positive rate is controlled under a predefined significance level. 
However, in our setting, the vector $\boldsymbol{\eta}_{j}$ is influenced by both TransFusion and the data, as it is defined based on the set of features selected by applying TransFusion to the data.
As a result, the property of a valid $p$-value in \eqref{eq:naive} is no longer satisfied. Hence, the naive $p$-value is \emph{invalid}.

%\vspace{60pt}

%
%Let us define:
%
%\[
%\boldsymbol{\beta} =\left(\left(\boldsymbol{\beta}^{(0)}\right)^\top, \left(\boldsymbol{\beta}^{(1)}\right)^\top, \dots, \left(\boldsymbol{\beta}^{(K)}\right)^\top \right)^\top,
%\]
%
%\[
%\mathbf{B} = \bigl(\,n_S\,\mathbf{I}_p,\;\dots,\;n_S\,\mathbf{I}_p,\;(Kn_S + n_T)\,\mathbf{I}_p\bigr) \;\in\;\mathbb{R}^{p\times (K+1)p},
%\]
%where $\boldsymbol{\beta}^{(0)}$ is the target model parameter and $\boldsymbol{\beta}^{(k)}$s are source model parameters.\\

\section{Proposed SI Method}\label{sec3}
This section outlines our approach and presents the technical details for computing valid $p$-values in the TransFusion algorithm, thereby addressing the invalidity of naive $p$-values.

\begin{figure}[htbp]
    \centering
    \includegraphics[
        width=\textwidth,
        trim=0pt 2.2cm 0pt 2.5cm,
        clip
    ]{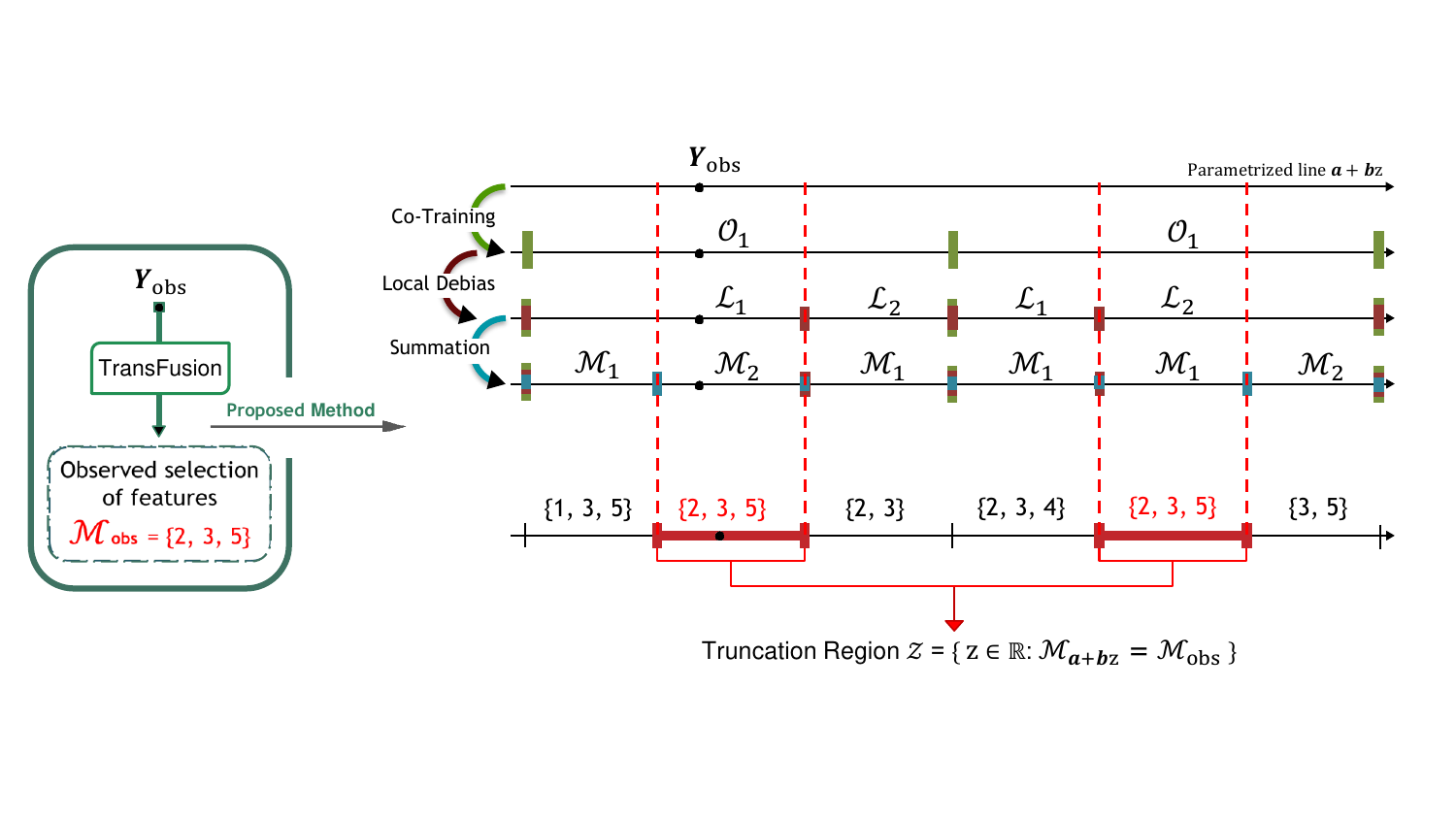}
    \caption{Illustration of the Selective Inference method tailored for TransFusion. First, the TransFusion algorithm is applied to the combined source and target data, yielding the target estimate $\hat{\boldsymbol{\beta}}_{\rm TransFusion}^{(0)}$ and identifying the selected feature set $\mathcal{M}_{\text{obs}}$. The data are then parameterized using a scalar parameter $z$ in the dimension of the test statistic to define the truncation region $\mathcal{Z}$, for which data yield the same feature selection results. To improve computational efficiency, a divide-and-conquer strategy is employed to effectively identify $\mathcal{Z}$. Finally, valid statistical inference is performed within the identified region $\mathcal{Z}$.}

    \label{fig:si_method}
\end{figure}

\subsection{The valid $p$-value in TransFusion}\label{subsec3.1}
To compute the valid $p$-value, we derive the sampling distribution of the test statistic in equation \eqref{eq:test_statistic} by leveraging the SI framework \cite{lee2016exact}, specifically by conditioning on the results of the TransFusion method:
\begin{equation*}
\mathbb{P} \left( \boldsymbol\eta_j^\top\bm{Y} \mid \mathcal{M}(\bm{Y}) = \mathcal{M}_{\text{obs}} \right),
\end{equation*}
where $\mathcal{M}(\bm{Y})$ denotes the set of features selected by applying TransFusion to any random vector $\bm Y$, and $\cM_{\rm obs} = \mathcal{M}(\bm{Y}_{\rm obs}) $ represents the observed set of selected features.
Next, based on the distribution of the test statistic derived above, we define the selective $p$-value as follows:
\begin{equation}\label{eq:p_selective}
p_{j}^{\text{selective}} = \mathbb{P}_{\text{H}_{0,j}}\Big(
\left|\boldsymbol{\eta}_{j}^{\top} \bm Y
\right|\geq 
\left|\boldsymbol{\eta}_{j}^{\top}
\bm{Y}_{\text{obs}}\right| \Big| \,\mathcal{E}
\Big),
\end{equation}
where the conditioning event $\mathcal{E}$ is defined as
\begin{equation}\label{eq:event}
\mathcal{E} = \left\{ \mathcal{M}(\bm{Y}) = \mathcal{M}_{\text{obs}}, \, \cQ(\bm Y) = \cQ_{\rm obs} \right \}. 
\end{equation}
The  $\cQ(\bm Y)$  is the sufficient statistic of the nuisance component, defined as:
\begin{equation}\label{eq:q}
\cQ(\bm Y) = \left( I_N - \bm b \boldsymbol{\eta}_j^\top \right) \bm Y
\end{equation}
where $\bm b = \Sigma {\bm \eta}_j \left( \boldsymbol{\eta}_j^\top \Sigma \boldsymbol{\eta}_j \right)^{-1}$, 
$
{\Sigma} =
\begin{pmatrix}
{\Sigma}^{(1)} & 0 & \cdots & 0 & 0\\
0 & {\Sigma}^{(2)} & \cdots & 0 & 0 \\
\vdots & \vdots & \ddots & \vdots & \vdots \\
0 & 0 & \cdots & {\Sigma}^{(K)} & 0 \\
0 & 0 & \cdots & 0 & {\Sigma}^{(0)}  
\end{pmatrix}
$,
and $\cQ_{\rm obs} = \cQ(\bm Y_{\rm obs})$.
\begin{remark}
In the SI framework, it is standard to condition on the sufficient statistic of the nuisance component to ensure tractable inference. 
In our setting, the  $\cQ(\bm Y)$ serves this role and aligns with the component $\boldsymbol{z}$ in the seminal paper of \cite{lee2016exact} (see Section 5, Equation (5.2)). This additional conditioning is commonly adopted across the SI literature, including all related works cited in this paper.
\end{remark}

\begin{lemma}\label{lemma_validity_of_p_selective}
The selective $p$-value proposed in $\eqref{eq:p_selective}$  satisfies the validity property:
\begin{equation*}
\mathbb{P}_{\text{H}_{0,j}} \left( p_j^{\text{selective}} \leq \alpha \right) = \alpha, \quad \forall \alpha \in [0, 1].
\end{equation*}
\begin{proof}
The proof is deferred to Appendix \red{\ref{proof:lemma_validity_of_p_selective}}.
\end{proof}
\end{lemma}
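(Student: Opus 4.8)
The plan is to run the standard probability-integral-transform argument of selective inference, now conditioned on the event $\mathcal{E}$ in \eqref{eq:event}. First I would exploit Gaussianity through the decomposition $\bm Y = \bm{b}\,(\bm\eta_j^\top\bm Y) + \mathcal{Q}(\bm Y)$ with $\bm{b} = \Sigma\bm\eta_j(\bm\eta_j^\top\Sigma\bm\eta_j)^{-1}$ and $\mathcal{Q}(\bm Y)$ as in \eqref{eq:q}. A direct computation gives $\mathrm{Cov}\!\left(\bm\eta_j^\top\bm Y,\ \mathcal{Q}(\bm Y)\right) = \bm{0}$, so under the Gaussian model the scalar $\bm\eta_j^\top\bm Y$ and the nuisance statistic $\mathcal{Q}(\bm Y)$ are independent. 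Since $\bm Y = \bm{b}\,(\bm\eta_j^\top\bm Y) + \mathcal{Q}(\bm Y)$ is an affine function of the pair $\left(\bm\eta_j^\top\bm Y,\ \mathcal{Q}(\bm Y)\right)$, fixing $\mathcal{Q}(\bm Y) = \mathcal{Q}_{\rm obs}$ turns the selection event $\{\mathcal{M}(\bm Y) = \mathcal{M}_{\rm obs}\}$ into a deterministic condition on $\bm\eta_j^\top\bm Y$ alone: there is a measurable set $\mathcal{Z} = \mathcal{Z}(\mathcal{M}_{\rm obs},\mathcal{Q}_{\rm obs}) \subseteq \mathbb{R}$, namely $\mathcal{Z} = \{z \in \mathbb{R} : \mathcal{M}(\bm{b} z + \mathcal{Q}_{\rm obs}) = \mathcal{M}_{\rm obs}\}$, such that $\mathcal{E} = \{\bm\eta_j^\top\bm Y \in \mathcal{Z}\}$ once $\mathcal{Q}_{\rm obs}$ is fixed. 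Its explicit form for TransFusion is not needed for validity; it is what the divide-and-conquer construction of Section~\ref{sec3} computes.

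Second, I would identify the conditional law. Marginally $\bm\eta_j^\top\bm Y \sim \NN\!\left(\bm\eta_j^\top\bm\mu,\ \bm\eta_j^\top\Sigma\bm\eta_j\right)$, and the zero block of $\bm\eta_j$ in \eqref{eq:eta} together with symmetry of $\left({X^{(0)}_{\mathcal{M}}}^\top X^{(0)}_{\mathcal{M}}\right)^{-1}$ gives $\bm\eta_j^\top\bm\mu = \left[\left({X^{(0)}_{\mathcal{M}}}^\top X^{(0)}_{\mathcal{M}}\right)^{-1}{X^{(0)}_{\mathcal{M}}}^\top \bm\mu^{(0)}\right]_j = \beta_j$, which equals $0$ under $\text{H}_{0,j}$. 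Hence, for every realization of the conditioning variables, the conditional distribution of $\bm\eta_j^\top\bm Y$ given $\mathcal{E}$ and $\text{H}_{0,j}$ is the $\NN\!\left(0,\ \bm\eta_j^\top\Sigma\bm\eta_j\right)$ law truncated to $\mathcal{Z}$, which is a continuous distribution. The selective $p$-value \eqref{eq:p_selective} is exactly the conditional tail probability $\mathbb{P}\!\left(\,\left|\bm\eta_j^\top\bm Y\right| \geq \left|\bm\eta_j^\top\bm Y_{\rm obs}\right| \,\middle|\, \mathcal{E}\right)$ computed under this law, i.e.\ the conditional survival function of the continuous random variable $\left|\bm\eta_j^\top\bm Y\right| \mid \mathcal{E}$ evaluated at itself; by the probability-integral transform it is therefore $\mathrm{Uniform}(0,1)$ conditional on $\mathcal{E}$. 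In particular $\mathbb{P}_{\text{H}_{0,j}}\!\left(p_j^{\rm selective} \leq \alpha \mid \mathcal{E}\right) = \alpha$ for all $\alpha \in [0,1]$.

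Finally I would remove the conditioning by the tower property: since the conditional false-positive probability equals $\alpha$ for every value of $(\mathcal{M}_{\rm obs},\mathcal{Q}_{\rm obs})$ in the support, marginalizing yields
\[
\mathbb{P}_{\text{H}_{0,j}}\!\left(p_j^{\rm selective} \leq \alpha\right) = \mathbb{E}_{\text{H}_{0,j}}\!\left[\mathbb{P}_{\text{H}_{0,j}}\!\left(p_j^{\rm selective} \leq \alpha \mid \mathcal{E}\right)\right] = \alpha,
\]
which is the claim. I expect the first step to be the main obstacle: one must argue rigorously that conditioning on $\mathcal{Q}(\bm Y)$ collapses the TransFusion selection event to a one-dimensional measurable set $\mathcal{Z}$ and that the resulting conditional law is genuinely a truncated Gaussian. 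This rests on the Gaussian independence of $\bm\eta_j^\top\bm Y$ and $\mathcal{Q}(\bm Y)$ together with a clean change of variables, and it is also the point at which one checks that nothing beyond measurability of $\bm Y \mapsto \mathcal{M}(\bm Y)$ (which holds because TransFusion is a composition of weighted-Lasso solution maps) is needed for validity; the algorithm-specific description of $\mathcal{Z}$ is then purely a computational matter.
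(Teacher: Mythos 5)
Your proposal is correct and follows essentially the same route as the paper's proof: establish that $\bm\eta_j^\top\bm Y$ conditional on $\mathcal{E}$ is a truncated Gaussian supported on $\mathcal{Z}$, deduce conditional uniformity of the selective $p$-value via the probability integral transform, and then marginalize over $(\mathcal{Q}_{\rm obs},\mathcal{M}_{\rm obs})$ by the tower property. The only difference is presentational — you make explicit the zero-covariance/independence argument between $\bm\eta_j^\top\bm Y$ and $\mathcal{Q}(\bm Y)$ that the paper leaves implicit when asserting the truncated normal law.
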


\noindent
Lemma \ref{lemma_validity_of_p_selective} establishes that the proposed selective $p$-value ensures the theoretical control of the FPR for any significance level $\alpha \in [0, 1]$. 
Once the event $\cE$ in equation \eqref{eq:event} is identified, the selective $p$-value can be computed. 
The detailed characterization of $\cE$ will be provided in the following section.

\subsection{Characterization of the Conditioning Event $\cE$}\label{subsec3.2}
We define the set of $\bm Y \in \mathbb{R}^{N}$ that satisfies the conditions in \eqref{eq:event} as:

\begin{equation}\label{eq:Y}
\mathcal{Y} = 
\Big \{ \bm Y \in \mathbb{R}^N \mid \mathcal{M}(\bm Y) = \mathcal{M}_{\rm obs}, \, \cQ(\bm Y) = \cQ_{\rm obs} 
\Big \}.
\end{equation}
As stated in the following lemma, the subspace $\mathcal{Y}$ lies along a line within $\mathbb{R}^{N}$.
\begin{lemma}\label{lemma_line}
The conditional data space $\mathcal{Y}$ in \eqref{eq:Y} is restricted to a line parametrized by a scalar parameter $z \in \mathbb{R}$ as follows:
\begin{equation}\label{eq:Yz}
    \mathcal{Y} = \{ \bm Y (z) = \bm a + \bm b z \mid z \in \mathcal{Z} \},
\end{equation}
where $\bm{a} = \cQ_\text{obs}$, $\bm b$ is defined 
in \eqref{eq:q}, and
\begin{equation}\label{eq:Z}
    \mathcal{Z} = \{ z \in \mathbb{R} \mid \mathcal{M}(z) = \mathcal{M}_\text{obs} \}.
\end{equation}
%\textit{Here,} $\mathcal{M}_z = \mathcal{M}_{\bm a+\bm bz}$ is equivalent to $\mathcal{M}(\mathbf{y)}$.
Here, with a slight abuse of notation, $\cM(z)$ is equivalent to $\cM \big (\bm Y(z) \big )$.
\end{lemma}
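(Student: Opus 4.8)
The plan is to show that conditioning on $\cQ(\bm Y) = \cQ_{\rm obs}$ forces $\bm Y$ onto an affine line, and then the remaining condition $\cM(\bm Y) = \cM_{\rm obs}$ carves out a subset of that line which we identify with $\cZ$.

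\textbf{Step 1: Decompose the ambient space using $\bm\eta_j$.} For any $\bm Y \in \mathbb{R}^N$ I would write the trivial identity $\bm Y = \bm b\,\boldsymbol\eta_j^\top \bm Y + (I_N - \bm b\,\boldsymbol\eta_j^\top)\bm Y$, where $\bm b = \Sigma\boldsymbol\eta_j(\boldsymbol\eta_j^\top\Sigma\boldsymbol\eta_j)^{-1}$ as in \eqref{eq:q}. The second summand is exactly $\cQ(\bm Y)$ by definition. Setting the scalar $z := \boldsymbol\eta_j^\top \bm Y$, this reads $\bm Y = \bm b z + \cQ(\bm Y)$.

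\textbf{Step 2: Impose the nuisance condition.} On the set $\mathcal{Y}$ we require $\cQ(\bm Y) = \cQ_{\rm obs}$, so the decomposition collapses to $\bm Y = \cQ_{\rm obs} + \bm b z = \bm a + \bm b z$ with $\bm a = \cQ_{\rm obs}$, exactly the parametrization in \eqref{eq:Yz}. One should note this map $z \mapsto \bm a + \bm b z$ is injective (since $\bm b \neq \bm 0$, because $\boldsymbol\eta_j \neq \bm 0$ and $\Sigma \succ 0$), and check consistency: $\boldsymbol\eta_j^\top \bm a = \boldsymbol\eta_j^\top \cQ_{\rm obs} = \boldsymbol\eta_j^\top(I_N - \bm b\boldsymbol\eta_j^\top)\bm Y_{\rm obs} = 0$ since $\boldsymbol\eta_j^\top \bm b = 1$, and $\boldsymbol\eta_j^\top \bm b = 1$, so indeed $\boldsymbol\eta_j^\top \bm Y(z) = z$, confirming $z$ is the value of the test statistic along the line.

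\textbf{Step 3: Impose the selection condition.} It remains to intersect with $\{\bm Y : \cM(\bm Y) = \cM_{\rm obs}\}$. Substituting $\bm Y = \bm Y(z)$ and using the notational convention $\cM(z) := \cM(\bm Y(z))$, this is precisely the set $\{z \in \mathbb{R} : \cM(z) = \cM_{\rm obs}\} = \cZ$ from \eqref{eq:Z}. Hence $\mathcal{Y} = \{\bm a + \bm b z : z \in \cZ\}$, which is the claim. The argument is essentially a bookkeeping exercise following the standard SI reduction of \cite{lee2016exact}; the only point requiring a word of care is the injectivity/consistency check in Step 2, and—looking ahead—the genuinely substantive work is not in this lemma but in the explicit computation of $\cZ$, i.e. characterizing for which scalars $z$ the TransFusion pipeline returns $\cM_{\rm obs}$, which is taken up in the next section.
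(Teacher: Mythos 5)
Your proof is correct and follows essentially the same route as the paper's: rearrange $\cQ(\bm Y)=\cQ_{\rm obs}$ into $\bm Y = \cQ_{\rm obs} + \bm b\,\boldsymbol\eta_j^\top\bm Y$, set $z=\boldsymbol\eta_j^\top\bm Y$, and then intersect with the selection event to obtain $\cZ$. The only addition is your explicit consistency check that $\boldsymbol\eta_j^\top\bm b=1$ and $\boldsymbol\eta_j^\top\bm a=0$ (so that $\boldsymbol\eta_j^\top\bm Y(z)=z$), which the paper leaves implicit but which is a worthwhile sanity check.
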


\begin{proof}
The proof is deferred to Appendix \red{\ref{proof:lemma_line}}.
\end{proof}

\begin{remark}
Lemma \ref{lemma_line} demonstrates that the data space, initially in $\mathbb{R}^{N}$, can be reduced to the one-dimensional subspace $\mathcal{Z}$ defined in \eqref{eq:Z}. This reduction, which confines the data to a line, was implicitly leveraged in \cite{lee2016exact} and later explicitly addressed in \S6 of \cite{liu2018more}.
\end{remark}

%\noindent
\textbf{Reformulating selective $p$-value computation in terms of $\cZ$. }  
We define the random variable $\cZ$ along with its observed value $\cZ_{\rm obs}$ as follows:
\begin{align*}
Z = \boldsymbol\eta_j^\top \bm Y \in \mathbb{R}\quad \text{and} \quad Z_{\text{obs}} = \boldsymbol\eta_j^\top \bm{Y}_{\text{obs}} \in \mathbb{R}.
\end{align*}
Using this notation, the selective $p$-value from equation \eqref{eq:p_selective} can be reformulated as:

\begin{equation}\label{eq:p_selective_reformulated}
	p_j^{\text{selective}} = \mathbb{P}_{\mathrm{H}_{0,j}} \Big( |Z| \geq |Z_{\text{obs}}| ~\big|~ Z \in \mathcal{Z} \Big).
\end{equation}
Since $\bm Y$ is normally distributed, $Z \mid Z \in \mathcal{Z}$ follows a truncated normal distribution.
Identifying the truncation region $\cZ$ is the key remaining step, as it enables straightforward computation of the selective $p$-value in equation \eq{eq:p_selective_reformulated}.

\subsection{Identification of Truncation Region $\mathcal{Z}$}\label{subsec3.3}

At first glance, identifying $\cZ$ appears \emph{intractable}, as it requires applying the TransFusion algorithm to $\bm Y (z) = \bm a + \bm b z$ for \emph{infinitely} many values of $z$ in order to obtain the set of selected features  $\cM (z)$, and check whether it matches the observed set of selected features $\cM_{\rm obs}$.
However, this seemingly intractable problem can be reduced to finite and efficiently computable sets of constraints that exactly characterize $\cZ$.
Inspired by \cite{duy2022more} and \cite{le2024cad}, we adopt a ``divide-and-conquer'' strategy and propose a method (illustrated in Fig. \ref{fig:si_method}) to efficiently identify $\cZ$, described as follows:

\begin{itemize}
  \item We decompose the problem into multiple sub-problems by introducing additional conditioning based on the execution of the TransFusion algorithm.

%  Divide the problem into multiple sub-problems by additionally conditioning on (1) the active sets and signs of nonzero coefficients of the intermediate variables $\hat{\boldsymbol{\theta}}$ from $\eqref{eq:theta}$, $\hat{\boldsymbol{\delta}}$ from $\eqref{eq:delta}$ and (2) the signs of
%  the coefficients of the selected features in $\hat{\boldsymbol{\beta}}$.  
%  
  \item We show that each subproblem can be efficiently solved, as it is characterized by a finite number of linear inequalities.
  
  \item We combine multiple sub-problems to obtain $\mathcal{Z}$.
\end{itemize}

\vspace{3pt}
\textbf{Divide-and-conquer strategy.} 
Let us define the active sets (i.e., sets of features with non-zero coefficients) obtained from the optimization problems \eqref{eq:theta} and \eqref{eq:delta} when the TransFusion is applied to $\bm Y (z)$ as follows:
\begin{align} \label{eq:cO_cL}
	\cO(z) =  \left\{ j^\prime: \hat{\theta}_{j^\prime} (z) \neq 0 \right \},\quad
	\quad 
	\cL(z) = \left\{ j^{\prime\prime}: \hat{\delta}_{j^{\prime\prime}} (z) \neq 0 \right \},
\end{align}
where $j^\prime \in [(K + 1)p]$ and $j^{\prime\prime} \in [p]$.
We define $\cS_{\cO(z)}$ and $\cS_{\cL(z)}$ as the sets of coefficient signs corresponding to the selected features in $\cO(z)$ and $\cL(z)$, respectively.
The entire one-dimensional space $\mathbb{R}$ can be decomposed as:
\begin{align*}
\mathbb{R} = 
\bigcup_{u \in [U]} \bigcup_{v \in [V_u]} \bigcup_{t \in [T_{u, v}]}
\underbrace{\left\{ z \in \mathbb{R} ~ \middle| ~\begin{array}{l}
\cO(z) = \cO_u, \, \cS_{\cO(z)} =  \cS_{\cO_u},\\
\cL(z) = \cL_v, \, \cS_{\cL(z)} =  \cS_{\cL_v},\\
\cM(z) = \cM_t, \, \cS_{\cM(z)} = \cS_{\cM_t}, 
\end{array}\right\}}_{\text{
a sub-problem of additional conditioning}},
\end{align*}
where $U$ denotes the total number of possible combinations of active sets $\cO(z)$ and their corresponding signs along the line, $V$ represents the number of all possible combinations of active sets $\cL(z)$ and their signs, given that $\bm \theta(z)$ has the active set $\cO_u$; and $T_{u, v}$ denotes the number of all possible combinations of active sets $\cM(z)$ and their signs, given $\cO_u$ and $\cL_v$.
For $u \in [U], \, v \in [V_u], \,t \in [T_{u, v}]$, our goal is to search a set
\begin{equation}\label{eq:A}
    \mathcal{A} = \Big \{(u,v,t) : \mathcal{M}_t = \mathcal{M}_{\text{obs}} \Big \}.
\end{equation}
After obtaining $\cA$, the truncation region $\mathcal{Z}$ in $\eqref{eq:Z}$ can be obtained as follows:
\begin{align} \label{eq:cZ_union}
    \mathcal{Z} 
    & = 
    \big \{ z \in \mathbb{R} \mid \mathcal{M}(z) = \mathcal{M}_\text{obs} \big \} \nonumber \\
    & = 
    \bigcup_{{(u,v,t) \in \mathcal{A}}}
    \left\{ z \in \mathbb{R}  ~ \middle | ~
    \begin{array}{l}
    \cO(z) = \cO_u, \, \cS_{\cO(z)} =  \cS_{\cO_u},\\
\cL(z) = \cL_v, \, \cS_{\cL(z)} =  \cS_{\cL_v},\\
\cM(z) = \cM_t, \, \cS_{\cM(z)} = \cS_{\cM_t},
    \end{array}\right\}
\end{align}

\textbf{Solving of each sub-problem. } For any $u \in [U]$, $v \in [V_u]$ and $t \in [T_{u, v}]$, we define the subset of the one-dimensional projected data space along the line for the subproblem as follows:
\begin{align} \label{eq:cZ_uvt}
\mathcal{Z}_{u,v,t} = 
\left\{ z \in \mathbb{R}  ~ \middle | ~
    \begin{array}{l}
    \cO(z) = \cO_u, \, \cS_{\cO(z)} =  \cS_{\cO_u},\\
    \cL(z) = \cL_v, \, \cS_{\cL(z)} =  \cS_{\cL_v},\\
    \cM(z) = \cM_t, \, \cS_{\cM(z)} = \cS_{\cM_t},
    \end{array}\right\},
\end{align}

The sub-problem region $\mathcal{Z}_{u,v,t}$ can be re-written as:
\[
\mathcal{Z}_{u,v,t} = \mathcal{Z}_u \,\cap \mathcal{Z}_v \,\cap \mathcal{Z}_t \,, 
\]
where
\begin{align}
    \mathcal{Z}_u &= \{ z \in \mathbb{R} \mid \cO(z) = \cO_u, \, \cS_{\cO(z)} =  \cS_{\cO_u} \}, \label{eq:cZu} \\ 
    \mathcal{Z}_v &= \{ z \in \mathbb{R} \mid \cL(z) = \cL_v, \, \cS_{\cL(z)} =  \cS_{\cL_v} \}, \label{eq:cZv} \\
    \mathcal{Z}_t &= \{ z \in \mathbb{R} \mid \cM(z) = \cM_t, \, \cS_{\cM(z)} = \cS_{\cM_t} \} \label{eq:cZt}.
\end{align}

\begin{lemma}\label{lem:Zu}
The set $\mathcal{Z}_u$ can be characterized by a set of linear inequalities w.r.t. $z$:

\begin{equation}
\mathcal{Z}_u = \{ z \in \mathbb{R} \mid \boldsymbol\psi z \leq \boldsymbol\gamma \} ,
\end{equation}
where the vectors $\boldsymbol{\psi}$ and $\boldsymbol{\gamma}$ are defined in Appendix \red{\ref{proof:lemZu}}.

\begin{proof}
The proof is deferred to Appendix \red{\ref{proof:lemZu}}.
\end{proof}

\end{lemma}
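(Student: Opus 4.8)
The plan is to characterize $\cZ_u$ through the Karush--Kuhn--Tucker (KKT) optimality conditions of the weighted LASSO problem \eqref{eq:theta}, and then to observe that, once the response is restricted to the line $\bm Y(z) = \bm a + \bm b z$, these conditions reduce to linear inequalities in the scalar $z$ --- this is the weighted, block-structured analogue of the polyhedral lemma of \cite{lee2016exact}.

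First I would write the stationarity condition for \eqref{eq:theta}: $\hat{\boldsymbol\theta}$ is optimal if and only if there is a subgradient vector $\bm s \in \mathbb{R}^{(K+1)p}$ of the weighted $\ell_1$ penalty with
\[
\tfrac{1}{N} X^\top \big( \bm Y - X \hat{\boldsymbol\theta} \big) = \Lambda \bm s,
\]
where $\Lambda$ is the diagonal matrix whose entries on the $k$-th block all equal $\lambda_0 a_k$, and $s_{j'} = \mathrm{sign}(\hat{\theta}_{j'})$ whenever $\hat{\theta}_{j'} \neq 0$ while $s_{j'} \in [-1,1]$ otherwise. Fixing the active set $\cO_u$ and the sign pattern $\cS_{\cO_u}$, and assuming as usual that $X_{\cO_u}$ has full column rank, the stationarity equations on $\cO_u$ can be solved in closed form,
\[
\hat{\boldsymbol\theta}_{\cO_u}(z) = \big( X_{\cO_u}^\top X_{\cO_u} \big)^{-1} \big( X_{\cO_u}^\top \bm Y(z) - N \Lambda_{\cO_u} \cS_{\cO_u} \big), \qquad \hat{\theta}_{j'}(z) = 0 \ \ \text{for } j' \notin \cO_u,
\]
and substituting $\bm Y(z) = \bm a + \bm b z$ makes $\hat{\boldsymbol\theta}_{\cO_u}(z)$ an explicit affine function of $z$.

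Next I would note that $z \in \cZ_u$ exactly when this candidate (i) carries the prescribed signs on $\cO_u$, i.e.\ $\mathrm{diag}(\cS_{\cO_u}) \hat{\boldsymbol\theta}_{\cO_u}(z) > \bm 0$, and (ii) satisfies the dual feasibility bound off $\cO_u$, i.e.\ $\big| \tfrac{1}{N} X_{j'}^\top ( \bm Y(z) - X_{\cO_u} \hat{\boldsymbol\theta}_{\cO_u}(z) ) \big| \leq \lambda_0 a_{k(j')}$ for every $j' \notin \cO_u$, where $k(j')$ denotes the block containing $j'$. Since both $\hat{\boldsymbol\theta}_{\cO_u}(z)$ and the residual $\bm Y(z) - X_{\cO_u} \hat{\boldsymbol\theta}_{\cO_u}(z)$ are affine in $z$, (i) is a family of linear inequalities in $z$ and each absolute-value constraint in (ii) splits into two linear inequalities in $z$. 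Stacking all of them yields $\boldsymbol\psi z \leq \boldsymbol\gamma$, with $\boldsymbol\psi$ and $\boldsymbol\gamma$ read off from the affine coefficients above (their explicit form is recorded in the Appendix). Conversely, any $z$ obeying all these inequalities produces, via the closed-form formula, a primal--dual pair satisfying the KKT system, hence a minimizer of \eqref{eq:theta} with active set $\cO_u$ and signs $\cS_{\cO_u}$; this gives the reverse inclusion and makes the characterization tight.

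The work is mostly bookkeeping: one must carry the block structure of $X$ and of $\Lambda$ through every expression so that each inactive-coordinate constraint uses the correct weight $\lambda_0 a_{k(j')}$. The only genuine subtlety --- the same one present in every polyhedral SI argument --- is ensuring the map $z \mapsto \hat{\boldsymbol\theta}(z)$ is well defined (uniqueness of the LASSO solution / full rank of $X_{\cO_u}$) and handling the boundary cases where a sign inequality holds with equality; I would dispose of these exactly as in \cite{lee2016exact}.
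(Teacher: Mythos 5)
Your proposal is correct and follows essentially the same route as the paper: write the KKT conditions of the weighted LASSO, partition by the active set $\cO_u$, solve the stationarity equations in closed form for $\hat{\boldsymbol\theta}_{\cO_u}(z)$ and the inactive subgradients, and convert the sign constraints and the dual feasibility bounds (with the block weights $\lambda_0 a_k$ carried through) into linear inequalities in $z$ after substituting $\bm Y(z)=\bm a+\bm b z$. Your explicit remarks on the reverse inclusion and on uniqueness/full rank are slightly more careful than the paper's write-up but do not change the argument.
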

\begin{lemma}\label{lem:Zv}
The set $\mathcal{Z}_v$ can be characterized by a set of linear inequalities w.r.t. $z$:

\begin{equation}
\mathcal{Z}_v = \{ z \in \mathbb{R} \mid \boldsymbol\nu z \leq \boldsymbol\kappa \},
\end{equation}
where the vectors $\boldsymbol{\nu}$ and $\boldsymbol{\kappa}$ are defined in Appendix \red{\ref{proof:lemZv}}.
\end{lemma}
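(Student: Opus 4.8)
The plan is to adapt the polyhedral characterization of the Lasso active set of \cite{lee2016exact} to the Local Debias problem \eqref{eq:delta}, exactly as was done for \eqref{eq:theta} in Lemma~\ref{lem:Zu}, but one layer downstream. The key preliminary observation is that, on the region where the co-training active set and signs are frozen at $\cO_u,\cS_{\cO_u}$ (i.e.\ on $\mathcal{Z}_u$), the solution $\hat{\bm\theta}(z)$ has a closed form that is \emph{affine} in $z$; propagating this through the recovery of $\hat{\bm\beta}(z)$ from $\hat{\bm\theta}(z)$ and through \eqref{eq:w}, the initial estimator $\hat{\bm w}(z)$ is affine in $z$ as well. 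Since $\bm Y^{(0)}(z)$ is the last $n_T$ coordinates of $\bm Y(z)=\bm a+\bm b z$, the effective response of \eqref{eq:delta}, $\bm r(z):=\bm Y^{(0)}(z)-X^{(0)}\hat{\bm w}(z)$, can be written as $\bm r(z)=\bm a^{\delta}+\bm b^{\delta} z$ for explicit $\bm a^{\delta},\bm b^{\delta}\in\mathbb{R}^{n_T}$ depending only on $\bm a,\bm b,\cO_u,\cS_{\cO_u}$ and the fixed design $X^{(0)}$. (This is why $\boldsymbol\nu$ and $\boldsymbol\kappa$ in the appendix inherit a dependence on the frozen co-training active set; strictly speaking $\mathcal{Z}_v$ becomes polyhedral once intersected with $\mathcal{Z}_u$ as in \eqref{eq:cZ_uvt}.)

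Next I would write the KKT/stationarity conditions of the unweighted Lasso \eqref{eq:delta} with response $\bm r(z)$ and design $X^{(0)}$. On the event $\{\cL(z)=\cL_v,\ \cS_{\cL(z)}=\cS_{\cL_v}\}$, the stationarity equation restricted to the active coordinates $\cL_v$ can be solved in closed form,
\[
\hat{\bm\delta}_{\cL_v}(z)=\big(X^{(0)\top}_{\cL_v}X^{(0)}_{\cL_v}\big)^{-1}\Big(X^{(0)\top}_{\cL_v}\,\bm r(z)-n_T\tilde{\lambda}\,\bm s_{\cL_v}\Big),
\]
together with $\hat{\delta}_j(z)=0$ for $j\notin\cL_v$; here $\bm s_{\cL_v}$ collects the signs $\cS_{\cL_v}$, and the expression is affine in $z$ because $\bm r(z)$ is. The constraints that pin down exactly this active set and sign pattern are then: (i) sign feasibility on $\cL_v$, $-(\bm s_{\cL_v})_j\,\big(\hat{\bm\delta}_{\cL_v}(z)\big)_j<0$ for each $j\in\cL_v$; and (ii) dual feasibility on the inactive coordinates, $\big|\tfrac{1}{n_T}X^{(0)\top}_{j}\big(\bm r(z)-X^{(0)}_{\cL_v}\hat{\bm\delta}_{\cL_v}(z)\big)\big|\leq\tilde{\lambda}$ for each $j\notin\cL_v$. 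Substituting the affine form of $\hat{\bm\delta}_{\cL_v}(z)$, each inequality in (i) and each of the two inequalities coming from the absolute value in (ii) reduces to a single linear inequality in the scalar $z$.

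Finally I would collect all of these linear inequalities into one system $\boldsymbol\nu z\leq\boldsymbol\kappa$, with $\boldsymbol\nu$ and $\boldsymbol\kappa$ the concatenations of the per-constraint slopes and intercepts (explicit formulas relegated to the appendix). The conceptual step is therefore identical to Lemma~\ref{lem:Zu}; I expect the main obstacle to be bookkeeping rather than mathematics: faithfully tracking the affine-in-$z$ dependence along the chain $\hat{\bm\theta}(z)\to\hat{\bm\beta}(z)\to\hat{\bm w}(z)\to\bm r(z)$, and checking that the reconstructed $\hat{\bm\delta}(z)$ (closed form on $\cL_v$, zero elsewhere), combined with the strict inequalities in (i) and the inequalities in (ii), characterizes \emph{exactly} the set of $z$ for which the Local Debias step returns active set $\cL_v$ with signs $\cS_{\cL_v}$ — which rests on uniqueness of the Lasso minimizer and invertibility of $X^{(0)\top}_{\cL_v}X^{(0)}_{\cL_v}$ under a general-position assumption on $X^{(0)}$.
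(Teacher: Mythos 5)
Your proposal is correct and follows essentially the same route as the paper's proof: freeze $\cO_u,\cS_{\cO_u}$ so that $\hat{\bm w}_u(z)$ and hence the effective response of \eqref{eq:delta} are affine in $z$ (the paper writes this as $\boldsymbol{\phi}_u\bm Y(z)+\boldsymbol{\iota}_u$), then partition the Lasso KKT conditions over $\cL_v$ and its complement, solve for $\hat{\bm\delta}_{\cL_v}(z)$ and the inactive subgradient in closed form, and convert the sign-consistency and dual-feasibility conditions into the stacked linear system $\boldsymbol\nu z\leq\boldsymbol\kappa$. Your remark that $\mathcal{Z}_v$ is only polyhedral conditionally on the frozen co-training event matches the paper's implicit treatment (its KKT derivation is stated ``after obtaining $\cO_u,\cS_{\cO_u}$'').
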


\begin{proof}
The proof is deferred to Appendix \red{\ref{proof:lemZv}}.
\end{proof}

\begin{lemma}\label{lem:Zt}
The set $\mathcal{Z}_t$ can be characterized by a set of linear inequalities w.r.t. $z$:

\begin{equation}
\mathcal{Z}_t = \{ z \in \mathbb{R} \mid \boldsymbol\omega z \leq \boldsymbol \rho \},
\end{equation}
where the vectors $\boldsymbol{\omega}$ and $\boldsymbol{\rho}$ are defined in Appendix \red{\ref{proof:lemZt}}.
\end{lemma}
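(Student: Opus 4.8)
The plan is to show that the set $\mathcal{Z}_t$, which fixes the active set $\cM(z)$ and its signs in the final Local Debias problem \eqref{eq:delta}, is a polytope on the line. The starting observation is that the optimization in \eqref{eq:delta} is itself a LASSO problem with response $\bm Y^{(0)} - X^{(0)}\hat{\bm w}$ and design $X^{(0)}$. Crucially, when we restrict to $z \in \mathcal{Z}_u \cap \mathcal{Z}_v$, the active set $\cO(z) = \cO_u$ with fixed signs is frozen, which means $\hat{\bm\theta}(z)$ is an explicit affine function of $z$ (this is the content of the proof of Lemma \ref{lem:Zu}); consequently $\hat{\bm\beta}(z)$ and therefore $\hat{\bm w}(z)$ from \eqref{eq:w} are affine in $z$, say $\hat{\bm w}(z) = \bm \phi_0 + \bm\phi_1 z$ for vectors $\bm\phi_0, \bm\phi_1 \in \mathbb{R}^p$ that I would derive explicitly. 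Substituting $\bm Y(z) = \bm a + \bm b z$, the effective response in \eqref{eq:delta} becomes $\bm r(z) = \bm a^{(0)} + \bm b^{(0)} z - X^{(0)}(\bm\phi_0 + \bm\phi_1 z)$, which is again affine in $z$; write it as $\bm r(z) = \bm r_0 + \bm r_1 z$.

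Next I would invoke the standard Karush–Kuhn–Tucker / polyhedral-lemma characterization of the LASSO solution path, exactly as in Lee et al. \cite{lee2016exact}, applied to the problem $\min_{\bm\delta} \frac{1}{2n_T}\|\bm r(z) - X^{(0)}\bm\delta\|_2^2 + \tilde\lambda\|\bm\delta\|_1$. Conditioning on the event $\{\cL(z) = \cL_v, \, \cS_{\cL(z)} = \cS_{\cL_v}\}$ and $\{\cM(z) = \cM_t, \, \cS_{\cM(z)} = \cS_{\cM_t}\}$ is equivalent to a system of linear inequalities in the response vector $\bm r(z)$: the active block of KKT stationarity pins down $\hat{\bm\delta}_{\cL_v}(z)$ as an affine function of $\bm r(z)$ (hence of $z$), the sign constraints $\mathrm{sign}(\hat{\bm\delta}_{\cL_v}(z)) = \cS_{\cL_v}$ give one family of inequalities, and the inactive block $\big| \tfrac{1}{n_T}X^{(0)\top}_{-\cL_v}(\bm r(z) - X^{(0)}_{\cL_v}\hat{\bm\delta}_{\cL_v}(z))\big| \le \tilde\lambda$ gives another. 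Since $\bm r(z)$ is affine in $z$, every one of these constraints is linear in $z$. The additional event $\{\cM(z) = \cM_t, \cS_{\cM(z)} = \cS_{\cM_t}\}$ concerns the sign pattern of $\hat{\bm\beta}^{(0)}_{\rm TransFusion}(z) = \hat{\bm w}(z) + \hat{\bm\delta}(z)$, which is again an affine function of $z$ on this region, so requiring a particular support and sign pattern is one more batch of linear inequalities in $z$. Stacking all of these produces $\boldsymbol\omega z \le \boldsymbol\rho$, and I would define $\boldsymbol\omega, \boldsymbol\rho$ to be precisely the coefficient vector and right-hand side of this stacked system, to be recorded in Appendix \ref{proof:lemZt}.

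The main obstacle is bookkeeping rather than anything conceptually deep: I must be careful that the affine representation $\hat{\bm w}(z) = \bm\phi_0 + \bm\phi_1 z$ is only valid after intersecting with $\mathcal{Z}_u$ (so that $\cO_u$ and its signs are genuinely the active set of \eqref{eq:theta}), and similarly that the affine form of $\hat{\bm\delta}_{\cL_v}(z)$ is valid only on $\mathcal{Z}_v$; within $\mathcal{Z}_t = \mathcal{Z}_u \cap \mathcal{Z}_v \cap \{ \cM(z) = \cM_t, \ldots\}$ all three hold simultaneously, so the substitution chain is legitimate. A second subtlety is the possible non-invertibility of $X^{(0)\top}_{\cL_v} X^{(0)}_{\cL_v}$ in the high-dimensional regime; as in the LASSO-SI literature one handles this by assuming (generically true) that the active design block has full column rank on the region of interest, or by working with the minimum-norm solution. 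Once these caveats are in place, the derivation of $\boldsymbol\omega$ and $\boldsymbol\rho$ is a routine composition of affine maps, and the lemma follows.
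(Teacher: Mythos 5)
Your proposal is correct and follows essentially the same route as the paper: on $\mathcal{Z}_u \cap \mathcal{Z}_v$ the maps $\hat{\bm w}_u(z)$ and $\hat{\bm\delta}_v(z)$ are affine in $z$, so $\hat{\bm\beta}^{(0)}_{\rm TransFusion}(z) = \boldsymbol\xi_{uv}\bm Y(z) + \boldsymbol\zeta_{uv}$ is affine, and the support/sign conditions defining $\cM_t$ and $\cS_{\cM_t}$ reduce to stacked linear inequalities $\boldsymbol\omega z \le \boldsymbol\rho$. The only cosmetic difference is that you fold the KKT conditions of the local-debias Lasso (the $\cL_v$ active/inactive blocks) into this lemma, whereas the paper isolates those in Lemma~\ref{lem:Zv} and reserves Lemma~\ref{lem:Zt} for the sign pattern of the sum $\hat{\bm w}_u + \hat{\bm\delta}_v$; your caveat that the affine representation is valid only after intersecting with $\mathcal{Z}_u \cap \mathcal{Z}_v$ matches the paper's implicit use of $\mathcal{Z}_{u,v,t} = \mathcal{Z}_u \cap \mathcal{Z}_v \cap \mathcal{Z}_t$.
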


\begin{proof}
The proof is deferred to Appendix \red{\ref{proof:lemZt}}.
\end{proof}

Lemmas~\ref{lem:Zu}, ~\ref{lem:Zv} and ~\ref{lem:Zt} guarantee that selected features and coefficient signs for $\hat{\boldsymbol{\theta}}(z)$, $\hat{\boldsymbol{\delta}}(z)$, and $\hat{\boldsymbol{\beta}}^{(0)}_{\rm TransFusion} (z)$ remain unchanged when applying the TransFusion algorithm on any $z \in \mathcal{Z}_{u,v,t}$. 
They also indicate that $\mathcal{Z}_u$, $\mathcal{Z}_v$ and $\mathcal{Z}_t$  can be \textit{analytically obtained} by solving the systems of 
linear inequalities. Once $\mathcal{Z}_u$, $\mathcal{Z}_v$  
and $\mathcal{Z}_t$ are computed, the sub-problem region 
$\mathcal{Z}_{u,v,t}$ in $\eq{eq:cZ_uvt}$ is obtained by 
$\mathcal{Z}_{u,v,t} = \mathcal{Z}_u \cap \mathcal{Z}_v \cap \mathcal{Z}_t$.\\

\textbf{Combining multiple sub-problems. } To identify $\mathcal{A}$ in $\eqref{eq:A}$, the TransFusion algorithm is repeatedly applied to a sequence of datasets $\bm Y(z)$, within sufficiently wide range of $z \in [z_{\min}, z_{\max}]$\footnote{We set $z_{\min} = -20\sigma$ and $z_{\max} = 20\sigma$, $\sigma$ is the standard deviation of the distribution of the test statistic, because the probability mass outside this range is negligibly small.}. Since $\mathcal{Z}_u$, $\mathcal{Z}_v$ and $\mathcal{Z}_t$ are intervals, $\mathcal{Z}_{u,v,t}$ is an interval. We denote $\mathcal{Z}_u = [l_u, r_u]$, ${\cal Z}_{u,v}= \mathcal{Z}_u \cap \mathcal{Z}_v = [l_{u,v},r_{u,v}]$ and ${\cal Z}_{u,v, t} = [l_{u,v,t},r_{u,v,t}]$. The divide-and-conquer procedure can be summarized in Algorithm $\ref{alg:d-a-c}$. After obtaining ${\mathcal A}$ by Algorithm $\ref{alg:d-a-c}$. We can compute ${\mathcal Z}$ in $\eqref{eq:cZ_uvt}$, which is subsequently used to obtain the proposed selective $p$-value in $\eqref{eq:p_selective_reformulated}$. The entire steps of the proposed TransFusion SI method are summarized in Algorithm $\ref{alg:SI}$.

\begin{algorithm}[!t]
\caption{\texttt{divide\_and\_conquer}}\label{alg:d-a-c}
\begin{algorithmic}[1]
{\small
\Require $ X, \bm a, \bm b,  z_{\min}, z_{\max}$

\vspace{2pt}
\State \text{Initialization:} $u = 1$, $v = 1$, $t = 1$, $z = z_{\min}$, $\mathcal{A} = \emptyset$

\vspace{2pt}
\While{$z < z_{\max}$}

%    \vspace{2pt}
%    \State $\bm Y (z) \leftarrow \bm a + \bm b z$
    
    \vspace{2pt}
    \State $\cO_u$ and $ \cS_{\cO_u}$  $\leftarrow$ Solving Eq. $\eqref{eq:theta}$  with $X$ and $ \bm Y (z) = \bm a + \bm b z$
    
    \vspace{2pt}   
    \State $[l_u,\,r_u] = \mathcal{Z}_u \leftarrow$ Lemma $\ref{lem:Zu}$
    
    \vspace{2pt}
    \State $r_{u,v} = l_u$
    
    \vspace{2pt}
    \While{$r_{u,v} < r_u$}
%        \State $\bm Y_{z} \leftarrow \bm a + \bm b z$
        % \State $\bm Y^{(0)}_{z} \leftarrowQ\bm Y_{z}$
        
        \vspace{2pt}
        \State  $\cL_v$ and $ \cS_{\cL_v} \leftarrow$ Solving Eq. \eqref{eq:delta} with $X$ and $ \bm Y (z)$
        
%        \vspace{2pt}
%        \State $\mathcal{Z}_v \leftarrow$ Lemma $\ref{lem:Zv}$
        
        \vspace{2pt}
        \State $[l_{u,v}, r_{u,v}] = \mathcal{Z}_{u,v} \leftarrow \mathcal{Z}_{u} \cap \mathcal{Z}_{v}$ where $\mathcal{Z}_v \leftarrow$ Lemma $\ref{lem:Zv}$
        
        \vspace{2pt}
        \State $r_{u,v,t} = l_{u,v}$
        
        \vspace{2pt}
        \While{$r_{u,v,t} < r_{u,v}$}
        	
	         \vspace{2pt}
            	\State $\mathcal{M}_t$ and $\cS_{\cM_t} \leftarrow$ TransFusion on $X$ and $\bm Y (z)$
	
%		\vspace{2pt}
%       	 	\State $\mathcal{Z}_t \leftarrow$ Lemma $\ref{lem:Zt}$
		
		\vspace{2pt}
	        \State $[l_{u,v,t}, r_{u,v,t}] \leftarrow \mathcal{Z}_{u,v}\cap \mathcal{Z}_t$ where $\mathcal{Z}_t \leftarrow$ Lemma $\ref{lem:Zt}$
	        
	        \vspace{4pt}
	        \State $\mathcal{A} \leftarrow \mathcal{A} \cup \{(u,v,t)\}$ if $\mathcal{M}_t = \mathcal{M}_{\text{obs}}$
	        
	       \vspace{2pt}
	        \State $t \leftarrow t + 1$, $z = r_{u,v,t}$
	        \vspace{2pt}
        \EndWhile
            \State $t \leftarrow 1$, $v \leftarrow v + 1$, $z = r_{u,v,t}$

    \EndWhile
    \vspace{2pt}
    \State $v \leftarrow 1$, $u \leftarrow u + 1$, $z = r_{u,v,t}$
    \vspace{2pt}
\EndWhile
\vspace{2pt}
\Ensure $\mathcal{A}$
}
\end{algorithmic}
\end{algorithm}

\begin{algorithm}[!t]
\caption{\texttt{PTL-SI}}\label{alg:SI}
\begin{algorithmic}[1]
{\small
\Require $\big (X^{(0)}, \, \bm Y^{(0)}_{\text{obs}}\big), \, \Big\{\big(X^{(k)}, \, \bm Y^{(k)}_{\text{obs}}\big)\Big\}_{k=1}^K, \,z_{\min}, \, z_{\max}$

\vspace{3pt}
\State Construct $X$ and $\bm Y_\text{obs}$ $ \leftarrow$ Eq. $\eqref{eq:theta}$

%\State $Q \leftarrow$ Eq. $\eqref{eq:Q}$

\vspace{3pt}
\State $\mathcal{M}_{\text{obs}} \leftarrow$ TransFusion on $\left( X,\, \bm Y_{\text{obs}}\right)$

\vspace{3pt}
\For{$j \in \mathcal{M}_{\text{obs}}$}

    \vspace{3pt}
    \State Compute $\boldsymbol\eta_j \leftarrow$ Eq. $\eqref{eq:eta}$, $\bm a$ and $\bm b \leftarrow$ Eq. $\eqref{eq:Yz}$
    
    \vspace{4pt}
    \State $\mathcal{A} \leftarrow \texttt{divide\_and\_conquer}$ $( X, \bm a, \, \bm b, \,  z_{\min}, z_{\max}$)
    
    \vspace{4pt}
    \State Identify $\mathcal{Z} \leftarrow$ Eq. $\eqref{eq:cZ_union}$ with $\mathcal{A}$
    
    \vspace{4pt}
    \State Compute $p_j^{\text{selective}} \leftarrow$ Eq. $\eqref{eq:p_selective_reformulated}$ with $\mathcal{Z}$
    \vspace{3pt}
\EndFor
\vspace{3pt}
\Ensure $\{p_j^{\text{selective}} \}_{j \in \mathcal{M}_{\text{obs}}}$
}
\end{algorithmic}
\end{algorithm}

\section{Extension to Oracle Trans-Lasso \cite{li2022transfer}} \label{sec:extension}

Given $K$ informative auxiliary samples (i.e., source tasks)
$ 
\left \{ 
	(X^{(k)}, \bm Y^{(k)})
\right \}_{k = 1}^K 
$, the procedure of Oracle Trans-Lasso \cite{li2022transfer} is detailed as follows:

\vspace{8pt}
\textbf{\textit{Step 1.}} Compute
\begin{align*}
	\hat{\bm w}^{\cI}
	= 
	\underset{\boldsymbol{w} \in \mathbb{R}^{{p}}}{\operatorname{argmin}}
	\left\{ 
	\frac{1}{2n_{\mathcal{I}}} 
	\sum \limits_{k \in \cI} 
	\| \bm{Y}^{(k)} -  X^{(k)} \boldsymbol{w} \|_2^2 + \lambda_{\bm w} \| \boldsymbol{w} \|_1 
	\right\},
\end{align*}
where $\cI = \{1, 2, \dots, K \}$, $n_\mathcal{I} = |\mathcal{I}|n_S$, and $\lambda_{\bm w}$ is the hyper-parameter. 
The above optimization problem can be rewritten as follows:
\begin{align}\label{eq:w_OTL}
\hat{\boldsymbol{w}}^{\cI} = 
\underset{\boldsymbol{w} \in \mathbb{R}^{{p}}}{\operatorname{argmin}} \left\{ \frac{1}{2n_{\mathcal{I}}} \| \bm{Y}^{\mathcal{I}} -  X^{\mathcal{I}} \boldsymbol{w} \|_2^2 + \lambda_w \| \boldsymbol{w} \|_1 \right\}  
\end{align}
where 
\[ \bm{Y}^\mathcal{I} = \left(\begin{array}{c}
    \bm{Y}^{(1)} \\
    \bm{Y}^{(2)} \\
    \vdots \\
    \bm{Y}^{(K)} 
    \end{array}\right), \quad X^\mathcal{I} = \left(\begin{array}{c}
    X^{(1)} \\
    X^{(2)} \\
    \vdots \\
    X^{(K)} 
    \end{array}\right).
\]

\vspace{8pt}
\textbf{\textit{Step 2.}} Compute
\begin{align}
    & \hat{\boldsymbol{\delta}}^{\cI} = \underset{\boldsymbol{\delta} \in \mathbb{R}^p}{\operatorname{argmin}} \left\{ \frac{1}{2n_T} \left\| \bm Y^{(0)} - {X}^{(0)} \hat{\boldsymbol{w}}^{\cI} - {X}^{(0)} \boldsymbol{\delta} \right\|_2^2 + \lambda_{\bm \delta} \|\boldsymbol{\delta}\|_1\right\}, \label{eq:delta_OTL} \\ 
    & \hat{\boldsymbol{\beta}}^{(0)}_{\rm OTL} = \hat{\boldsymbol{w}}^{\cI} + \hat{\boldsymbol{\delta}}^{\cI}  \label{eq:beta_OTL},
\end{align}
where $\lambda_{\bm \delta}$ is the hyper-parameter.

\vspace{5pt}
We extend our proposed PTL-SI method to the case of Oracle Trans-Lasso.
Let us define the active sets  obtained from the optimization problems \eqref{eq:w_OTL},  \eqref{eq:delta_OTL}, and  \eqref{eq:beta_OTL} when the Oracle Trans-Lasso is applied to $\bm Y (z)$ as follows:
\begin{align} \label{eq:cO_cL_otl}
	\cO^{\rm otl}(z) &=  \left\{ j^\prime \in [p]: \hat{w}^{\cI}_{j^\prime} (z) \neq 0 \right\}, \\
	\cL^{\rm otl}(z) &= \left\{ j^{\prime\prime} \in [p] : \hat{\delta}^{\cI}_{j^{\prime\prime}} (z) \neq 0 \right \}, \\
	\cM^{\rm otl}(z) &= \left\{ j \in [p]: 
	\Big ( 
	\hat{\boldsymbol{\beta}}^{(0)}_{\rm OTL}
	\Big )_j
	 \neq 0 \right \}.
\end{align}
The truncation region in the case of Oracle Trans-Lasso is defined as follows:
\begin{align} \label{eq:cZ_union_otl}
    \mathcal{Z}^{\rm otl} 
    & = 
    \big \{ z \in \mathbb{R} \mid \mathcal{M}^{\rm otl}(z) = \mathcal{M}^{\rm otl}_\text{obs} \big \} \nonumber \\
    & = 
    \bigcup_{{(u,v,t) \in \cA^{\rm otl}}}
    \left\{ z \in \mathbb{R}  ~ \middle | ~
    \begin{array}{l}
    \cO^{\rm otl}(z) = \cO^{\rm otl}_u, \, \cS_{\cO^{\rm otl}(z)}=  \cS_{\cO^{\rm otl}_u},\\
\cL^{\rm otl}(z) = \cL^{\rm otl}_v, \, \cS_{\cL^{\rm otl}(z)}=  \cS_{\cL^{\rm otl}_v},\\
\cM^{\rm otl}(z) = \cM^{\rm otl}_t, \, \cS_{\cM^{\rm otl}(z)}= \cS_{\cM^{\rm otl}_t},
    \end{array}\right\},
\end{align}
where $ \mathcal{A}^{\rm otl} = \Big \{(u,v,t) : \mathcal{M}^{\rm otl}_t = \mathcal{M}^{\rm otl}_{\text{obs}} \Big \}$ that is similarly defined as Eq. \eq{eq:A}.

\begin{lemma}\label{lem:Z-OTL}
The sub-problem region $\cZ^{\rm otl}_{u,v,t}$ can be re-written as:
\begin{align} 
\cZ^{\rm otl}_{u,v,t}
     & = 
    \left\{ z \in \mathbb{R}  ~ \middle | ~
    \begin{array}{l}
    \cO^{\rm otl}(z) = \cO^{\rm otl}_u, \, \cS_{\cO^{\rm otl}(z)}=  \cS_{\cO^{\rm otl}_u},\\
\cL^{\rm otl}(z) = \cL^{\rm otl}_v, \, \cS_{\cL^{\rm otl}(z)}=  \cS_{\cL^{\rm otl}_v},\\
\cM^{\rm otl}(z) = \cM^{\rm otl}_t, \, \cS_{\cM^{\rm otl}(z)}= \cS_{\cM^{\rm otl}_t},
    \end{array}\right\}, \label{eq:sub_problem_region_otl} \\
    & = \mathcal{Z}^{\rm otl}_u \cap \mathcal{Z}^{\rm otl}_v \cap \mathcal{Z}^{\rm otl}_t,
\end{align}
where  the set $\mathcal{Z}_u^{\rm otl}$, $\mathcal{Z}_v^{\rm otl}$ and $\mathcal{Z}_t^{\rm otl}$ can be characterized by the sets of linear inequalities with respect to $z$:

\begin{equation*}
\mathcal{Z}^{\rm otl}_u = \{ z \in \mathbb{R} \mid \boldsymbol\psi^{\rm otl} z \leq \boldsymbol\gamma^{\rm otl} \} ,
\end{equation*}
\begin{equation*}
\mathcal{Z}^{\rm otl}_v = \{ z \in \mathbb{R} \mid \boldsymbol\nu^{\rm otl} z \leq \boldsymbol\kappa^{\rm otl} \} ,
\end{equation*}
\begin{equation*}
\mathcal{Z}^{\rm otl}_t = \{ z \in \mathbb{R} \mid \boldsymbol\omega^{\rm otl} z \leq \boldsymbol\rho^{\rm otl} \}.
\end{equation*}
The vectors $\boldsymbol{\psi}^{\rm otl}$,  $\boldsymbol{\gamma}^{\rm otl}, \boldsymbol{\nu}^{\rm otl}, \boldsymbol{\kappa}^{\rm otl},  \boldsymbol{\omega}^{\rm otl},  \boldsymbol{\rho}^{\rm otl}$ are defined in Appendix~\ref{proof:Z-OTL}.
\end{lemma}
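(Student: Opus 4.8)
The plan is to reuse, almost verbatim, the three-stage template established for TransFusion in Lemmas~\ref{lem:Zu}--\ref{lem:Zt}. Along the line $\bm Y(z)=\bm a+\bm b z$ every quantity produced by Oracle Trans-Lasso is piecewise affine in the scalar $z$, and each of the three sub-problems \eqref{eq:w_OTL}, \eqref{eq:delta_OTL}, \eqref{eq:beta_OTL} is an ordinary LASSO whose ``active set and signs equal a prescribed pair'' event is, by the polyhedral lemma of \cite{lee2016exact}, a polyhedron in its (affine-in-$z$) response vector, hence an interval in $z$ cut out by finitely many linear inequalities. Intersecting the three resulting intervals gives $\cZ^{\rm otl}_{u,v,t}=\cZ^{\rm otl}_u\cap\cZ^{\rm otl}_v\cap\cZ^{\rm otl}_t$, and since a finite intersection of intervals is an interval, this also yields the promised interval structure.

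\emph{Step 1 ($\cZ^{\rm otl}_u$).} First I would record that $\bm\eta_j$ in \eqref{eq:eta} vanishes on the $Kn_S$ source coordinates, so, $\Sigma$ being block diagonal, $\bm b=\Sigma\bm\eta_j(\bm\eta_j^\top\Sigma\bm\eta_j)^{-1}$ vanishes there too; consequently the source sub-vector $\bm Y^{\cI}(z)$ — the only data entering \eqref{eq:w_OTL} — does not depend on $z$, and neither do $\hat{\bm w}^{\cI}(z)$, $\cO^{\rm otl}(z)$, $\cS_{\cO^{\rm otl}(z)}$. Writing the KKT conditions of \eqref{eq:w_OTL} (closed form of the active block, zeros off it, sign agreement on it, and dual feasibility on the inactive block), exactly as in the proof of Lemma~\ref{lem:Zu}, expresses $\{\cO^{\rm otl}(z)=\cO^{\rm otl}_u,\,\cS_{\cO^{\rm otl}(z)}=\cS_{\cO^{\rm otl}_u}\}$ as a system $\bm\Gamma_1\bm Y(z)\le\bm\delta_1$; substituting $\bm Y(z)=\bm a+\bm b z$ gives $\bm\psi^{\rm otl}z\le\bm\gamma^{\rm otl}$ with $\bm\psi^{\rm otl}=\bm\Gamma_1\bm b$ and $\bm\gamma^{\rm otl}=\bm\delta_1-\bm\Gamma_1\bm a$ (here $\bm\psi^{\rm otl}=\bm 0$, so in fact $\cZ^{\rm otl}_u=\mathbb{R}$, a simplification relative to the TransFusion case).

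\emph{Steps 2 and 3 ($\cZ^{\rm otl}_v$, $\cZ^{\rm otl}_t$).} On $\cZ^{\rm otl}_u$ the vector $\hat{\bm w}^{\cI}$ is frozen, so the response of \eqref{eq:delta_OTL}, $\bm Y^{(0)}(z)-X^{(0)}\hat{\bm w}^{\cI}$, is affine in $z$ (its $z$-dependence enters only through the affine target block $\bm Y^{(0)}(z)$); applying the polyhedral lemma to this LASSO as in Lemma~\ref{lem:Zv} gives $\bm\nu^{\rm otl}z\le\bm\kappa^{\rm otl}$ for $\{\cL^{\rm otl}(z)=\cL^{\rm otl}_v,\,\cS_{\cL^{\rm otl}(z)}=\cS_{\cL^{\rm otl}_v}\}$, and on the active block $\hat{\bm\delta}^{\cI}(z)$ has the usual closed form, again affine in $z$. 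Then on $\cZ^{\rm otl}_u\cap\cZ^{\rm otl}_v$ the estimator $\hat{\bm\beta}^{(0)}_{\rm OTL}(z)=\hat{\bm w}^{\cI}(z)+\hat{\bm\delta}^{\cI}(z)$ is affine in $z$, say $(\hat{\bm\beta}^{(0)}_{\rm OTL}(z))_j=p_j+q_j z$, so imposing $\cM^{\rm otl}(z)=\cM^{\rm otl}_t$ with signs $\cS_{\cM^{\rm otl}_t}$ amounts to the strict linear inequalities $\mathrm{sign}(p_j+q_jz)=(\cS_{\cM^{\rm otl}_t})_j$ for $j\in\cM^{\rm otl}_t$ together with the linear equalities $p_j+q_jz=0$ for $j\notin\cM^{\rm otl}_t$, which we bundle (each equality as two opposite inequalities) into $\bm\omega^{\rm otl}z\le\bm\rho^{\rm otl}$, exactly as in Lemma~\ref{lem:Zt}. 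The explicit forms of all six vectors are deferred to Appendix~\ref{proof:Z-OTL}.

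\textbf{Main obstacle.} The delicate part is Step~3. Because $\hat{\bm\beta}^{(0)}_{\rm OTL}$ is the sum of two separately-solved LASSO estimates, a coordinate can be annihilated by cancellation even when it is active in both $\hat{\bm w}^{\cI}$ and $\hat{\bm\delta}^{\cI}$, so the support condition $\cM^{\rm otl}(z)=\cM^{\rm otl}_t$ must be encoded through the equality constraints on $p_j+q_jz$ rather than being inferred from the two summands' supports; one must check that the resulting system is \emph{exactly} equivalent to \eqref{eq:sub_problem_region_otl}. The other point requiring care is that each LASSO closed form is valid only on the cell where its own active set and signs are frozen, so it is precisely the nested conditioning $\cZ^{\rm otl}_u\supseteq\cZ^{\rm otl}_u\cap\cZ^{\rm otl}_v\supseteq\cZ^{\rm otl}_{u,v,t}$ that collapses the a priori piecewise-affine dependence on $z$ to genuinely affine dependence within each cell — this must be made explicit so that the substitution $\bm Y(z)=\bm a+\bm b z$ never hides a nonlinearity.
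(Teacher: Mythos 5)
Your proposal is correct and follows essentially the same route as the paper's proof: write the KKT conditions for each of the three LASSO sub-problems \eqref{eq:w_OTL}, \eqref{eq:delta_OTL}, \eqref{eq:beta_OTL}, solve in closed form on the frozen active sets, substitute $\bm Y(z)=\bm a+\bm b z$ to obtain linear inequalities in $z$, and encode the support of $\hat{\bm\beta}^{(0)}_{\rm OTL}$ via sign constraints on $\cM^{\rm otl}_t$ together with paired inequalities enforcing exact zeros on its complement (so cancellation between the two summands is handled exactly as in Lemma~\ref{lem:Zt}). Your extra observation that $\bm b$ vanishes on the source coordinates --- so $\bm Y^{\cI}(z)$ is constant along the line and $\cZ^{\rm otl}_u=\mathbb{R}$ --- is correct and is a genuine simplification that the paper's general formulas contain only implicitly (there $\boldsymbol\psi^{\rm otl}_0$ and $\boldsymbol\psi^{\rm otl}_1$ both involve $P\bm b=\bm 0$), but it does not change the structure of the argument.
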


\begin{proof}
The proof is deferred to Appendix~\ref{proof:Z-OTL}
\end{proof}

\section{Experiments}\label{sec5}
We demonstrate the performance of PTL-SI. Here, we present the main results.

\subsection{Experimental Setup}\label{subsec5.1}

\textbf{Methods for comparison.} We compared the performance of the following methods:
\begin{itemize}
    \item $\texttt{PTL-SI}$: proposed method,
    \item $\texttt{PTL-SI-oc}$: proposed method, which considers only one sub-problem, i.e., over-conditioning, described in \S \ref{subsec3.3} (extension of Lee et al. (2016) to our setting),
    \item $\texttt{DS}$: data splitting,
    \item $\texttt{Bonferroni}$: the most popular multiple testing,
    \item $\texttt{Naive}$: traditional statistical inference,
    \item $\texttt{No inference}$: TransFusion without inference.
\end{itemize}
We note that if a method fails to control the FPR at $\alpha$, it is \textit{invalid}, and its TPR becomes irrelevant. A method with a high TPR implies a low FNR.

\subsection{Numerical Experiments}\label{subsec5.2}
\textbf{Synthetic data generation.} We generated $\bm{Y}^{(0)}$ with $\bm{Y}^{(0)}_i = \left(X_i^{(0)}\right)^\top \boldsymbol{\beta}^{(0)} + \boldsymbol\varepsilon^{(0)}_i$, $X_i^{(0)} \sim \mathbb{N}(\mathbf{0}, I_p)$, and $\boldsymbol\varepsilon^{(0)}_i \sim \mathbb{N}(0, 1)$ $\forall i \in [n_T]$. Similarly, $\bm{Y}^{(k)}$ is generated with $\bm{Y}^{(k)}_j = \left(X_j^{(k)}\right)^\top \boldsymbol{\beta}^{(k)}  + \boldsymbol\varepsilon^{(k)}_j$, in which $X_j^{(k)} \sim \mathbb{N}(\mathbf{0}, I_p)$, and $\boldsymbol\varepsilon^{(k)}_j \sim \mathbb{N}(0, 1)$ $\forall j \in [n_S]$, $\forall k \in [K]$. We set $p = 300$, $n_S = 100$ and $\alpha = 0.05$. \\

For TransFusion, we partition $K$ sources into the informative auxiliary set $\cI$ and the non-informative auxiliary set $\cI^c$.  For the FPR experiments, all elements of $\boldsymbol{\beta}^{(0)}$ were set to 0. For the TPR experiments, the first $5$ elements of $\boldsymbol{\beta}^{(0)}$ were set to $\Gamma$. In all experiments, we set $\boldsymbol{\beta}^{(k)}_1 = -\Gamma$, $\boldsymbol{\beta}^{(k)}_i = \Gamma$, $\forall{i} \in \{2, 3, 4, 5\}$, and $\boldsymbol{\beta}^{(k)}_j = 0$, $\forall{j} \in \{6, 7, \dots, p\}$, $\forall{k} \in [K]$, then
$\boldsymbol{\beta}^{(k)}_i=\boldsymbol{\beta}^{(k)}_i + \Phi^{(k)}_i \sim \mathbb{N}(0, \,\Upsilon \times 0.5)$, $\forall i \in \{1, 2, \dots, 25\}$ if $k \in \mathcal{I}$, and otherwise, $\boldsymbol{\beta}^{(k)}_i=\boldsymbol{\beta}^{(k)}_i + \Phi^{(k)}_i \sim \mathbb{N}(0, \,\Upsilon \times 0.5 \times 10)$, $\forall i \in \{1, 2, \dots, 50\}$. Here, $\Upsilon$ is a tuning parameter that controls the noise level added to $\boldsymbol{\beta}^{(k)}$. Inference is conducted only on the target data, so $\boldsymbol{\beta}^{(k)}$ does not affect the results. We set $n_T = 50$, $|\cI| = 3$, $|\cI^c| = 2$, $\Gamma = 0.5$, $\Upsilon = 0.01$, for each experiment, we vary one of these variables while keeping the others fixed. Each experiment is repeated 1000 times.
For Oracle TransLasso, we set $|\cI| = 3$, $n_T \in \{40, 50, 60, 70\}$  and generate $\bm \beta^{(k)}$s, $k \in \{0, 1, \dots, K\}$ similarly to TransFusion with $\Gamma = 0.5$, $\Upsilon = 0.01$.\\

\textbf{The results of FPRs and TPRs.} The results of FPR and TPR in the case of TransFusion in multiple settings are shown in Figs.~\ref{fig:tpr_fpr_nt},~\ref{fig:tpr_fpr_beta},~\ref{fig:tpr_fpr_noise},~\ref{fig:tpr_fpr_informative} and~\ref{fig:tpr_fpr_uninformative}.
In the plots on the left, the $\texttt{PTL-SI}$, $\texttt{PTL-SI-oc}$, $\texttt{Bonferroni}$, and $\texttt{DS}$ controlled the FPR, whereas the $\texttt{Naive}$ and $\texttt{No Inference}$ \textit{could not}. Because the $\texttt{Naive}$ and $\texttt{No Inference}$ failed to control the FPR, we no longer considered their TPRs. In the plots on the right, the $\texttt{PTL-SI}$ has the highest TPR compared to other methods in all cases, i.e., the $\texttt{PTL-SI}$ has the lowest FNR. Fig.~\ref{fig:tpr_fpr_nt_otl} presents the corresponding results for Oracle Trans-Lasso, our proposed extension.\\

\begin{figure}[!t]
    \centering
    \begin{subfigure}{0.48\textwidth}
        \centering
        \includegraphics[width=\linewidth]{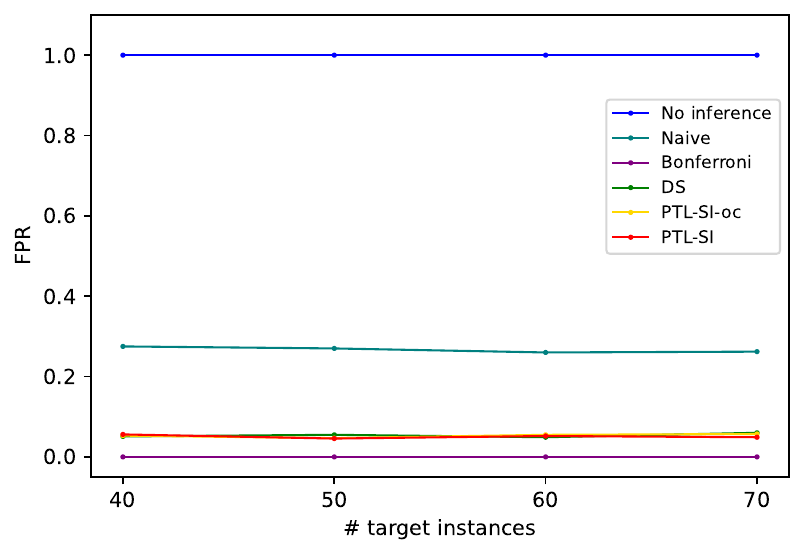}
        \caption{FPR}
        \label{fig:fpr_nt}
    \end{subfigure}\hfill
    \begin{subfigure}{0.48\textwidth}
        \centering
        \includegraphics[width=\linewidth]{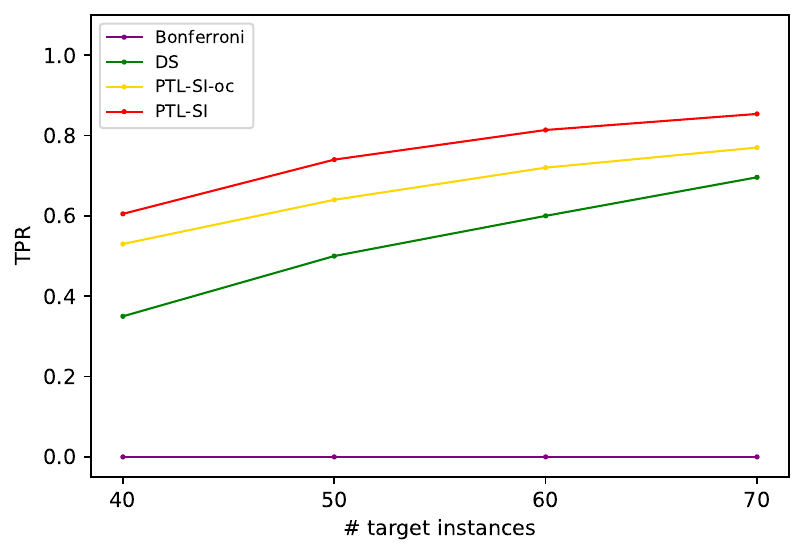}
        \caption{TPR}
        \label{fig:tpr_nt}
    \end{subfigure}
    \caption{FPR and TPR w.r.t. the number of target instances $n_T$}
    \label{fig:tpr_fpr_nt}
\end{figure}

\begin{figure}[!t]
    \centering
    \begin{subfigure}{0.48\textwidth}
        \centering
        \includegraphics[width=\linewidth]{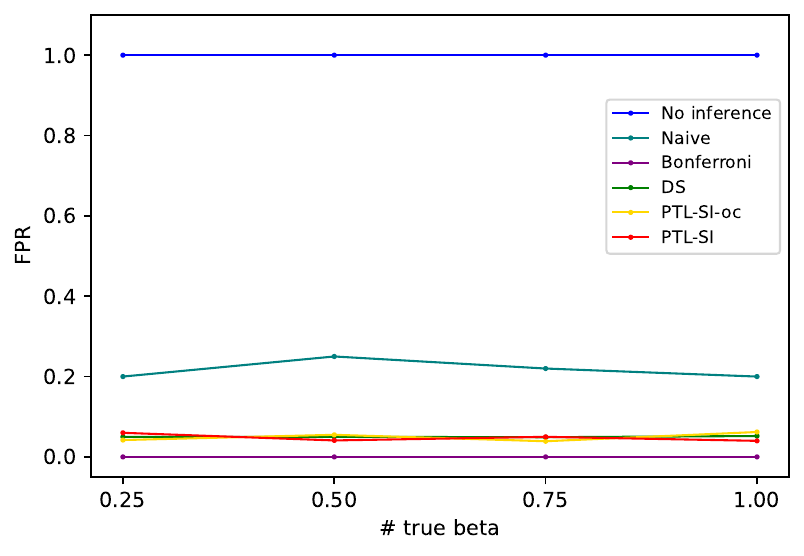}
        \caption{FPR}
        \label{fig:fpr_beta}
    \end{subfigure}\hfill
    \begin{subfigure}{0.48\textwidth}
        \centering
        \includegraphics[width=\linewidth]{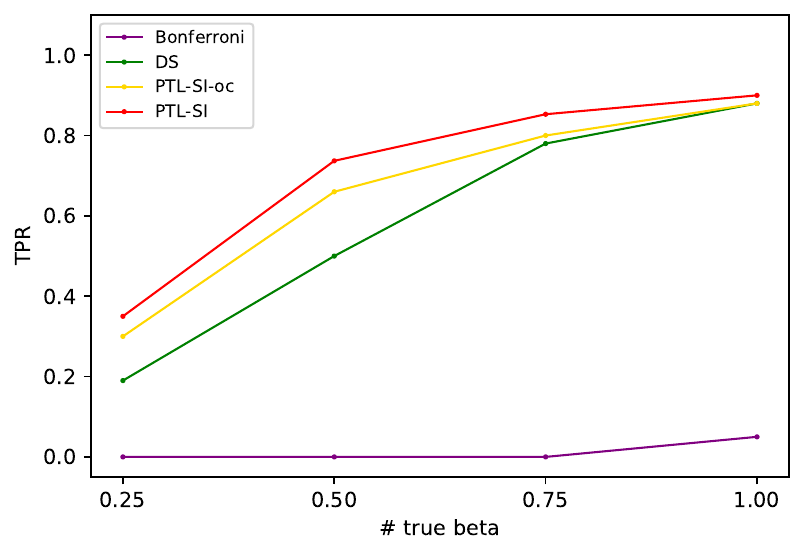}
        \caption{TPR}
        \label{fig:tpr_beta}
    \end{subfigure}
    \caption{FPR and TPR w.r.t. the true beta $\Gamma$}
    \label{fig:tpr_fpr_beta}
\end{figure}

\begin{figure}[!t]
    \centering
    \begin{subfigure}{0.48\textwidth}
        \centering
        \includegraphics[width=\linewidth]{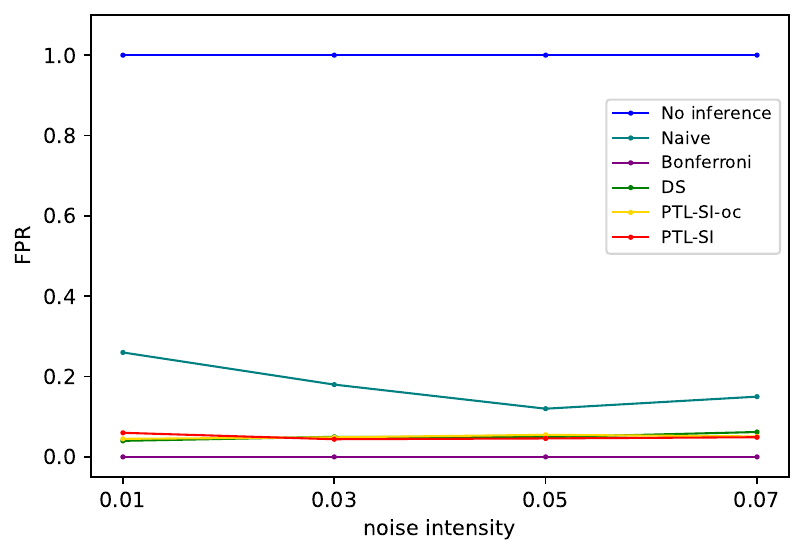}
        \caption{FPR}
        \label{fig:fpr_noise}
    \end{subfigure}\hfill
    \begin{subfigure}{0.48\textwidth}
        \centering
        \includegraphics[width=\linewidth]{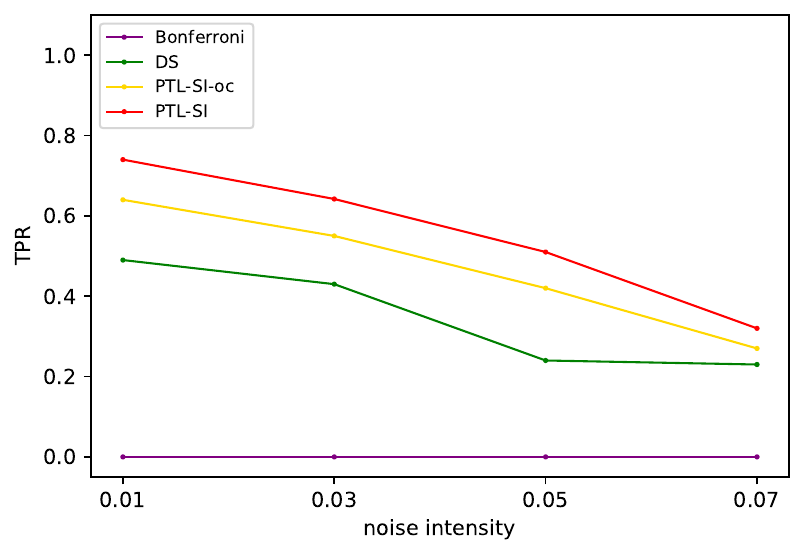}
        \caption{TPR}
        \label{fig:tpr_noise}
    \end{subfigure}
    \caption{FPR and TPR w.r.t. the noise intensity $\Upsilon$}
    \label{fig:tpr_fpr_noise}
\end{figure}

\begin{figure}[!t]
    \centering
    \begin{subfigure}{0.48\textwidth}
        \centering
        \includegraphics[width=\linewidth]{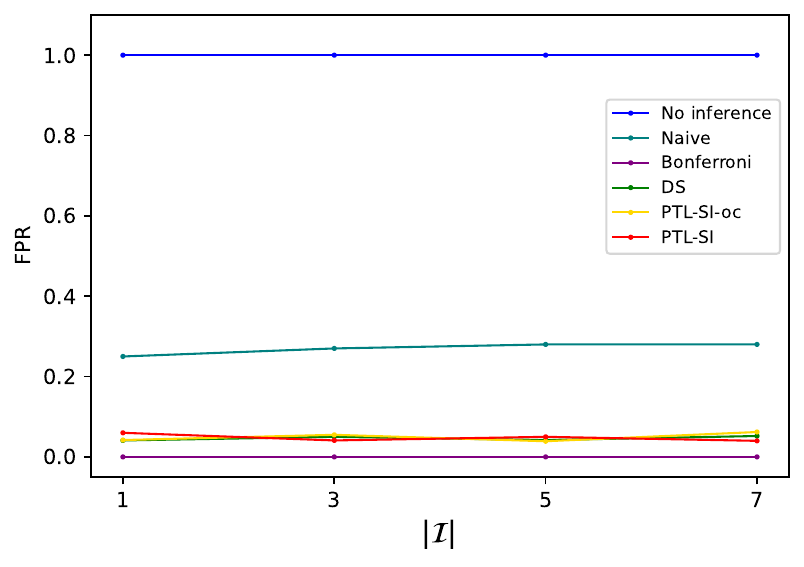}
        \caption{FPR}
        \label{fig:fpr_informative}
    \end{subfigure}\hfill
    \begin{subfigure}{0.48\textwidth}
        \centering
        \includegraphics[width=\linewidth]{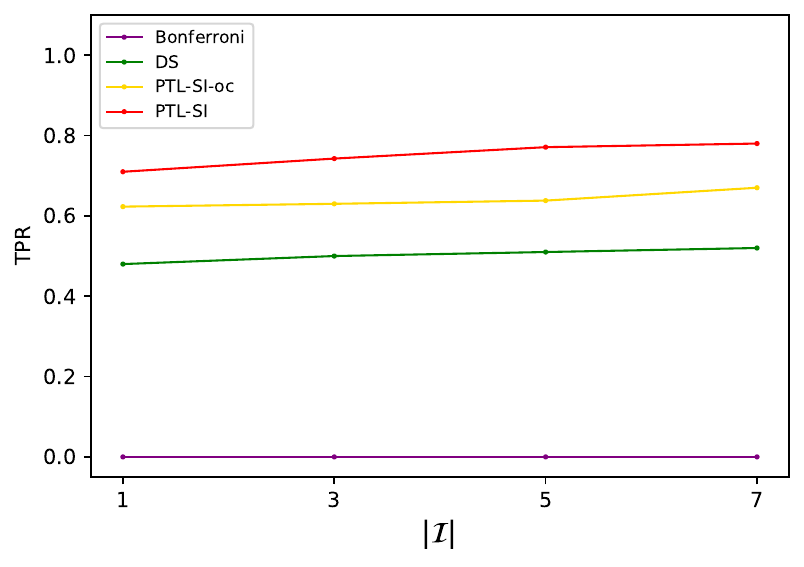}
        \caption{TPR}
        \label{fig:tpr_informative}
    \end{subfigure}
    \caption{FPR and TPR w.r.t. the number of informative auxiliary $|\cI|$}
    \label{fig:tpr_fpr_informative}
\end{figure}

\begin{figure}[!t]
    \centering
    \begin{subfigure}{0.48\textwidth}
        \centering
        \includegraphics[width=\linewidth]{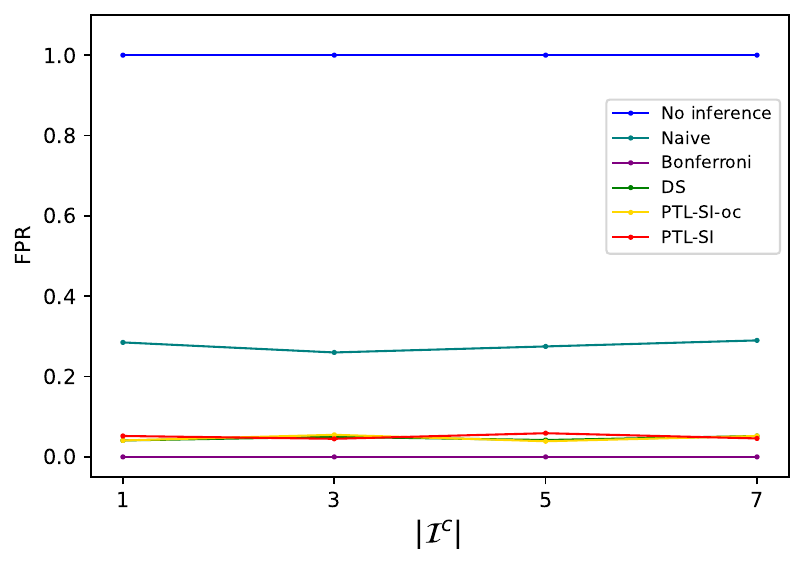}
        \caption{FPR}
        \label{fig:fpr_uninformative}
    \end{subfigure}\hfill
    \begin{subfigure}{0.48\textwidth}
        \centering
        \includegraphics[width=\linewidth]{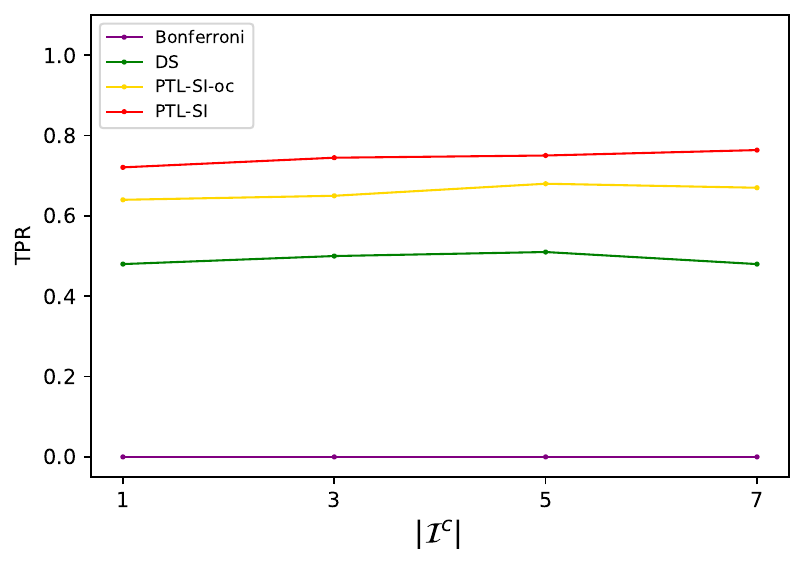}
        \caption{TPR}
        \label{fig:tpr_uninformative}
    \end{subfigure}
    \caption{FPR and TPR w.r.t. the number of uninformative auxiliary $|\cI^c|$}
    \label{fig:tpr_fpr_uninformative}
\end{figure}

\begin{figure}[!t]
    \centering
    \begin{subfigure}{0.48\textwidth}
        \centering
        \includegraphics[width=\linewidth]{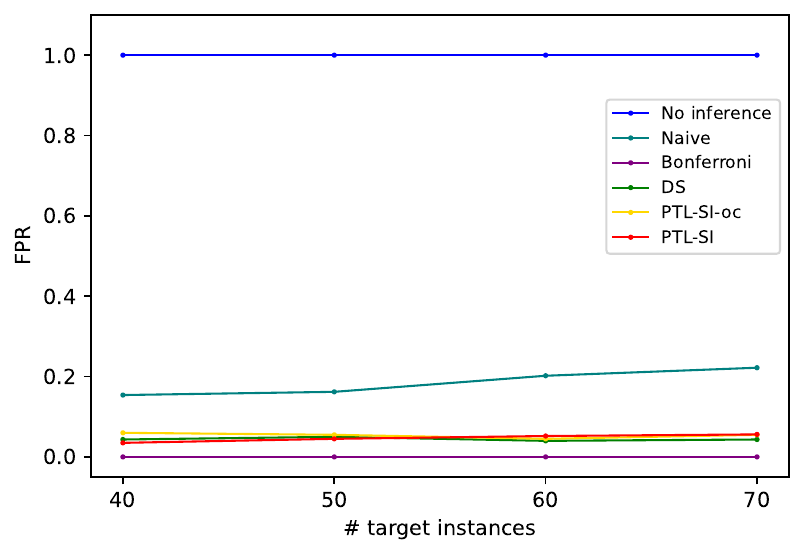}
        \caption{FPR}
        \label{fig:fpr_nt_otl}
    \end{subfigure}\hfill
    \begin{subfigure}{0.48\textwidth}
        \centering
        \includegraphics[width=\linewidth]{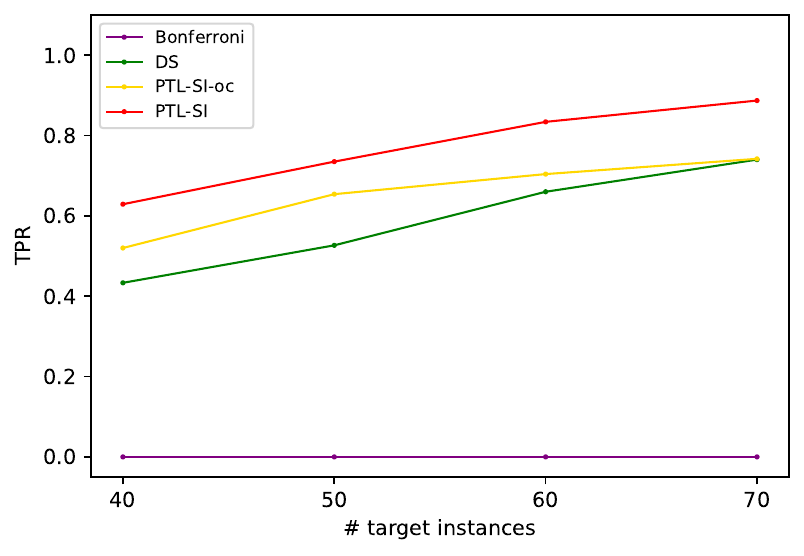}
        \caption{TPR}
        \label{fig:tpr_nt_otl}
    \end{subfigure}
    \caption{FPR and TPR in the case of Oracle Trans-Lasso w.r.t. $n_T$}
    \label{fig:tpr_fpr_nt_otl}
\end{figure}

\textbf{Noise distributions.} We considered noise following the Laplace distribution, skewnorm distribution (skewness coefficient 10), and $t_{20}$ distribution. We set $n_T \in \{40, 50, 60, 70\}$. The FPR results are shown in Fig.~\ref{fig:noise_distribution}. We confirmed that $\texttt{PTL-SI}$ still maintained good performance in FPR control. \\

\begin{figure}[!t]
    \centering
    \begin{subfigure}{0.32\textwidth}
        \centering
        \includegraphics[width=\linewidth]{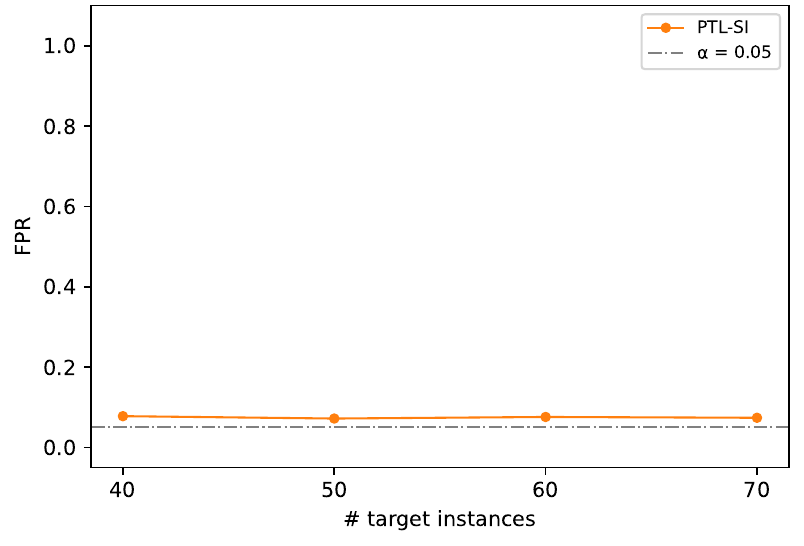}
        \caption{Laplace distribution}
        \label{fig:laplace_fpr}
    \end{subfigure}
%    \hfill
    \begin{subfigure}{0.32\textwidth}
        \centering
        \includegraphics[width=\linewidth]{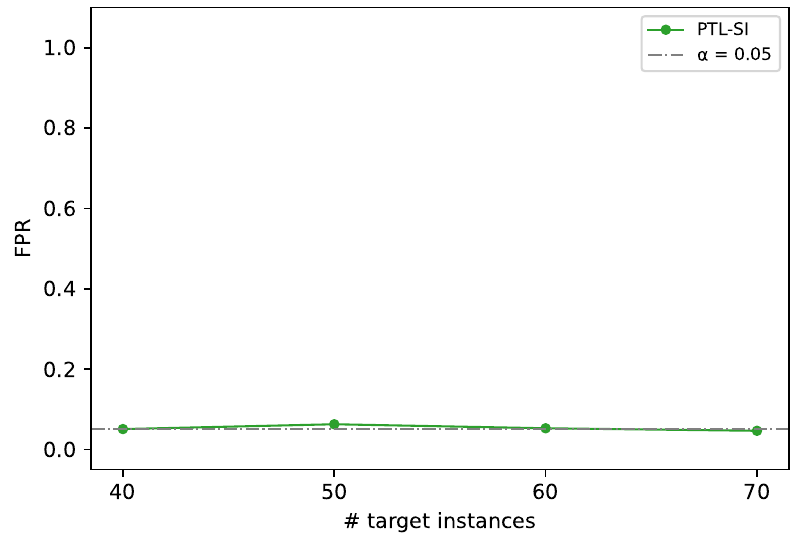}
        \caption{Skewnorm distribution}
        \label{fig:skewnorm_fpr}
    \end{subfigure}
    %
%    \vspace{10pt}
    \begin{subfigure}{0.32\textwidth}
        \centering
        \includegraphics[width=\linewidth]{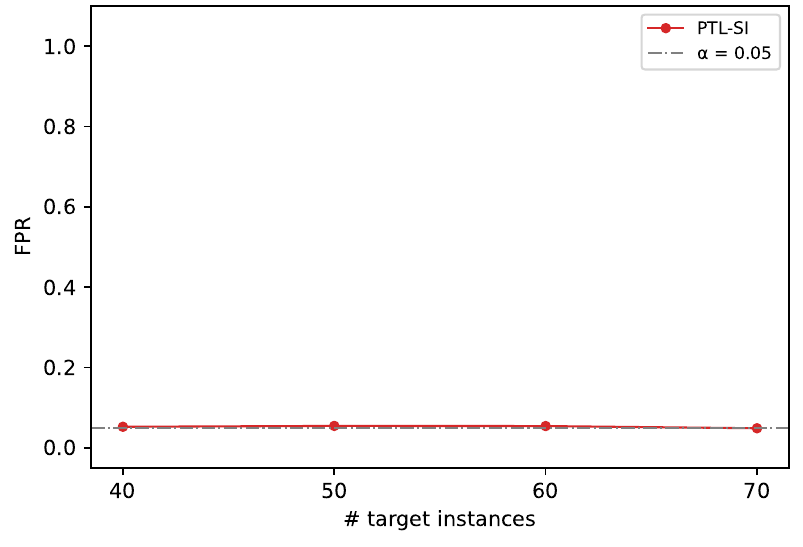}
        \caption{$t_{20}$ distribution}
        \label{fig:t20_fpr}
    \end{subfigure}
    \caption{FPR of PTL-SI under different noise distributions w.r.t. $n_T$}
    \label{fig:noise_distribution}
\end{figure}

\textbf{Computational cost.} In Fig.~\ref{fig:compcost}, we show the box-plots of the time for computing each $p$-value as well as the actual number of intervals of $z$ that we encountered. This demonstrates the reasonable computational cost of our method that scales linearly w.r.t. the complexity of the problem. \\

\begin{figure}[htbp]
    \centering
    \begin{subfigure}{0.32\textwidth}
        \centering
        \includegraphics[width=\linewidth]{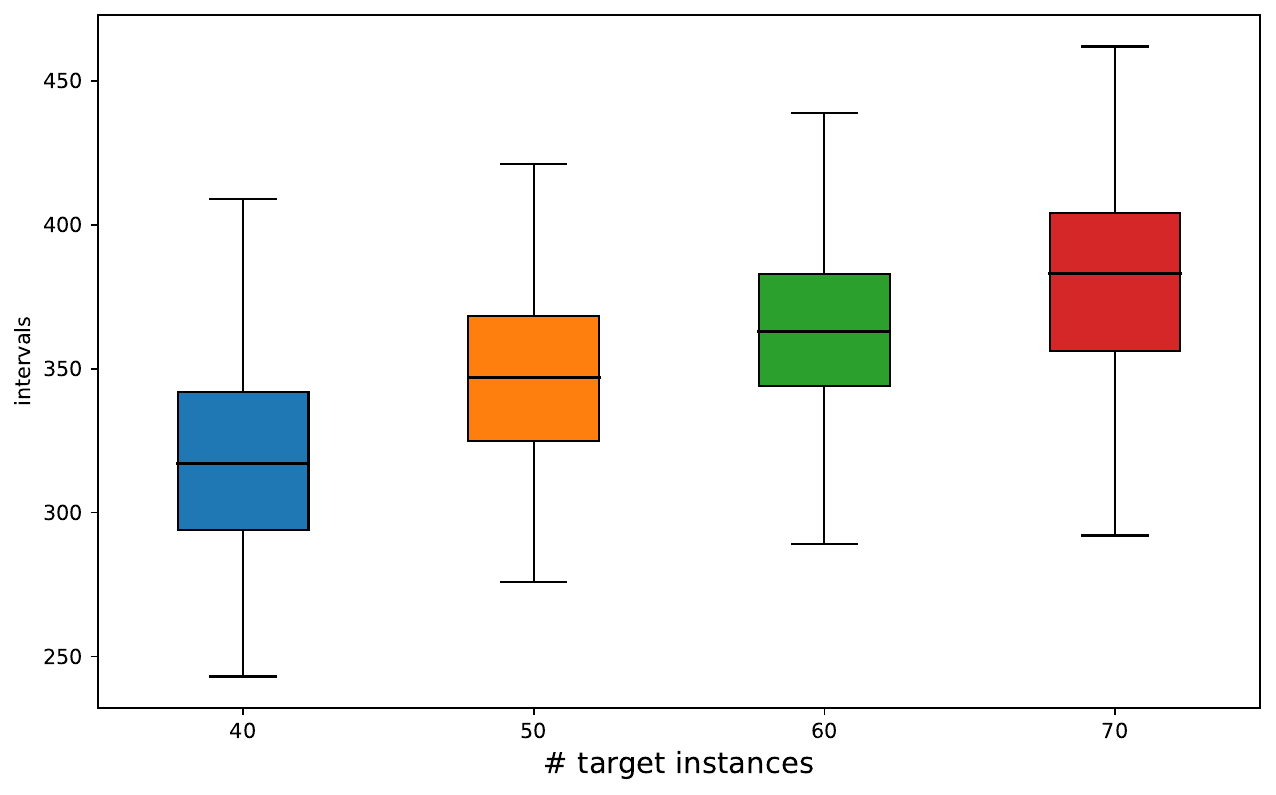}
        \caption{Identified intervals w.r.t. $n_T$}
        \label{fig:identified_intervals}
    \end{subfigure}\hfill
    \begin{subfigure}{0.32\textwidth}
        \centering
        \includegraphics[width=\linewidth]{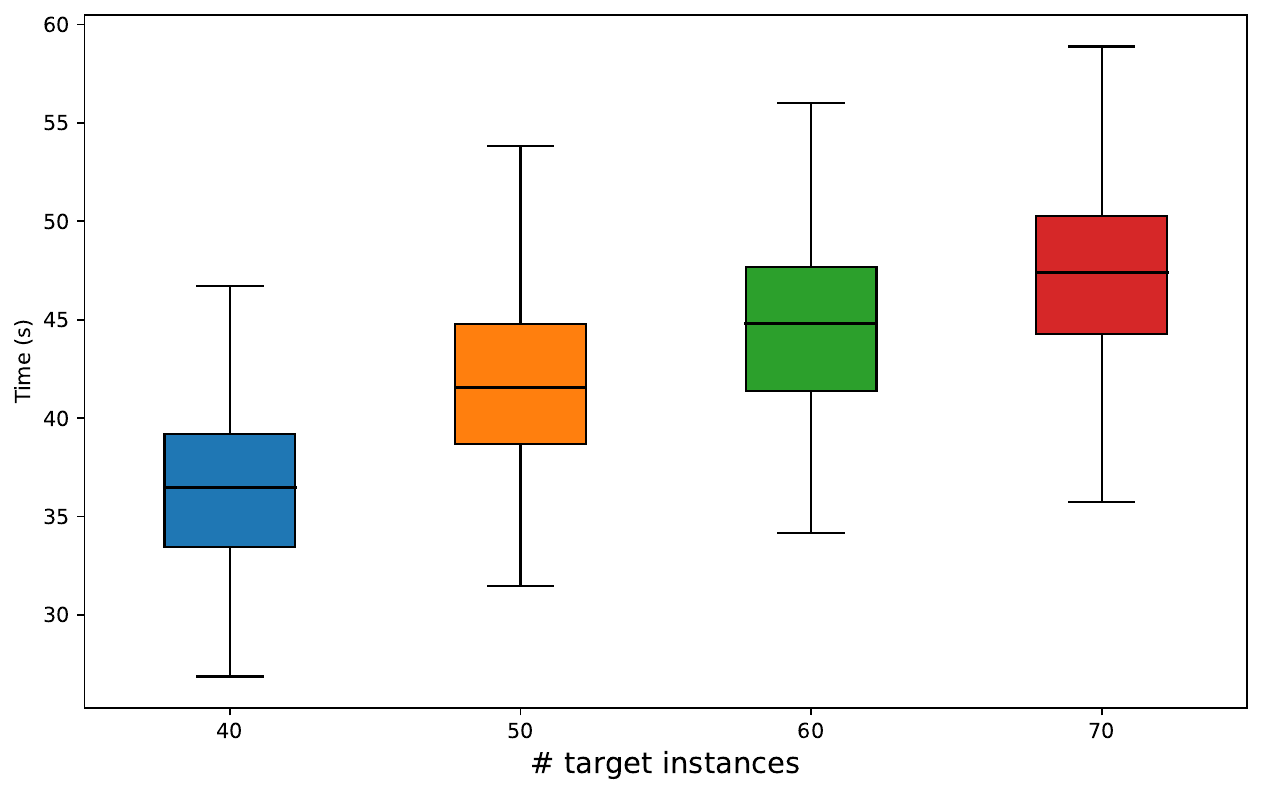}
        \caption{Computational time w.r.t. $n_T$}
        \label{fig:time_nt}
    \end{subfigure}
%
%    \vspace{10pt}
    \begin{subfigure}{0.32\textwidth}
        \centering
        \includegraphics[width=\linewidth]{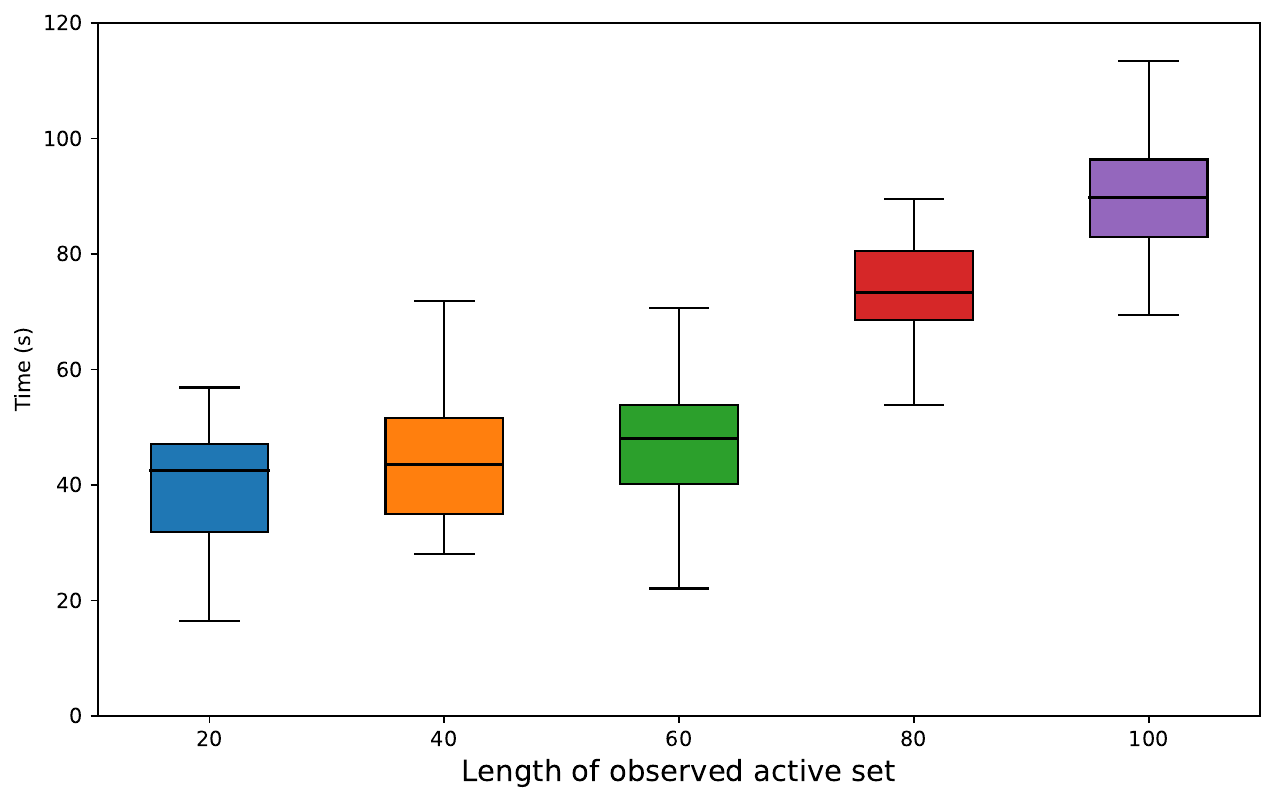}
        \caption{Computational time w.r.t. length of $\cM_{\rm obs}$}
        \label{fig:time_Mobs}
    \end{subfigure}
    \caption{Computational cost of PTL-SI}
    \label{fig:compcost}
\end{figure}

\subsection{Results on Real-World Datasets}\label{subsec5.3}

We conducted experiments on three real-world datasets: the CT-slice localization dataset, the BlogFeedback dataset and the Communities and Crime dataset (focusing on ViolentCrimesPerPop attribute), all available in the UCI Machine Learning Repository. For each dataset, we visualize the distribution of $p$-values corresponding to individual features. We used the TransFusion. Detailed experimental results are presented in Figs.~\ref{fig:ct_scan_dataset}, \ref{fig:crime_dataset} and \ref{fig:blog_dataset}. Although in certain cases the $p$-values obtained from $\texttt{PTL-SI-oc}$ are smaller than those from $\texttt{PTL-SI}$, overall, $\texttt{PTL-SI}$ consistently yields smaller p-values compared to other competing methods. This indicates that $\texttt{PTL-SI}$ demonstrates superior statistical power in detecting meaningful signals.

\begin{figure}[htbp]
    \centering
    \includegraphics[width=1\linewidth]{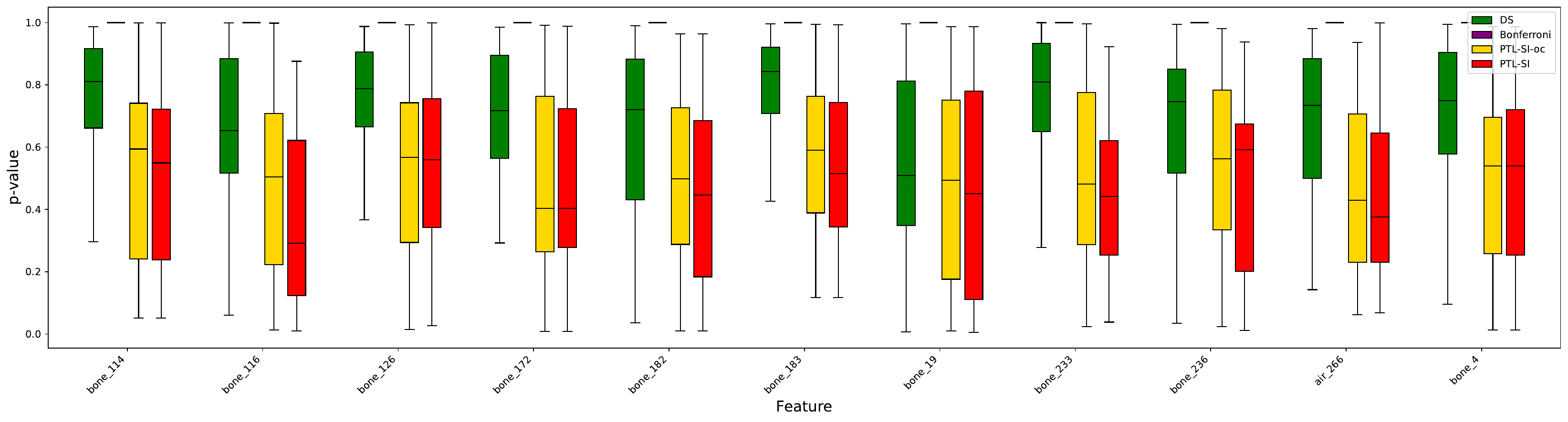}
    \caption{The CT-slice localization dataset. The target domain consists of rare cases with over 1,500 slices (50 samples). Four source domains are constructed based on slice-count intervals: fewer than 250 slices, 250–500 slices, 500–1,000 slices, and 1,000–1,500 slices, with 100 samples drawn from each source.}
    \label{fig:ct_scan_dataset}
\end{figure}

\begin{figure}[htbp]
    \centering
        \includegraphics[width=1\linewidth]{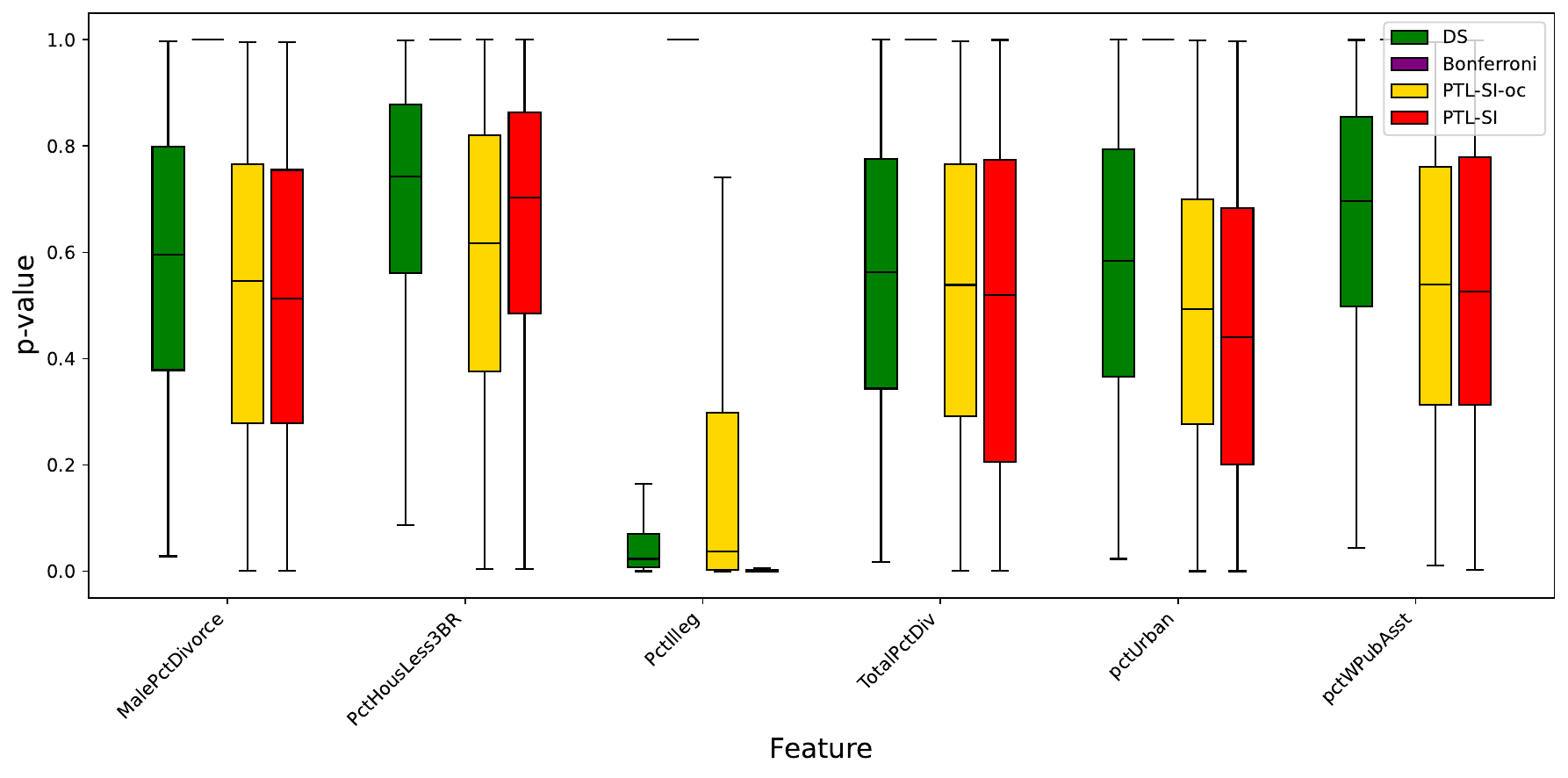}
        \caption{The Communities and Crime dataset. The target domain is defined as the state of Florida (50 samples), while six other states - New Jersey, Pennsylvania, California, Massachusetts, Ohio, and Texas - serve as source domains, each contributing 100 random samples.}
    \label{fig:crime_dataset}
\end{figure}

\begin{figure}[htbp]
    \centering
        \includegraphics[width=1\linewidth]{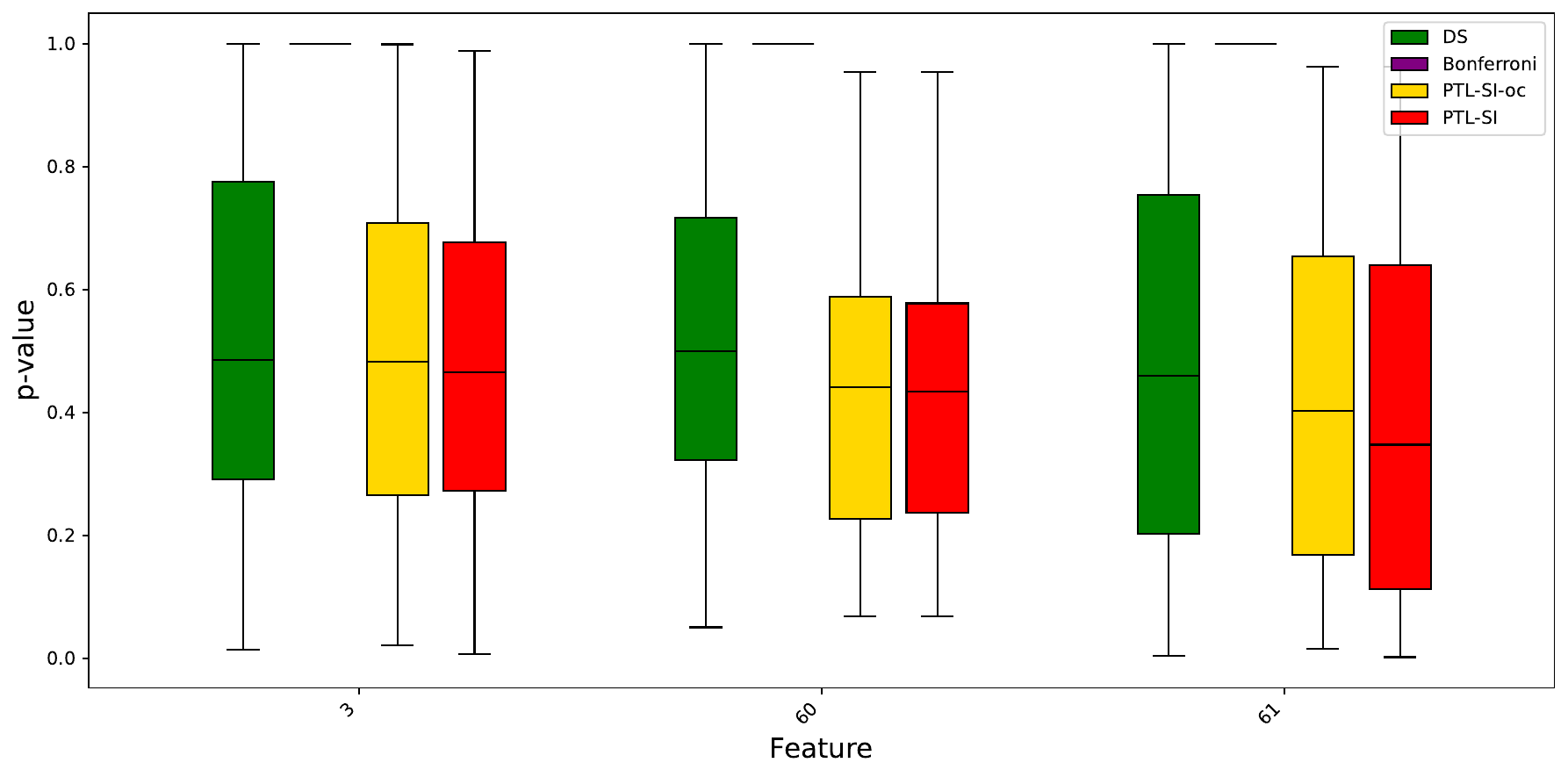}
        \caption{The BlogFeedback dataset. Domains are defined based on the median blog post length (61st column): the source domain consists of posts below the median, and the target domain includes those above. We randomly sample 300 observations from the source domain and 50 from the target domain.}
    \label{fig:blog_dataset}
\end{figure}

\section{Conclusion}\label{sec6}
In this paper, we tackled the challenge of performing valid statistical inference in high-dimensional regression (HDR) under transfer learning (TL), where traditional methods often fail due to data-dependent feature selection. We proposed PTL-SI, a novel Selective Inference (SI)-based method that provides theoretically valid control over the false positive rate (FPR) while maximizing true positives, specifically designed for the TransFusion algorithm and extended to Oracle Trans-Lasso. Our approach addresses selection bias through conditional inference and employs an efficient divide-and-conquer strategy to identify truncation regions. Future work could explore extensions to other TL frameworks and scalability improvements for ultra-high-dimensional settings. This work establishes a rigorous foundation for reliable statistical inference in TL-HDR, enhancing interpretability and trust in transfer learning applications.

% \backmatter

% \input{backmatter.tex} % Supplementary information, Acknowledgements

% \input{declarations.tex} % Declarations section

%%===========================================================================================%%
%% If you are submitting to one of the Nature Portfolio journals, using the eJP submission   %%
%% system, please include the references within the manuscript file itself. You may do this  %%
%% by copying the reference list from your .bbl file, paste it into the main manuscript .tex %%
%% file, and delete the associated \verb+\bibliography+ commands.                            %%
%%===========================================================================================%%
\bibliographystyle{sn-mathphys}
\bibliography{ref}% common bib file

%% BioMed_Central_Bib_Style_v1.01

\begin{thebibliography}{16}
% BibTex style file: bmc-mathphys.bst (version 2.1), 2014-07-24
\ifx \bisbn   \undefined \def \bisbn  #1{ISBN #1}\fi
\ifx \binits  \undefined \def \binits#1{#1}\fi
\ifx \bauthor  \undefined \def \bauthor#1{#1}\fi
\ifx \batitle  \undefined \def \batitle#1{#1}\fi
\ifx \bjtitle  \undefined \def \bjtitle#1{#1}\fi
\ifx \bvolume  \undefined \def \bvolume#1{\textbf{#1}}\fi
\ifx \byear  \undefined \def \byear#1{#1}\fi
\ifx \bissue  \undefined \def \bissue#1{#1}\fi
\ifx \bfpage  \undefined \def \bfpage#1{#1}\fi
\ifx \blpage  \undefined \def \blpage #1{#1}\fi
\ifx \burl  \undefined \def \burl#1{\textsf{#1}}\fi
\ifx \doiurl  \undefined \def \doiurl#1{\url{https://doi.org/#1}}\fi
\ifx \betal  \undefined \def \betal{\textit{et al.}}\fi
\ifx \binstitute  \undefined \def \binstitute#1{#1}\fi
\ifx \binstitutionaled  \undefined \def \binstitutionaled#1{#1}\fi
\ifx \bctitle  \undefined \def \bctitle#1{#1}\fi
\ifx \beditor  \undefined \def \beditor#1{#1}\fi
\ifx \bpublisher  \undefined \def \bpublisher#1{#1}\fi
\ifx \bbtitle  \undefined \def \bbtitle#1{#1}\fi
\ifx \bedition  \undefined \def \bedition#1{#1}\fi
\ifx \bseriesno  \undefined \def \bseriesno#1{#1}\fi
\ifx \blocation  \undefined \def \blocation#1{#1}\fi
\ifx \bsertitle  \undefined \def \bsertitle#1{#1}\fi
\ifx \bsnm \undefined \def \bsnm#1{#1}\fi
\ifx \bsuffix \undefined \def \bsuffix#1{#1}\fi
\ifx \bparticle \undefined \def \bparticle#1{#1}\fi
\ifx \barticle \undefined \def \barticle#1{#1}\fi
\bibcommenthead
\ifx \bconfdate \undefined \def \bconfdate #1{#1}\fi
\ifx \botherref \undefined \def \botherref #1{#1}\fi
\ifx \url \undefined \def \url#1{\textsf{#1}}\fi
\ifx \bchapter \undefined \def \bchapter#1{#1}\fi
\ifx \bbook \undefined \def \bbook#1{#1}\fi
\ifx \bcomment \undefined \def \bcomment#1{#1}\fi
\ifx \oauthor \undefined \def \oauthor#1{#1}\fi
\ifx \citeauthoryear \undefined \def \citeauthoryear#1{#1}\fi
\ifx \endbibitem  \undefined \def \endbibitem {}\fi
\ifx \bconflocation  \undefined \def \bconflocation#1{#1}\fi
\ifx \arxivurl  \undefined \def \arxivurl#1{\textsf{#1}}\fi
\csname PreBibitemsHook\endcsname

%%% 1
\bibitem{li2022transfer}
\begin{barticle}
\bauthor{\bsnm{Li}, \binits{S.}},
\bauthor{\bsnm{Cai}, \binits{T.T.}},
\bauthor{\bsnm{Li}, \binits{H.}}:
\batitle{Transfer learning for high-dimensional linear regression: Prediction,
  estimation and minimax optimality}.
\bjtitle{Journal of the Royal Statistical Society Series B: Statistical
  Methodology}
\bvolume{84}(\bissue{1}),
\bfpage{149}--\blpage{173}
(\byear{2022})
\end{barticle}
\endbibitem

%%% 2
\bibitem{he2024transfusion}
\begin{bchapter}
\bauthor{\bsnm{He}, \binits{Z.}},
\bauthor{\bsnm{Sun}, \binits{Y.}},
\bauthor{\bsnm{Li}, \binits{R.}}:
\bctitle{Transfusion: Covariate-shift robust transfer learning for
  high-dimensional regression}.
In: \bbtitle{International Conference on Artificial Intelligence and
  Statistics},
pp. \bfpage{703}--\blpage{711}
(\byear{2024}).
\bcomment{PMLR}
\end{bchapter}
\endbibitem

%%% 3
\bibitem{mei2011gene}
\begin{barticle}
\bauthor{\bsnm{Mei}, \binits{S.}},
\bauthor{\bsnm{Fei}, \binits{W.}},
\bauthor{\bsnm{Zhou}, \binits{S.}}:
\batitle{Gene ontology based transfer learning for protein subcellular
  localization}.
\bjtitle{BMC bioinformatics}
\bvolume{12},
\bfpage{1}--\blpage{12}
(\byear{2011})
\end{barticle}
\endbibitem

%%% 4
\bibitem{yang2016nonnegative}
\begin{barticle}
\bauthor{\bsnm{Yang}, \binits{Y.}},
\bauthor{\bsnm{Wu}, \binits{L.}}:
\batitle{Nonnegative adaptive lasso for ultra-high dimensional regression
  models and a two-stage method applied in financial modeling}.
\bjtitle{Journal of Statistical Planning and Inference}
\bvolume{174},
\bfpage{52}--\blpage{67}
(\byear{2016})
\end{barticle}
\endbibitem

%%% 5
\bibitem{shin2016deep}
\begin{barticle}
\bauthor{\bsnm{Shin}, \binits{H.-C.}},
\bauthor{\bsnm{Roth}, \binits{H.R.}},
\bauthor{\bsnm{Gao}, \binits{M.}},
\bauthor{\bsnm{Lu}, \binits{L.}},
\bauthor{\bsnm{Xu}, \binits{Z.}},
\bauthor{\bsnm{Nogues}, \binits{I.}},
\bauthor{\bsnm{Yao}, \binits{J.}},
\bauthor{\bsnm{Mollura}, \binits{D.}},
\bauthor{\bsnm{Summers}, \binits{R.M.}}:
\batitle{Deep convolutional neural networks for computer-aided detection: Cnn
  architectures, dataset characteristics and transfer learning}.
\bjtitle{IEEE transactions on medical imaging}
\bvolume{35}(\bissue{5}),
\bfpage{1285}--\blpage{1298}
(\byear{2016})
\end{barticle}
\endbibitem

%%% 6
\bibitem{lee2016exact}
\begin{barticle}
\bauthor{\bsnm{Lee}, \binits{J.D.}},
\bauthor{\bsnm{Sun}, \binits{D.L.}},
\bauthor{\bsnm{Sun}, \binits{Y.}},
\bauthor{\bsnm{Taylor}, \binits{J.E.}}, \betal:
\batitle{Exact post-selection inference, with application to the lasso}.
\bjtitle{The Annals of Statistics}
\bvolume{44}(\bissue{3}),
\bfpage{907}--\blpage{927}
(\byear{2016})
\end{barticle}
\endbibitem

%%% 7
\bibitem{loftus2014significance}
\begin{botherref}
\oauthor{\bsnm{Loftus}, \binits{J.R.}},
\oauthor{\bsnm{Taylor}, \binits{J.E.}}:
A significance test for forward stepwise model selection.
arXiv preprint arXiv:1405.3920
(2014)
\end{botherref}
\endbibitem

%%% 8
\bibitem{fithian2014optimal}
\begin{botherref}
\oauthor{\bsnm{Fithian}, \binits{W.}},
\oauthor{\bsnm{Sun}, \binits{D.}},
\oauthor{\bsnm{Taylor}, \binits{J.}}:
Optimal inference after model selection.
arXiv preprint arXiv:1410.2597
(2014)
\end{botherref}
\endbibitem

%%% 9
\bibitem{tibshirani2016exact}
\begin{barticle}
\bauthor{\bsnm{Tibshirani}, \binits{R.J.}},
\bauthor{\bsnm{Taylor}, \binits{J.}},
\bauthor{\bsnm{Lockhart}, \binits{R.}},
\bauthor{\bsnm{Tibshirani}, \binits{R.}}:
\batitle{Exact post-selection inference for sequential regression procedures}.
\bjtitle{Journal of the American Statistical Association}
\bvolume{111}(\bissue{514}),
\bfpage{600}--\blpage{620}
(\byear{2016})
\end{barticle}
\endbibitem

%%% 10
\bibitem{yang2016selective}
\begin{bchapter}
\bauthor{\bsnm{Yang}, \binits{F.}},
\bauthor{\bsnm{Barber}, \binits{R.F.}},
\bauthor{\bsnm{Jain}, \binits{P.}},
\bauthor{\bsnm{Lafferty}, \binits{J.}}:
\bctitle{Selective inference for group-sparse linear models}.
In: \bbtitle{Advances in Neural Information Processing Systems},
pp. \bfpage{2469}--\blpage{2477}
(\byear{2016})
\end{bchapter}
\endbibitem

%%% 11
\bibitem{suzumura2017selective}
\begin{bchapter}
\bauthor{\bsnm{Suzumura}, \binits{S.}},
\bauthor{\bsnm{Nakagawa}, \binits{K.}},
\bauthor{\bsnm{Umezu}, \binits{Y.}},
\bauthor{\bsnm{Tsuda}, \binits{K.}},
\bauthor{\bsnm{Takeuchi}, \binits{I.}}:
\bctitle{Selective inference for sparse high-order interaction models}.
In: \bbtitle{Proceedings of the 34th International Conference on Machine
  Learning-Volume 70},
pp. \bfpage{3338}--\blpage{3347}
(\byear{2017}).
\bcomment{JMLR. org}
\end{bchapter}
\endbibitem

%%% 12
\bibitem{sugiyama2021more}
\begin{bchapter}
\bauthor{\bsnm{Sugiyama}, \binits{K.}},
\bauthor{\bsnm{Le~Duy}, \binits{V.N.}},
\bauthor{\bsnm{Takeuchi}, \binits{I.}}:
\bctitle{More powerful and general selective inference for stepwise feature
  selection using homotopy method}.
In: \bbtitle{International Conference on Machine Learning},
pp. \bfpage{9891}--\blpage{9901}
(\byear{2021}).
\bcomment{PMLR}
\end{bchapter}
\endbibitem

%%% 13
\bibitem{duy2022more}
\begin{barticle}
\bauthor{\bsnm{Duy}, \binits{V.N.L.}},
\bauthor{\bsnm{Takeuchi}, \binits{I.}}:
\batitle{More powerful conditional selective inference for generalized lasso by
  parametric programming}.
\bjtitle{The Journal of Machine Learning Research}
\bvolume{23}(\bissue{1}),
\bfpage{13544}--\blpage{13580}
(\byear{2022})
\end{barticle}
\endbibitem

%%% 14
\bibitem{loi2024statistical}
\begin{botherref}
\oauthor{\bsnm{Loi}, \binits{N.T.}},
\oauthor{\bsnm{Loc}, \binits{D.T.}},
\oauthor{\bsnm{Duy}, \binits{V.N.L.}}:
Statistical inference for feature selection after optimal transport-based
  domain adaptation.
arXiv preprint arXiv:2410.15022
(2024)
\end{botherref}
\endbibitem

%%% 15
\bibitem{liu2018more}
\begin{botherref}
\oauthor{\bsnm{Liu}, \binits{K.}},
\oauthor{\bsnm{Markovic}, \binits{J.}},
\oauthor{\bsnm{Tibshirani}, \binits{R.}}:
More powerful post-selection inference, with application to the lasso.
arXiv preprint arXiv:1801.09037
(2018)
\end{botherref}
\endbibitem

%%% 16
\bibitem{le2024cad}
\begin{bchapter}
\bauthor{\bsnm{Duy}, \binits{V.N.L.}},
\bauthor{\bsnm{Lin}, \binits{H.-T.}},
\bauthor{\bsnm{Takeuchi}, \binits{I.}}:
\bctitle{Cad-da: Controllable anomaly detection after domain adaptation by
  statistical inference}.
In: \bbtitle{International Conference on Artificial Intelligence and
  Statistics},
pp. \bfpage{1828}--\blpage{1836}
(\byear{2024}).
\bcomment{PMLR}
\end{bchapter}
\endbibitem

\end{thebibliography}
%% if required, the content of .bbl file can be included here once bbl is generated
%%\input sn-article.bbl

\begin{appendices}
\newpage
\section{}\label{proof}
\subsection{Proof of Lemma \ref{lemma_validity_of_p_selective}}\label{proof:lemma_validity_of_p_selective}
We have
\[
\boldsymbol{\eta}_j^\top \bm{Y} \mid \left   \{ \mathcal{M}(\bm{Y}) = \mathcal{M}_{\text{obs}}, \, \cQ(\bm{Y}) = \cQ_{\text{obs}} \right\}
\
\sim \text{TN} \left( \boldsymbol{\eta}_j^\top \boldsymbol{\mu}, 
\boldsymbol{\eta}_j^\top \boldsymbol{}\Sigma \boldsymbol{\eta}_j, \mathcal{Z} \right),
\]
which is a truncated normal distribution with a mean $\boldsymbol{\eta}_j^\top \boldsymbol{\mu}$, variance $\boldsymbol{\eta}_j^\top \boldsymbol{}\Sigma \boldsymbol{\eta}_j$, in which $\boldsymbol{\mu} = \left(\begin{array}{c}
\boldsymbol{\mu}^{(1)} \\
\boldsymbol{\mu}^{(2)} \\
\vdots \\
\boldsymbol{\mu}^{(K)} \\
\boldsymbol{\mu}^{(0)}
\end{array}\right)$ and $\Sigma =
\begin{pmatrix}
\Sigma^{(1)} & 0 & \cdots & 0 & 0\\
0 & \Sigma^{(2)} & \cdots & 0 & 0 \\
\vdots & \vdots & \ddots & \vdots & \vdots \\
0 & 0 & \cdots & \Sigma^{(K)} & 0 \\
0 & 0 & \cdots & 0 & \Sigma^{(0)}  
\end{pmatrix}$, and the truncation region $\mathcal{Z}$ described in \S \ref{subsec3.2}. Therefore, under the null hypothesis,
\[
p_j^{\text{selective}} \mid \left   \{ \mathcal{M}(\bm{Y}) = \mathcal{M}_{\text{obs}}, \, \cQ(\bm{Y}) = \cQ_{\text{obs}} \right\} \sim \text{Unif}(0, 1).
\]

Thus,
\[
\mathbb{P}_{\text{H}_{0,j}}\left(p_j^{\text{selective}} \leq \alpha \mid \mathcal{M}(\bm{Y}) = \mathcal{M}_{\text{obs}}, \, \cQ(\bm{Y}) = \cQ_{\text{obs}} \right) = \alpha, \quad \forall \alpha \in [0, 1].
\]

Next, we have
\begin{align*}
\mathbb{P}_{\text{H}_{0,j}}\left(p_j^{\text{selective}} \leq \alpha \mid \mathcal{M}(\bm{Y}) = \mathcal{M}_{\text{obs}} \right)
&= \int \mathbb{P}_{\text{H}_{0,j}} \left(p_j^{\text{selective}} \leq \alpha \mid \mathcal{M}(\bm{Y}) = \mathcal{M}_{\text{obs}}, \cQ(\bm{Y}) = \cQ_{\text{obs}} \right) \\
&\quad \times \mathbb{P}_{\text{H}_{0,j}} \left( \cQ(\bm{Y}) = \cQ_{\text{obs}} \mid \mathcal{M}(\bm{Y}) = \mathcal{M}_{\text{obs}} \right) d\cQ_{\text{obs}} \\
&= \int \alpha \, \mathbb{P}_{\text{H}_{0,j}} \left( \cQ(\bm{Y}) = \cQ_{\text{obs}} \mid \mathcal{M}(\bm{Y}) = \mathcal{M}_{\text{obs}} \right) d\cQ_{\text{obs}} \\
&= \alpha \int \mathbb{P}_{\text{H}_{0,j}} \left( \cQ(\bm{Y}) = \cQ_{\text{obs}} \mid \mathcal{M}(\bm{Y}) = \mathcal{M}_{\text{obs}} \right) d\cQ_{\text{obs}} \\
&= \alpha.
\end{align*}

Finally, we obtain the result in Lemma~\ref{lemma_validity_of_p_selective} as follows:
\begin{align*}
\mathbb{P}_{\text{H}_{0,j}} \left( p_j^{\text{selective}} \leq \alpha \right)
&= \sum_{\mathcal{M}_{\text{obs}}} \mathbb{P}_{\text{H}_{0,j}} \left( p_j^{\text{selective}} \leq \alpha \mid \mathcal{M}(\bm{Y}) = \mathcal{M}_{\text{obs}} \right) \mathbb{P}_{\text{H}_{0,j}} \left( \mathcal{M}(\bm{Y}) = \mathcal{M}_{\text{obs}} \right) \\
&= \sum_{\mathcal{M}_{\text{obs}}} \alpha \, \mathbb{P}_{\text{H}_{0,j}} \left( \mathcal{M}(\bm{Y}) = \mathcal{M}_{\text{obs}} \right) \\
&= \alpha \sum_{\mathcal{M}_{\text{obs}}} \mathbb{P}_{\text{H}_{0,j}} \left( \mathcal{M}(\bm{Y}) = \mathcal{M}_{\text{obs}} \right) \\
&= \alpha.
\end{align*}

\subsection{Proof of Lemma \ref{lemma_line}}\label{proof:lemma_line}
 According to the second condition in \eqref{eq:Y}, we have:
\begin{align*}
\cQ(\bm{Y}) &= \cQ_{\text{obs}} \\
\Leftrightarrow \left( I_N - \bm{b} \boldsymbol{\eta}_j^\top \right) \bm{Y} &= \cQ_{\text{obs}}\\
\Leftrightarrow 
\bm{Y} &= \cQ_{\text{obs}} + \bm{b} \boldsymbol{\eta}_j^\top \bm{Y}
\end{align*}
By defining $\bm{a} = \cQ_{\text{obs}}$, $z = \boldsymbol{\eta}_j^\top \bm{Y}$, and incorporating the second condition of \eqref{eq:Y}, we obtain Lemma~\ref{lemma_line}.

\subsection{Proof of Lemma \ref{lem:Zu}}\label{proof:lemZu}
The set $\mathcal{Z}_u$ in \eqref{eq:cZu} can be reformulated as follows:
\[
\mathcal{Z}_u = \left\{ z \in \mathbb{R} \,\middle|\, 
\begin{array}{l}
\hat{\theta}_{j'}(z) \neq 0, \quad \forall {j'} \in \cO_u, \\
\hat{\theta}_{j'}(z) = 0, \quad \forall {j'} \notin \cO_u, \\
\text{sign} \big( \hat{\boldsymbol\theta}_{\cO_u}(z) \big) = \cS_{\cO_u}
\end{array}
\right\}
\]
The identification of $\mathcal{Z}_{u}$ is constructed based on the results presented in \cite{lee2016exact}, in which the authors characterized conditioning event of Lasso by deriving from the KKT conditions. Let us define the KKT conditions of the Weighted Lasso \eqref{eq:theta} in the TransFusion algorithm as following:
\begin{equation}\label{eq:KKT-theta}
\begin{split}
    &\frac{1}{N}  X^\top( X\hat{\boldsymbol{\theta}}(z) - \bm{Y}(z)) + \lambda_0 \tilde{\bm{a}} \circ \cS(z) = \bm 0, \\
    &\cS_{j'} = \text{sign}(\hat{\theta}_{j'}(z)), \quad  \text{if } \hat{\theta}_{j'}(z) \neq 0,\\
&\cS_{j'} \in (-1,1), \quad \text{if } \hat{\theta}_{j'}(z) = 0,
\end{split}
\end{equation}
where the operator $\circ$ is element-wise product, $\tilde{\bm{a}} = \left(a_1 \bm 1_p^\top, \dots, a_K \bm 1_p^\top, a_0 \bm 1_p^\top\right)^
\top$ with $a_0 = 1$ and $\bm 1_p \in \mathbb{R}^p$ is the all-ones vector.\\

By partitioning Equation~\eqref{eq:KKT-theta} with respect to the active set $\cO_{u}$, where $\cO_{u}^{c}$ denotes its complement, we reformulate the KKT conditions as:
\begin{equation}\label{eq:KKT-theta2}
    \begin{split}
        &\frac{1}{N}  X^\top_{\cO_u}( X_{\cO_u}\hat{\boldsymbol{\theta}}_{\cO_u}(z) - \bm{Y}(z)) + \lambda_0 \tilde{\bm{a}}_{\cO_u} \circ \cS_{\cO_u} = \bm 0, \\
        &\frac{1}{N}  X^\top_{\cO_u^c}( X_{\cO_u}\hat{\boldsymbol{\theta}}_{\cO_u}(z) - \bm{Y}(z)) + \lambda_0 \tilde{\bm{a}}_{\cO_u^c} \circ \cS_{\cO_u^c} = \bm 0, \\
        &\text{sign} \big( \hat{\boldsymbol\theta}_{\cO_u}(z) \big) = \cS_{\cO_u},\\ 
        &\| \cS_{\cO_{u}^c} \|_{\infty} < \bm 1.
    \end{split}
\end{equation}
Solving the first two equations in \eqref{eq:KKT-theta2} for $\hat{\boldsymbol{\theta}}_{\cO_u}(z)$ and $\cS_{\cO_u^c}$ yields the equivalent conditions:
\begin{align}
    &\hat{\boldsymbol{\theta}}_{\cO_u}(z) = \left( X^\top_{\cO_u} X_{\cO_u}\right)^{-1} \left( X^\top_{\cO_u}\bm{Y}(z)-N\lambda_0 \tilde{\bm{a}}_{\cO_u} \circ \cS_{\cO_u}\right), \label{eq:theta_u} \\
    &\cS_{\cO_u^c} 
    =  X^\top_{\cO_u^c}\left( X_{\cO_u}^\top \right)^{+}\tilde{\bm{a}}_{\cO_u} \circ \cS_{\cO_u}\oslash \tilde{\bm{a}}_{\cO_u^c} 
 + \frac{1}{\lambda_0 N} X^\top_{\cO_u^c}\left(I_N -  X_{\cO_u}\left( X_{\cO_u}\right)^{+}\right)\bm{Y}(z)\oslash \tilde{\bm{a}}_{\cO_u^c}, \nonumber \\
    &\operatorname{sign} \big( \hat{\boldsymbol\theta}_{\cO_u}(z) \big) = \cS_{\cO_u}, \nonumber \\ 
    &\| \cS_{\cO_{u}^c} \|_{\infty} < \bm 1, \nonumber
\end{align}
where the operator $\oslash$ is element-wise division, $( X)^+ = ( X^\top  X)^{-1}  X^\top$, and $( X^\top)^+ =  X ( X^\top  X)^{-1}$. Then, the set $\mathcal{Z}_u$ can be rewritten as:
\[
\resizebox{\linewidth}{!}{$
\mathcal{Z}_u = \left\{ z \in \mathbb{R} \,\middle|\, 
\begin{aligned}
& \hat{\boldsymbol{\theta}}_{\cO_u}(z) = \left( X^\top_{\cO_u} X_{\cO_u}\right)^{-1} 
   \left( X^\top_{\cO_u}\bm{Y}(z)-N\lambda_0 \tilde{\bm{a}}_{\cO_u} \circ \cS_{\cO_u}\right), \\
& \cS_{\cO_u^c} =  X^\top_{\cO_u^c}\left( X_{\cO_u}^\top \right)^{+}\tilde{\bm{a}}_{\cO_u} \circ \cS_{\cO_u} \oslash \tilde{\bm{a}}_{\cO_u^c}  + \frac{1}{\lambda_0 N} X^\top_{\cO_u^c}\left(I_N -  X_{\cO_u}\left( X_{\cO_u}\right)^{+}\right)\bm{Y}(z) \oslash \tilde{\bm{a}}_{\cO_u^c}, \\
& \operatorname{sign} \big( \hat{\boldsymbol\theta}_{\cO_u}(z) \big) = \cS_{\cO_u}, \\ 
& \| \cS_{\cO_{u}^c} \|_{\infty} < \bm 1.
\end{aligned}
\right\}
$}
\]

The two last conditions of $\mathcal{Z}_u$ then can be rewritten as:
\begin{align*}
&\quad \:\left\{ \text{sign} \big( \hat{\boldsymbol\theta}_{\cO_u}(z) \big) = \cS_{\cO_u} \right\} \\
&= \left\{ \cS_{\cO_u} \circ \hat{\boldsymbol\theta}_{\cO_u}(z) > \bm 0 \right\}, \\
&= \left\{ \cS_{\cO_u} \circ \left( X^\top_{\cO_u} X_{\cO_u}\right)^{-1} \left ( X^\top_{\cO_u}\bm{Y}(z)-N\lambda_0 \tilde{\bm{a}}_{\cO_u} \circ \cS_{\cO_u} \right) > \bm 0 \right\}, \\
&= \left\{ 
    \cS_{\cO_u} \circ \left( X_{\cO_u}\right)^{+}\bm{Y}(z) > N\lambda_0 \cS_{\cO_u} \circ \left(\left( X^\top_{\cO_u} X_{\cO_u}\right)^{-1}\tilde{\bm{a}}_{\cO_u} \circ \cS_{\cO_u} \right)
\right\}, \\
&= \left\{  \cS_{\cO_u} \circ \left( X_{\cO_u}\right)^{+}(\bm{a} + \bm{b}z) > N\lambda_0 \cS_{\cO_u} \circ \left( \left( X^\top_{\cO_u} X_{\cO_u}\right)^{-1}\tilde{\bm{a}}_{\cO_u} \circ \cS_{\cO_u} \right)
 \right\}, \\
&= \left\{ \boldsymbol{\psi}_0 z \leq \boldsymbol{\gamma}_0 \right\}
\end{align*}
where 
\[
\boldsymbol{\psi}_0 = -\cS_{\cO_u} \circ \left( X_{\cO_u}\right)^{+} \bm{b} \, ,
\]
\[
\boldsymbol{\gamma}_0 = \cS_{\cO_u} \circ \left( X_{\cO_u}\right)^{+} \bm{a} - N\lambda_0 \cS_{\cO_u} \circ \left( \left( X^\top_{\cO_u} X_{\cO_u}\right)^{-1}\tilde{\bm{a}}_{\cO_u} \circ \cS_{\cO_u} \right).
\]
\resizebox{\linewidth}{!}{$
\begin{aligned}
    & \quad \:\left\{ \| \cS_{\cO_{u}^c} \|_{\infty} < \bm 1 \right\} \\
    &= \left\{ -\bm 1 < \cS_{\cO_{u}^c} < \bm 1 \right\}, \\
    &= \left\{ -\bm 1 <  X^\top_{\cO_u^c}\left( X_{\cO_u}^\top \right)^{+}\tilde{\bm{a}}_{\cO_u} \circ \cS_{\cO_u}\oslash \tilde{\bm{a}}_{\cO_u^c} + \frac{1}{\lambda_0 N} X^\top_{\cO_u^c}\left (I_N -  X_{\cO_u}\left( X_{\cO_u}\right)^{+}\right)\oslash \tilde{\bm{a}}_{\cO_u^c}\bm{Y}(z) < \bm 1 \right\}, \\
    &= \left\{
    \begin{aligned}       
     \frac{1}{\lambda_0 N} X^\top_{\cO_u^c}\left (I_N -  X_{\cO_u}\left( X_{\cO_u}\right)^{+}\right)\oslash \tilde{\bm{a}}_{\cO_u^c}\bm{Y}(z) &< \bm 1 -  X^\top_{\cO_u^c}\left( X_{\cO_u}^\top \right)^{+}\tilde{\bm{a}}_{\cO_u} \circ \cS_{\cO_u}\oslash \tilde{\bm{a}}_{\cO_u^c}  , \\
    - \frac{1}{\lambda_0 N} X^\top_{\cO_u^c}\left (I_N -  X_{\cO_u}\left( X_{\cO_u}\right)^{+}\right)\oslash \tilde{\bm{a}}_{\cO_u^c}\bm{Y}(z) &< \bm 1 +  X^\top_{\cO_u^c}\left( X_{\cO_u}^\top \right)^{+}\tilde{\bm{a}}_{\cO_u} \circ \cS_{\cO_u}\oslash \tilde{\bm{a}}_{\cO_u^c} 
       \end{aligned}
\right\}, \\
  &= \left\{
    \begin{aligned}       
     \frac{1}{\lambda_0 N} X^\top_{\cO_u^c}\left (I_N -  X_{\cO_u}\left( X_{\cO_u}\right)^{+}\right)\oslash \tilde{\bm{a}}_{\cO_u^c}(\bm{a} + \bm{b}z) &< \bm 1 -  X^\top_{\cO_u^c}\left( X_{\cO_u}^\top \right)^{+}\tilde{\bm{a}}_{\cO_u} \circ \cS_{\cO_u}\oslash \tilde{\bm{a}}_{\cO_u^c}  , \\
    - \frac{1}{\lambda_0 N} X^\top_{\cO_u^c}\left (I_N -  X_{\cO_u}\left( X_{\cO_u}\right)^{+}\right)\oslash \tilde{\bm{a}}_{\cO_u^c}(\bm{a} + \bm{b}z) &< \bm 1 +  X^\top_{\cO_u^c}\left( X_{\cO_u}^\top \right)^{+}\tilde{\bm{a}}_{\cO_u} \circ \cS_{\cO_u}\oslash \tilde{\bm{a}}_{\cO_u^c} 
       \end{aligned}
\right\}, \\
    &= \left\{ \boldsymbol{\psi}_1 z \leq \boldsymbol{\gamma}_1 \right\}
\end{aligned}
$}\\

where
\[
\boldsymbol{\psi}_1 =  \begin{pmatrix}    \frac{1}{\lambda_0 N} X^\top_{\cO_u^c}\left (I_N -  X_{\cO_u}\left( X_{\cO_u}\right)^{+}\right)\oslash \tilde{\bm{a}}_{\cO_u^c}\bm{b} \\
    -\frac{1}{\lambda_0 N} X^\top_{\cO_u^c}\left (I_N -  X_{\cO_u}\left( X_{\cO_u}\right)^{+}\right)\oslash \tilde{\bm{a}}_{\cO_u^c}\bm{b}
    \end{pmatrix},
\]
\[
\resizebox{1\linewidth}{!}{$
\boldsymbol{\gamma}_1 = \begin{pmatrix}
\bm 1 -  X^\top_{\cO_u^c}\left( X_{\cO_u}^\top \right)^{+}\tilde{\bm{a}}_{\cO_u} \circ \cS_{\cO_u}\oslash \tilde{\bm{a}}_{\cO_u^c} - \frac{1}{\lambda_0 N} X^\top_{\cO_u^c}\left (I_N -  X_{\cO_u}\left( X_{\cO_u}\right)^{+}\right)\oslash \tilde{\bm{a}}_{\cO_u^c}\bm{a} \\
\bm 1 +  X^\top_{\cO_u^c}\left( X_{\cO_u}^\top \right)^{+}\tilde{\bm{a}}_{\cO_u} \circ \cS_{\cO_u}\oslash \tilde{\bm{a}}_{\cO_u^c} + \frac{1}{\lambda_0 N} X^\top_{\cO_u^c}\left (I_N -  X_{\cO_u}\left( X_{\cO_u}\right)^{+}\right)\oslash \tilde{\bm{a}}_{\cO_u^c}\bm{a}
\end{pmatrix}.
$}
\]\\

Finally, the set $\mathcal{Z}_u$ can be defined as:
\[
\mathcal{Z}_u = \{ z \in \mathbb{R} \mid \boldsymbol\psi z \leq \boldsymbol\gamma \},
\]
where $\boldsymbol\psi = (\boldsymbol\psi_0 \quad \boldsymbol\psi_1)^{\top}$, $\boldsymbol\gamma = (\boldsymbol\gamma_0 \quad \boldsymbol\gamma_1)^{\top}$.\\

Thus, the set $\mathcal{Z}_u$ can be identified by solving a system of linear inequalities with respect to $z$.

\subsection{Proof of Lemma \ref{lem:Zv}}\label{proof:lemZv}
The set $\mathcal{Z}_v$ in \eqref{eq:cZv} can be reformulated as follows:
\[
\quad \mathcal{Z}_v = \left\{ z \in \mathbb{R} \,\middle|\, 
\begin{array}{l}
\hat{\delta}_{j''}(z) \neq 0, \quad \forall {j''} \in \cL_v, \\
\hat{\delta}_{j''}(z) = 0, \quad \forall {j''} \notin \cL_v, \\
\text{sign} \big( \hat{\boldsymbol\delta}_{\cL_v}(z) \big) = \cS_{\cL_v}
\end{array}
\right\}
\]

Analogous to $\mathcal{Z}_u$, we characterize $\mathcal{Z}_v$ by deriving from the KKT conditions. The KKT conditions of the Lasso \eqref{eq:delta} in the TransFusion algorithm after obtaining $\cO_u,\,\cS_{\cO_u}$ as following:
\begin{equation}\label{eq:KKT-delta}
\begin{split}
    &\frac{1}{n_T} ( X^{(0)})^\top \left( X^{(0)} \hat{\boldsymbol{\delta}}(z) -  \left (\bm{Y}^{(0)}(z) -  X^{(0)} \hat{\boldsymbol w}_u(z)\right)\right) + \tilde{\lambda} \cS(z) = \bm 0,\\
&\cS_{j''} = \text{sign}(\hat{\delta}_{j''}(z)), \quad  \text{if } \hat{\delta}_{j''}(z) \neq 0, \\
& \cS_{j''} \in (-1,1), \quad \text{if } \hat{\delta}_{j''}(z) = 0.
\end{split}
\end{equation}

First, we compute $\hat{\boldsymbol w}_u(z)$ in Eq. \eqref{eq:w} based on $\hat{\boldsymbol{\theta}}_{\cO_u}(z)$ in $\eqref{eq:theta_u}$:
\begin{equation}\label{eq:w_u}
    \begin{split}
            \hat{\boldsymbol w}_u(z) &=  \frac{1}{N}B\hat{\boldsymbol{\theta}}_u(z) = \frac{1}{N}BE_u\hat{\boldsymbol{\theta}}_{\cO_u}(z)\\ &= \frac{1}{N}BE_u \left( X^\top_{\cO_u} X_{\cO_u}\right)^{-1} \left ( X^\top_{\cO_u}\bm{Y}(z)-N\lambda_0 \mathbf{\tilde{a}}_{\cO_u} \circ \cS_{\cO_u}\right ) 
    \end{split}
\end{equation}
where 
\[
B = \left(n_SI_p,\dots,n_SI_p,\left(Kn_S + n_T\right)I_p \right) \in\mathbb{R}^{p\times (K+1)p},
\]
\[E_u = \left( \bm{e}_{j_1}, \bm{e}_{j_2},\dots, \bm{e}_{j_{|\cO_u|}} \right) \in \mathbb{R}^{(K+1)p \times |\cO_u| }, \quad j_k \in \cO_u.\]\\

Next, we define: 
\begin{equation}\label{eq:Q}
    Q = \left(0_{n_T \times Kn_S}, I_{n_T} \right) \in \mathbb{R}^{n_T \times N}
\end{equation}
and obtain $\bm{Y}^{(0)}(z) = Q\bm{Y}(z)$. \\

The first condition in \eqref{eq:KKT-delta} is equivalent to:
\[
\frac{1}{n_T} ( X^{(0)})^\top \left( X^{(0)} \hat{\boldsymbol{\delta}}(z) -  \boldsymbol{\phi}_u\bm{Y}(z) - \boldsymbol\iota_u\right) + \tilde{\lambda} \cS(z) = \bm 0
\]
where
\[
\boldsymbol{\phi}_u = Q - \frac{1}{N} X^{(0)}BE_u \left( X^\top_{\cO_u} X_{\cO_u}\right)^{-1}  X^\top_{\cO_u},
\]
\[
\boldsymbol\iota_u =\lambda_0  X^{(0)}BE_u \left( X^\top_{\cO_u} X_{\cO_u}\right)^{-1} \bm{\tilde{a}}_{\cO_u} \circ \cS_{\cO_u}.
\]\\
By partitioning Equation~\eqref{eq:KKT-delta} with respect to the active set $\cL_{v}$, where $\cL_{v}^{c}$ denotes its complement, we reformulate the KKT conditions as:
\begin{equation}
\begin{split}
\label{eq:KKT-delta2}
&\frac{1}{n_T} ( X^{(0)}_{\cL_v})^\top \left( X^{(0)}_{\cL_v} \hat{\boldsymbol{\delta}}_{\cL_v}(z) - \boldsymbol{\phi}_u\bm{Y}(z) - \boldsymbol\iota_u\right) + \tilde{\lambda} \cS_{\cL_v} = \bm 0, \\
&\frac{1}{n_T} ( X^{(0)}_{\cL_v^c})^\top \left( X^{(0)}_{\cL_v} \hat{\boldsymbol{\delta}}_{\cL_v}(z) - \boldsymbol{\phi}_u\bm{Y}(z) - \boldsymbol\iota_u\right) + \tilde{\lambda} \cS_{\cL_v^c} = \bm 0,\\
&\text{sign} \big( \hat{\boldsymbol\delta}_{\cL_v}(z) \big) = \cS_{\cL_v}, \\
&\| \cS_{\cL_{v}^c} \|_{\infty} < \bm 1.
\end{split}
\end{equation}
Solving the first two equations in \eqref{eq:KKT-delta2} for $\hat{\boldsymbol{\delta}}_{\cL_v}(z)$ and $\cS_{\cL_v^c}$ yields the equivalent conditions:

\begin{align}
    &\hat{\boldsymbol{\delta}}_{\cL_v}(z) =   \left(\left( X^{(0)}_{\cL_v}\right)^\top X^{(0)}_{\cL_v}\right)^{-1} 
    \left( \left( X^{(0)}_{\cL_v}\right)^\top\left(\boldsymbol{\phi}_u\bm{Y}(z) + \boldsymbol\iota_u\right) 
    - n_T\tilde{\lambda} \cS_{\cL_v}\right ),\label{eq:delta_v} \\
    &\resizebox{\textwidth}{!}{$
    \cS_{\cL_v^c} = 
    \left( X^{(0)}_{\cL_v^c}\right)^\top
    \left(\left( X^{(0)}_{\cL_v}\right)^\top\right)^{+}
    \cS_{\cL_v} + \frac{1}{\tilde{\lambda}n_T}
    \left( X^{(0)}_{\cL_v^c}\right)^\top
    \left(I_{n_T} -  X^{(0)}_{\cL_v} 
    \left( X^{(0)}_{\cL_v}\right)^{+}\right)
    \left(\boldsymbol{\phi}_{u}\bm{Y}(z)+\boldsymbol\iota_u\right),$}
    \notag \\   
    &\text{sign} \big( \hat{\boldsymbol\delta}_{\cL_v}(z) \big) = \cS_{\cL_v},
    \notag \\
    &\| \cS_{\cL_{v}^c} \|_{\infty} < \bm 1.
    \notag
\end{align}
Then, the set $\mathcal{Z}_v$ can be rewritten as:

\resizebox{\linewidth}{!}{$
\mathcal{Z}_v = \left\{ z \in \mathbb{R} \,\middle|\,
\begin{aligned}
    &\hat{\boldsymbol{\delta}}_{\cL_v}(z) = \left(\left( X^{(0)}_{\cL_v}\right)^\top X^{(0)}_{\cL_v}\right)^{-1} \left( \left( X^{(0)}_{\cL_v}\right)^\top\left(\boldsymbol{\phi}_u\bm{Y}(z) + \boldsymbol\iota_u\right) - n_T\tilde{\lambda} \cS_{\cL_v}\right ), \\
    &\cS_{\cL_v^c} =\left( X^{(0)}_{\cL_v^c}\right)^\top\left(\left( X^{(0)}_{\cL_v}\right)^\top\right)^{+}\cS_{\cL_v}  + \frac{1}{\tilde{\lambda}n_T}\left( X^{(0)}_{\cL_v^c}\right)^\top\left(I_{n_T} -  X^{(0)}_{\cL_v} \left( X^{(0)}_{\cL_v}\right)^{+}\right)\left(\boldsymbol{\phi}_{u}\bm{Y}(z)+\boldsymbol\iota_u\right),  \\
    &\text{sign} \big( \hat{\boldsymbol\delta}_{\cL_v}(z) \big) = \cS_{\cL_v}, \\
    &\| \cS_{\cL_{v}^c} \|_{\infty} < \bm 1. 
\end{aligned}
\right\}
$}\\ \\

The two last conditions of $\mathcal{Z}_v$ then can be rewritten as:
\begin{align*}
& \quad \; \left\{ \text{sign} \big( \hat{\boldsymbol\delta}_{\mathcal{L}_v}(z) \big) = \mathcal{S}_{\mathcal{L}_v} \right\} \\ 
&= \left\{ \mathcal{S}_{\mathcal{L}_v} \circ \hat{\boldsymbol\delta}_{\mathcal{L}_v}(z) > \boldsymbol{0} \right\} \\ 
&= \left\{ \mathcal{S}_{\mathcal{L}_v} \circ \left(\left( X^{(0)}_{\mathcal{L}_v} \right)^\top X^{(0)}_{\mathcal{L}_v} \right)^{-1} 
\left( \left( X^{(0)}_{\mathcal{L}_v} \right)^\top \left( \boldsymbol{\phi}_u \boldsymbol{Y}(z) + \boldsymbol{\iota}_u \right) - n_T \tilde{\lambda} \mathcal{S}_{\mathcal{L}_v} \right) > \boldsymbol{0} \right\} \\ 
&= \left\{ \mathcal{S}_{\mathcal{L}_v} \circ \left( X^{(0)}_{\mathcal{L}_v} \right)^{+} 
\left( \boldsymbol{\phi}_u \boldsymbol{Y}(z) + \boldsymbol{\iota}_u \right) > 
\tilde{\lambda} n_T \mathcal{S}_{\mathcal{L}_v} \circ 
\left( \left( \left( X^{(0)}_{\mathcal{L}_v} \right)^\top X^{(0)}_{\mathcal{L}_v} \right)^{-1} \mathcal{S}_{\mathcal{L}_v} \right) \right\} \\ 
&= \left\{ \mathcal{S}_{\mathcal{L}_v} \circ \left( X^{(0)}_{\mathcal{L}_v} \right)^{+} 
\left( \boldsymbol{\phi}_u ( \boldsymbol{a} + \boldsymbol{b} z ) + \boldsymbol{\iota}_u \right) > 
\tilde{\lambda} n_T \mathcal{S}_{\mathcal{L}_v} \circ 
\left( \left( \left( X^{(0)}_{\mathcal{L}_v} \right)^\top X^{(0)}_{\mathcal{L}_v} \right)^{-1} \mathcal{S}_{\mathcal{L}_v} \right) \right\} \\ 
&= \left\{ \boldsymbol{\nu}_0 z \leq \boldsymbol{\kappa}_0 \right\}
\end{align*}

where 
\[
\boldsymbol{\nu}_0 = -\cS_{\cL_v} \circ \left( X^{(0)}_{\cL_v}\right)^{+}\boldsymbol\phi_u\bm{b}\, ,
\]
\[
\boldsymbol{\kappa}_0 = \cS_{\cL_v} \circ \left( X^{(0)}_{\cL_v}\right)^{+}(\boldsymbol\phi_u\bm{a} + \boldsymbol\iota_u)
- \tilde{\lambda} n_T \cS_{\cL_v} \circ
\left( \left(\left( X^{(0)}_{\cL_v}\right)^\top  X^{(0)}_{\cL_v}\right)^{-1} \cS_{\cL_v} \right).
\]
\resizebox{\linewidth}{!}{%
$\begin{aligned}
    & \quad \:\left\{ \| \cS_{\cL_{v}^c} \|_{\infty} < \bm 1 \right\} \\
    &= \left\{ -\bm 1 < \cS_{\cL_{v}^c} < \bm 1 \right\}, \\
    &= \left\{ -\bm 1 < ( X^{(0)}_{\cL_v^c})^\top\left(( X^{(0)}_{\cL_v})^\top\right)^{+}\cS_{\cL_v} + \frac{1}{\tilde{\lambda}n_T}( X^{(0)}_{\cL_v^c})^\top\left(I_{n_T} -  X^{(0)}_{\cL_v} \left( X^{(0)}_{\cL_v}\right)^{+}\right)\left(\boldsymbol{\phi}_{u}\bm{Y}(z)+\boldsymbol\iota_u\right) 
    < \bm 1 \right\}, \\
    &= \left\{
    \begin{aligned}       
    \frac{1}{\tilde{\lambda}n_T}( X^{(0)}_{\cL_v^c})^\top\left(I_{n_T} -  X^{(0)}_{\cL_v} \left( X^{(0)}_{\cL_v}\right)^{+}\right)\left(\boldsymbol{\phi}_{u}\bm{Y}(z)+\boldsymbol\iota_u\right) < \bm 1 - ( X^{(0)}_{\cL_v^c})^\top\left(( X^{(0)}_{\cL_v})^\top\right)^{+}\cS_{\cL_v}  , \\
    - \frac{1}{\tilde{\lambda}n_T}( X^{(0)}_{\cL_v^c})^\top\left(I_{n_T} -  X^{(0)}_{\cL_v} \left( X^{(0)}_{\cL_v}\right)^{+}\right)\left(\boldsymbol{\phi}_{u}\bm{Y}(z)+\boldsymbol\iota_u\right) < \bm 1 + ( X^{(0)}_{\cL_v^c})^\top\left(( X^{(0)}_{\cL_v})^\top\right)^{+}\cS_{\cL_v}
       \end{aligned}
\right\}, \\
  &= \left\{
    \begin{aligned}       
      \frac{1}{\tilde{\lambda}n_T}( X^{(0)}_{\cL_v^c})^\top\left(I_{n_T} -  X^{(0)}_{\cL_v} \left( X^{(0)}_{\cL_v}\right)^{+}\right)\left(\boldsymbol{\phi}_{u}\bm{a}+\boldsymbol{\phi}_{u} \bm{b}z+\boldsymbol\iota_u\right) < \bm 1 - ( X^{(0)}_{\cL_v^c})^\top\left(( X^{(0)}_{\cL_v})^\top\right)^{+}\cS_{\cL_v}  , \\
    -     \frac{1}{\tilde{\lambda}n_T}( X^{(0)}_{\cL_v^c})^\top\left(I_{n_T} -  X^{(0)}_{\cL_v} \left( X^{(0)}_{\cL_v}\right)^{+}\right)\left(\boldsymbol{\phi}_{u}\bm{a}+\boldsymbol{\phi}_{u} \bm{b}z+\boldsymbol\iota_u\right) < \bm 1 + ( X^{(0)}_{\cL_v^c})^\top\left(( X^{(0)}_{\cL_v})^\top\right)^{+}\cS_{\cL_v} 
       \end{aligned}
\right\}, \\
    &= \left\{ \boldsymbol{\nu}_1 z \leq \boldsymbol{\kappa}_1 \right\}
\end{aligned}$%
}\\

where
\[
\boldsymbol{\nu}_1 =     \begin{pmatrix}          \frac{1}{\tilde{\lambda}n_T}( X^{(0)}_{\cL_v^c})^\top\left(I_{n_T} -  X^{(0)}_{\cL_v} \left( X^{(0)}_{\cL_v}\right)^{+}\right)\boldsymbol{\phi}_{u} \bm{b} \\
    -\frac{1}{\tilde{\lambda}n_T}( X^{(0)}_{\cL_v^c})^\top\left(I_{n_T} -  X^{(0)}_{\cL_v} \left( X^{(0)}_{\cL_v}\right)^{+}\right)\boldsymbol{\phi}_{u} \bm{b}
    \end{pmatrix},
\]
\[
\resizebox{\linewidth}{!}{$
\boldsymbol{\kappa}_1 = 
\begin{pmatrix}
    \bm 1 - \left( X^{(0)}_{\cL_v^c}\right)^\top\left(\left( X^{(0)}_{\cL_v}\right)^\top\right)^{+}\cS_{\cL_v} - \dfrac{1}{\tilde{\lambda}n_T}\left( X^{(0)}_{\cL_v^c}\right)^\top\left(I_{n_T} -  X^{(0)}_{\cL_v} \left( X^{(0)}_{\cL_v}\right)^{+}\right)(\boldsymbol{\phi}_{u} \bm{a} +  \boldsymbol\iota_u) \\
    \bm 1 + \left( X^{(0)}_{\cL_v^c}\right)^\top\left(\left( X^{(0)}_{\cL_v}\right)^\top\right)^{+}\cS_{\cL_v} + \dfrac{1}{\tilde{\lambda}n_T}\left( X^{(0)}_{\cL_v^c}\right)^\top\left(I_{n_T} -  X^{(0)}_{\cL_v} \left( X^{(0)}_{\cL_v}\right)^{+}\right)(\boldsymbol{\phi}_{u} \bm{a} +  \boldsymbol\iota_u)
\end{pmatrix}.
$}
\]

Finally, the set $\mathcal{Z}_v$ can be defined as:
\[
\mathcal{Z}_v = \{ z \in \mathbb{R} \mid \boldsymbol\nu z \leq \boldsymbol\kappa \},
\]
where $\boldsymbol\nu = (\boldsymbol\nu_0 \quad \boldsymbol\nu_1)^{\top}$, $\boldsymbol\kappa
= (\boldsymbol\kappa_0 \quad \boldsymbol\kappa_1)^{\top}$.

\subsection{Proof of Lemma \ref{lem:Zt}}\label{proof:lemZt}
The set $\mathcal{Z}_t$ in \eqref{eq:cZt} can be reformulated as follows:
\[
\mathcal{Z}_t = \left\{ z \in \mathbb{R} \,\middle|\, 
\begin{array}{l}
\left(\hat{\beta}^{(0)}_{\rm TransFusion}\right)_j(z) \neq 0, \quad \forall j \in \mathcal{M}_t, \\
\left(\hat{\beta}^{(0)}_{\rm TransFusion}\right)_j(z) = 0, \quad \forall j \notin \mathcal{M}_t, \\
\text{sign} \bigg( \left(\hat{\boldsymbol\beta}^{(0)}_{\rm TransFusion}\right)_{\mathcal{M}_t}(z) \bigg) = \cS_{\mathcal{M}_t}
\end{array} 
\right\} 
\]

First, based on Eq. \eqref{eq:delta_v}, we compute 
$\hat{\boldsymbol{\delta}}_v(z)$ as follows:
\begin{equation}\label{eq:del_u}
\begin{split}
           \hat{\boldsymbol{\delta}}_v(z) &= F_v\hat{\boldsymbol{\delta}}_{\cL_v}(z)\\           &=F_v\left(\left( X^{(0)}_{\cL_v}\right)^\top X^{(0)}_{\cL_v}\right)^{-1} \left( \left( X^{(0)}_{\cL_v}\right)^\top\left(\boldsymbol{\phi}_u\bm{Y}(z) + \boldsymbol\iota_u\right) - n_T\tilde{\lambda} \cS_{\cL_v}\right ) 
\end{split}
\end{equation}

where 
\[
F_v = \left( \bm{e}_{j_1}, \bm{e}_{j_2},\dots, \bm{e}_{j_{|\cL_v|}} \right) \in \mathbb{R}^{p \times |\cL_v| }, \quad j_k \in \cL_v.
\]\\

Next, we compute $\hat{\boldsymbol{\beta}}(z)$ using Eq. \eqref{eq:beta_transfusion} in the TransFusion algorithm, after obtaining 
$\hat{\boldsymbol w}_u(z)$ and $\hat{\boldsymbol{\delta}}_v(z)$
from \eqref{eq:w_u} and \eqref{eq:del_u}, respectively:
\begin{align*}
\hat{\boldsymbol\beta}^{(0)}_{\rm TransFusion}(z) &=  \hat{\boldsymbol w}_u(z) + \hat{\boldsymbol{\delta}}_v(z) \\
&= \frac{1}{N}BE_u \left( X^\top_{\cO_u} X_{\cO_u}\right)^{-1} \left ( X^\top_{\cO_u}\bm{Y}(z)-N\lambda_0 \mathbf{\tilde{a}}_{\cO_u} \circ \cS_{\cO_u}\right ) \\ & \quad + F_v\left(\left( X^{(0)}_{\cL_v}\right)^\top X^{(0)}_{\cL_v}\right)^{-1} \left( \left( X^{(0)}_{\cL_v}\right)^\top\left(\boldsymbol{\phi}_u\bm{Y}(z) + \boldsymbol\iota_u\right) - n_T\tilde{\lambda} \cS_{\cL_v}\right ) \\
& = \boldsymbol\xi_{uv}\bm{Y}(z) + \boldsymbol\zeta_{uv}
\end{align*}
where

\resizebox{\linewidth}{!}{$
\begin{gathered}
\boldsymbol{\xi}_{uv} = \frac{1}{N} B E_u 
\left(  X^\top_{\cO_u}  X_{\cO_u} \right)^{-1} 
 X^\top_{\cO_u} + 
F_v \left( \left(  X^{(0)}_{\cL_v} \right)^\top  X^{(0)}_{\cL_v} \right)^{-1} 
\left(  X^{(0)}_{\cL_v} \right)^\top \boldsymbol{\phi}_u, \\
\boldsymbol{\zeta}_{uv} = 
-\lambda_0 B E_u 
\left(  X^\top_{\cO_u}  X_{\cO_u} \right)^{-1} 
\left( \tilde{\bm{a}}_{\cO_u} \circ \cS_{\cO_u} \right) +
F_v \left( \left(  X^{(0)}_{\cL_v} \right)^\top  X^{(0)}_{\cL_v} \right)^{-1} 
\left( \left(  X^{(0)}_{\cL_v} \right)^\top \boldsymbol\iota_u - n_T \tilde{\lambda} \cS_{\cL_v} \right).
\end{gathered}
$}\\

Then, denoting $\mathcal{M}_t^c$ as the complement of $\mathcal{M}_t$, the set $\mathcal{Z}_t$ can be rewritten as:
\begin{equation*}
\mathcal{Z}_t = \left\{ z \in \mathbb{R} \,\middle|\, 
\begin{array}{l}
\hat{\boldsymbol{\beta}}(z) = \boldsymbol\xi_{uv}\bm{Y}(z) + \boldsymbol\zeta_{uv}\\
\text{sign} \bigg( 
\left(\hat{\boldsymbol\beta}^{(0)}_{\rm TransFusion}\right)_{\mathcal{M}_t}(z) \bigg) = \cS_{\mathcal{M}_t}, \\
\hat{\boldsymbol{\beta}}_{\mathcal{M}_{t}^c}(z) = \mathbf{0} 
\end{array} 
\right\}
\end{equation*}

The two last conditions of $\mathcal{Z}_t$ then can be rewritten as:
\begin{align*}
\left\{ \text{sign} \bigg( \left(\hat{\boldsymbol\beta}^{(0)}_{\rm TransFusion}\right)_{\mathcal{M}_t}(z) \bigg) = \cS_{\mathcal{M}_t} \right\} 
&= \left\{ \cS_{\mathcal{M}_t} \circ \left(\hat{\boldsymbol\beta}^{(0)}_{\rm TransFusion}\right)_{\mathcal{M}_t}(z) > \bm 0 \right\}, \\
&= \left\{ \cS_{\mathcal{M}_t} \circ D_t\left(\boldsymbol\xi_{uv}\bm{Y}(z) + \boldsymbol\zeta_{uv}\right) > \bm 0 \right\}, \\
&= \left\{ \cS_{\mathcal{M}_t} \circ D_t\left(\boldsymbol\xi_{uv}(\bm{a}+\bm{b}z ) + \boldsymbol\zeta_{uv}\right) > \bm 0 \right\}, \\
&= \left\{ \cS_{\mathcal{M}_t} \circ D_t \left(\boldsymbol{\zeta}_{uv}+\boldsymbol\xi_{uv}\bm{a}\right)> -\cS_{\mathcal{M}_t} \circ D_t\boldsymbol\xi_{uv}\bm{b}z \right\}, \\
&= \left\{ \boldsymbol{\omega}_0 z \leq \boldsymbol{\rho}_0 \right\}
\end{align*}
where
\[
D_t =  \begin{bmatrix} \bm{e}_{j_1}^\top \\ \bm{e}_{j_2}^\top \\ \vdots \\ \bm{e}_{j_{|\mathcal{M}_t|}}^\top \end{bmatrix} \in \mathbb{R}^{|\mathcal{M}_t| \times p} \, , \, j_k \in \mathcal{M}_t,
\]
\[
\boldsymbol{\omega}_0 = -\cS_{\mathcal{M}_t} \circ D_t\boldsymbol\xi_{uv}\bm{b},\quad 
\boldsymbol{\rho}_0 = \cS_{\mathcal{M}_t} \circ D_t\left(\boldsymbol{\zeta}_{uv}+\boldsymbol\xi_{uv}\bm{a}\right).
\]
\begin{align*}
\left\{ \hat{\boldsymbol{\beta}}_{\mathcal{M}_{t}^c}(z) = 0 \right\}
&= \left\{ 0 \leq \hat{\boldsymbol{\beta}}_{\mathcal{M}_{t}^c}(z)  \leq 0 \right\}, \\
&= \left\{ 0 \leq D_t^c\left(\boldsymbol\xi_{uv}\bm{Y}(z) + \boldsymbol\zeta_{uv}\right)  
\leq 0 \right\}, \\
&= \left\{ 0 \leq D_t^c\left(\boldsymbol\xi_{uv}(\bm{a} + \bm{b}z) + \boldsymbol\zeta_{uv}\right)  
\leq 0 \right\}, \\
&= \left\{ 
\begin{aligned}
    D_t^c\boldsymbol\xi_{uv}\bm{b}z \leq     - D_t^c(\boldsymbol\xi_{uv}\bm{a} + \boldsymbol{\zeta}_{uv})\\
    - D_t^c\boldsymbol\xi_{uv}\bm{b}z \leq     D_t^c(\boldsymbol\xi_{uv}\bm{a} + \boldsymbol{\zeta}_{uv})
\end{aligned}
\right\}, \\
&= \left\{ \boldsymbol{\omega}_1 z \leq \boldsymbol{\rho}_1 \right\}
\end{align*}

where
\[
D_t^c =  \begin{bmatrix} \bm{e}_{j_1}^\top \\ \bm{e}_{j_2}^\top \\ \vdots \\ \bm{e}_{j_{|\mathcal{M}^c_t|}}^\top \end{bmatrix} \in \mathbb{R}^{|\mathcal{M}^c_t| \times p} \, , \, j_k \in \mathcal{M}^c_t,
\]
\[
\boldsymbol{\omega}_1 = 
\begin{pmatrix}
    D_t^c\boldsymbol\xi_{uv}\bm{b}\\
    -D_t^c\boldsymbol\xi_{uv}\bm{b}
\end{pmatrix},\quad 
\boldsymbol{\rho}_1 = 
\begin{pmatrix}
     -D_t^c(\boldsymbol\xi_{uv}\bm{a} + \boldsymbol{\zeta}_{uv})\\
     D_t^c(\boldsymbol\xi_{uv}\bm{a} + \boldsymbol{\zeta}_{uv})
\end{pmatrix}.
\]\\

Finally, the set $\mathcal{Z}_t$ can be defined as:
\[
\mathcal{Z}_t = \{ z \in \mathbb{R} \mid \boldsymbol\omega z \leq \boldsymbol\rho \},
\]
where $\boldsymbol\omega = (\boldsymbol\omega_0 \quad \boldsymbol\omega_1)^{\top}$, $\boldsymbol\rho
= (\boldsymbol\rho_0 \quad \boldsymbol\rho_1)^{\top}$.
\subsection{Proof of Lemma $\ref{lem:Z-OTL}$}\label{proof:Z-OTL}
The set $\mathcal{Z}_u^{\rm otl}$ can be reformulated as follows:
\[
\mathcal{Z}_u^{\rm otl} = \left\{ z \in \mathbb{R} \,\middle|\, 
\begin{array}{l}
\hat{w}^{\cI}_{j'}(z) \neq 0, \quad \forall {j'} \in \cO^{\rm otl}_u, \\
\hat{w}^{\cI}_{j'}(z) = 0, \quad \forall {j'} \notin \cO^{\rm otl}_u, \\
\text{sign} \big( \hat{\boldsymbol w}^{\cI}_{\cO^{\rm otl}_u}(z) \big) = \cS_{\cO^{\rm otl}_u}
\end{array}
\right\}
\]
We characterize $Z_u^{\rm otl}$ through derivation from the KKT  conditions of the Lasso \eqref{eq:w_OTL} as follows:
\begin{equation}\label{eq:KKT-w2}
\begin{split}
    &\frac{1}{n_\mathcal{I}}\left( X^\mathcal{I}\right)^\top\left( X^\mathcal{I} \hat{\boldsymbol w}^{\cI}(z) - \bm{Y}^\mathcal{I}(z)\right) + \lambda_{w} \cS(z) = \bm 0, \\
    &\cS_{j'} = \text{sign}(\hat{w}^{\cI}_{j'}(z)), \quad  \text{if } \hat{w}^{\cI}_{j'}(z) \neq 0,\\
    &\cS_{j'} \in (-1,1), \quad \text{if } \hat{w}^{\cI}_{j'}(z) = 0.
\end{split}
\end{equation}

Let us define:
\begin{equation*}
     P  = \left(I_{n_\mathcal{I}}, 0_{{n_\mathcal{I}} \times n_0}  \right) \in \mathbb{R}^{n_\mathcal{I} \times \left(n_\mathcal{I}+n_T\right)}
\end{equation*}
and obtain $\bm{Y}^{\mathcal{I}}(z) =  P \bm{Y}(z)$. \\

By partitioning Equation~\eqref{eq:KKT-w2} with respect to the active set $\cO^{\rm otl}_{u}$, where ${\cO^{\rm otl}_{u}}^{c}$ denotes its complement, we reformulate the KKT conditions as:
\begin{equation}\label{eq:KKT-w}
    \begin{split}
        &\frac{1}{n_\mathcal{I}}\left ( X_{\cO^{\rm otl}_u}^\mathcal{I}\right)^\top\left( X_{\cO^{\rm otl}_u}^\mathcal{I}\hat{\boldsymbol w}^{\cI}_{\cO^{\rm otl}_u}(z) -  P \bm{Y}(z)\right) + \lambda_{w}\cS_{\cO^{\rm otl}_u}(z) = \bm 0, \\
        &\frac{1}{n_\mathcal{I}}\left ( X_{{\cO^{\rm otl}_{u}}^{c}}^\mathcal{I}\right)^\top\left( X_{\cO^{\rm otl}_u}^\mathcal{I}\hat{\boldsymbol w}^{\cI}_{\cO^{\rm otl}_u}(z) -  P \bm{Y}(z)\right) + \lambda_{w}\cS_{{\cO^{\rm otl}_{u}}^{c}}(z) = \bm 0, \\
        &\text{sign} \big( \hat{\boldsymbol w}^{\cI}_{\cO^{\rm otl}_u}(z) \big) = \cS_{\cO^{\rm otl}_u},\\ 
        &\| \cS_{{\cO^{\rm otl}_{u}}^{c}} \|_{\infty} < \bm 1.
    \end{split}
\end{equation}

Solving the first two equations in \eqref{eq:KKT-w2} for $\hat{\boldsymbol w}^{\cI}_{\cO^{\rm otl}_u}(z)$ and $\cS_{{\cO^{\rm otl}_{u}}^{c}}$ yields the equivalent conditions:
\begin{align}
    &\hat{\boldsymbol w}^{\cI}_{\cO^{\rm otl}_u}(z) = \left(\left ( X_{\cO^{\rm otl}_{u}}^\mathcal{I} \right)^\top X_{\cO^{\rm otl}_{u}}^\mathcal{I}\right)^{-1}\left(\left ( X_{\cO^{\rm otl}_{u}}^\mathcal{I} \right)^\top P \bm{Y}(z)-{n_\mathcal{I}}\lambda_{\boldsymbol{w}} \cS_{\cO^{\rm otl}_u}\right), \label{eq:wW-OTL} \\
    &\resizebox{\textwidth}{!}{$
    \cS_{{\cO^{\rm otl}_{u}}^{c}} 
    = \left( X^{\mathcal{I}}_{{\cO^{\rm otl}_{u}}^{c}}\right)^\top\left(\left( X^\mathcal{I}_{\cO^{\rm otl}_u}\right)^\top \right)^{+} \cS_{\cO^{\rm otl}_u} 
 + \frac{1}{\lambda_{\boldsymbol{w}} {n_\mathcal{I}}}\left( X^{\mathcal{I}}_{{\cO^{\rm otl}_{u}}^{c}}\right)^\top\left(I_{n_\mathcal{I}} -  X^{\mathcal{I}}_{\cO^{\rm otl}_u}\left( X^{\mathcal{I}}_{\cO^{\rm otl}_u}\right)^{+}\right) P \bm{Y}(z), $} \nonumber \\
    &\operatorname{sign} \big( \hat{\boldsymbol w}^{\cI}_{\cO^{\rm otl}_u}(z) \big) = \cS_{\cO^{\rm otl}_u}, \nonumber \\ 
    &\| \cS_{{\cO^{\rm otl}_{u}}^{c}} \|_{\infty} < \bm 1. \nonumber
\end{align}

Then, the set $\mathcal{Z}_u^{\rm otl}$ can be rewritten as:
\[
\resizebox{\linewidth}{!}{$
\mathcal{Z}_u^{\rm otl} = \left\{ z \in \mathbb{R} \,\middle|\, 
\begin{aligned}
& \hat{\boldsymbol w}^{\cI}_{\cO^{\rm otl}_u}(z) = \left(\left ( X_{\cO^{\rm otl}_{u}}^\mathcal{I} \right)^\top X_{\cO^{\rm otl}_{u}}^\mathcal{I}\right)^{-1}\left(\left ( X_{\cO^{\rm otl}_{u}}^\mathcal{I} \right)^\top P \bm{Y}(z)-{n_\mathcal{I}}\lambda_{\boldsymbol{w}} \cS_{\cO^{\rm otl}_u}\right), \\
    &\cS_{{\cO^{\rm otl}_{u}}^{c}} 
    = \left( X^{\mathcal{I}}_{{\cO^{\rm otl}_{u}}^{c}}\right)^\top\left(\left( X^\mathcal{I}_{\cO^{\rm otl}_u}\right)^\top \right)^{+} \cS_{\cO^{\rm otl}_u} 
 + \frac{1}{\lambda_{\boldsymbol{w}} {n_\mathcal{I}}}\left( X^{\mathcal{I}}_{{\cO^{\rm otl}_{u}}^{c}}\right)^\top\left(I_{n_\mathcal{I}} -  X^{\mathcal{I}}_{\cO^{\rm otl}_u}\left( X^{\mathcal{I}}_{\cO^{\rm otl}_u}\right)^{+}\right) P \bm{Y}(z) , \\
& \operatorname{sign} \big( \hat{\boldsymbol w}^{\cI}_{\cO^{\rm otl}_u}(z) \big) = \cS_{\cO^{\rm otl}_u}, \\ 
& \| \cS_{{\cO^{\rm otl}_{u}}^{c}} \|_{\infty} < \bm 1.
\end{aligned}
\right\}
$}
\]

The two last conditions of $\mathcal{Z}_u^{\rm otl}$ then can be rewritten as:
\begin{align*}
&\quad \:\left\{ \text{sign} \big( \hat{\boldsymbol w}^{\cI}_{\cO^{\rm otl}_u}(z) \big) = \cS_{\cO^{\rm otl}_u} \right\} \\
&= \left\{ \cS_{\cO^{\rm otl}_u} \circ \hat{\boldsymbol w}^{\cI}_{\cO^{\rm otl}_u}(z) > \bm 0 \right\}, \\
&= \left\{ \cS_{\cO^{\rm otl}_u} \circ \left(\left ( X_{\cO^{\rm otl}_{u}}^\mathcal{I} \right)^\top X_{\cO^{\rm otl}_{u}}^\mathcal{I}\right)^{-1}\left(\left ( X_{\cO^{\rm otl}_{u}}^\mathcal{I} \right)^\top P \bm{Y}(z)-{n_\mathcal{I}}\lambda_{\boldsymbol{w}} \cS_{\cO^{\rm otl}_u}\right) > \bm 0 \right\}, \\
&= \left\{ 
    \cS_{\cO^{\rm otl}_u} \circ \left( X^{\mathcal{I}}_{\cO^{\rm otl}_u}\right)^{+} P \bm{Y}(z) > {n_\mathcal{I}}\lambda_{\boldsymbol{w}} \cS_{\cO^{\rm otl}_u} \circ \left(\left(\left ( X_{\cO^{\rm otl}_{u}}^\mathcal{I} \right)^\top X_{\cO^{\rm otl}_{u}}^\mathcal{I}\right)^{-1} \cS_{\cO^{\rm otl}_u} \right)
\right\}, \\
&= \left\{  \cS_{\cO^{\rm otl}_u} \circ \left( X^{\mathcal{I}}_{\cO^{\rm otl}_u}\right)^{+} P (\bm{a} + \bm{b}z) > {n_\mathcal{I}}\lambda_{\boldsymbol{w}} \cS_{\cO^{\rm otl}_u} \circ \left( \left(\left ( X_{\cO^{\rm otl}_{u}}^\mathcal{I} \right)^\top X_{\cO^{\rm otl}_{u}}^\mathcal{I}\right)^{-1} \cS_{\cO^{\rm otl}_u} \right)
 \right\}, \\
&= \left\{ \boldsymbol{\psi}^{\rm otl}_0 z \leq \boldsymbol{\gamma}^{\rm otl}_0 \right\}
\end{align*}
where 
\[
\boldsymbol{\psi}^{\rm otl}_0 = -\cS_{\cO^{\rm otl}_u} \circ \left( X^{\mathcal{I}}_{\cO^{\rm otl}_u}\right)^{+}  P \bm{b}\,,
\]
\[
\boldsymbol{\gamma}^{\rm otl}_0 = \cS_{\cO^{\rm otl}_u} \circ \left( X^{\mathcal{I}}_{\cO^{\rm otl}_u}\right)^{+}  P \bm{a} - {n_\mathcal{I}}\lambda_{\boldsymbol{w}} \cS_{\cO^{\rm otl}_u} \circ \left( \left(\left ( X_{\cO^{\rm otl}_{u}}^\mathcal{I} \right)^\top X_{\cO^{\rm otl}_{u}}^\mathcal{I}\right)^{-1} \cS_{\cO^{\rm otl}_u} \right).
\]

\resizebox{1\linewidth}{!}{$
\begin{aligned}
    & \quad \:\left\{ \| \cS_{{\cO^{\rm otl}_{u}}^{c}} \|_{\infty} < \bm 1 \right\} \\
    &= \left\{ -\bm 1 < \cS_{{\cO^{\rm otl}_{u}}^{c}} < \bm 1 \right\}, \\
    &= \left\{ -\bm 1 < \left( X^{\mathcal{I}}_{{\cO^{\rm otl}_{u}}^{c}}\right)^\top\left(\left( X^\mathcal{I}_{\cO^{\rm otl}_u}\right)^\top \right)^{+} \cS_{\cO^{\rm otl}_u} + \frac{1}{\lambda_{\boldsymbol{w}} {n_\mathcal{I}}}\left( X^{\mathcal{I}}_{{\cO^{\rm otl}_{u}}^{c}}\right)^\top\left (I_{n_\mathcal{I}} -  X^{\mathcal{I}}_{\cO^{\rm otl}_u}\left( X^{\mathcal{I}}_{\cO^{\rm otl}_u}\right)^{+}\right) P \bm{Y}(z) < \bm 1 \right\}, \\
    &= \left\{
    \begin{aligned}       
     \frac{1}{\lambda_{\boldsymbol{w}} {n_\mathcal{I}}}\left( X^{\mathcal{I}}_{{\cO^{\rm otl}_{u}}^{c}}\right)^\top\left (I_{n_\mathcal{I}} -  X^{\mathcal{I}}_{\cO^{\rm otl}_u}\left( X^{\mathcal{I}}_{\cO^{\rm otl}_u}\right)^{+}\right) P \bm{Y}(z) &< \bm 1 - \left( X^{\mathcal{I}}_{{\cO^{\rm otl}_{u}}^{c}}\right)^\top\left(\left( X^\mathcal{I}_{\cO^{\rm otl}_u}\right)^\top \right)^{+} \cS_{\cO^{\rm otl}_u}  , \\
    - \frac{1}{\lambda_{\boldsymbol{w}} {n_\mathcal{I}}}\left( X^{\mathcal{I}}_{{\cO^{\rm otl}_{u}}^{c}}\right)^\top\left (I_{n_\mathcal{I}} -  X^{\mathcal{I}}_{\cO^{\rm otl}_u}\left( X^{\mathcal{I}}_{\cO^{\rm otl}_u}\right)^{+}\right) P \bm{Y}(z) &< \bm 1 + \left( X^{\mathcal{I}}_{{\cO^{\rm otl}_{u}}^{c}}\right)^\top\left(\left( X^\mathcal{I}_{\cO^{\rm otl}_u}\right)^\top \right)^{+} \cS_{\cO^{\rm otl}_u} 
       \end{aligned}
\right\}, \\
  &= \left\{
    \begin{aligned}       
     \frac{1}{\lambda_{\boldsymbol{w}} {n_\mathcal{I}}}\left( X^{\mathcal{I}}_{{\cO^{\rm otl}_{u}}^{c}}\right)^\top\left (I_{n_\mathcal{I}} -  X^{\mathcal{I}}_{\cO^{\rm otl}_u}\left( X^{\mathcal{I}}_{\cO^{\rm otl}_u}\right)^{+}\right) P (\bm{a} + \bm{b}z) &< \bm 1 - \left( X^{\mathcal{I}}_{{\cO^{\rm otl}_{u}}^{c}}\right)^\top\left(\left( X^\mathcal{I}_{\cO^{\rm otl}_u}\right)^\top \right)^{+} \cS_{\cO^{\rm otl}_u}  , \\
    - \frac{1}{\lambda_{\boldsymbol{w}} {n_\mathcal{I}}}\left( X^{\mathcal{I}}_{{\cO^{\rm otl}_{u}}^{c}}\right)^\top\left (I_{n_\mathcal{I}} -  X^{\mathcal{I}}_{\cO^{\rm otl}_u}\left( X^{\mathcal{I}}_{\cO^{\rm otl}_u}\right)^{+}\right) P (\bm{a} + \bm{b}z) &< \bm 1 + \left( X^{\mathcal{I}}_{{\cO^{\rm otl}_{u}}^{c}}\right)^\top\left(\left( X^\mathcal{I}_{\cO^{\rm otl}_u}\right)^\top \right)^{+} \cS_{\cO^{\rm otl}_u} 
       \end{aligned}
\right\}, \\
    &= \left\{ \boldsymbol{\psi}^{\rm otl}_1 z \leq \boldsymbol{\gamma}^{\rm otl}_1 \right\}
\end{aligned}
$}\\

where
\[
\boldsymbol{\psi}^{\rm otl}_1 =  \begin{pmatrix}    \frac{1}{\lambda_{\boldsymbol{w}} {n_\mathcal{I}}}\left( X^{\mathcal{I}}_{{\cO^{\rm otl}_{u}}^{c}}\right)^\top\left (I_{n_\mathcal{I}} -  X^{\mathcal{I}}_{\cO^{\rm otl}_u}\left( X^{\mathcal{I}}_{\cO^{\rm otl}_u}\right)^{+}\right) P \bm{b} \\
    -\frac{1}{\lambda_{\boldsymbol{w}} {n_\mathcal{I}}}\left( X^{\mathcal{I}}_{{\cO^{\rm otl}_{u}}^{c}}\right)^\top\left (I_{n_\mathcal{I}} -  X^{\mathcal{I}}_{\cO^{\rm otl}_u}\left( X^{\mathcal{I}}_{\cO^{\rm otl}_u}\right)^{+}\right) P \bm{b}
    \end{pmatrix},
\]
\[
\resizebox{1\linewidth}{!}{$
\boldsymbol{\gamma}^{\rm otl}_1 = \begin{pmatrix}
\bm 1 - \left( X^{\mathcal{I}}_{{\cO^{\rm otl}_{u}}^{c}}\right)^\top\left(\left( X^\mathcal{I}_{\cO^{\rm otl}_u}\right)^\top \right)^{+} \cS_{\cO^{\rm otl}_u} - \frac{1}{\lambda_{\boldsymbol{w}} {n_\mathcal{I}}}\left( X^{\mathcal{I}}_{{\cO^{\rm otl}_{u}}^{c}}\right)^\top\left (I_{n_\mathcal{I}} -  X^{\mathcal{I}}_{\cO^{\rm otl}_u}\left( X^{\mathcal{I}}_{\cO^{\rm otl}_u}\right)^{+}\right) P \bm{a} \\
\bm 1 + \left( X^{\mathcal{I}}_{{\cO^{\rm otl}_{u}}^{c}}\right)^\top\left(\left( X^\mathcal{I}_{\cO^{\rm otl}_u}\right)^\top \right)^{+} \cS_{\cO^{\rm otl}_u} + \frac{1}{\lambda_{\boldsymbol{w}} {n_\mathcal{I}}}\left( X^{\mathcal{I}}_{{\cO^{\rm otl}_{u}}^{c}}\right)^\top\left (I_{n_\mathcal{I}} -  X^{\mathcal{I}}_{\cO^{\rm otl}_u}\left( X^{\mathcal{I}}_{\cO^{\rm otl}_u}\right)^{+}\right) P \bm{a}
\end{pmatrix}.
$}
\]\\

Finally, the set $\mathcal{Z}_u^{\rm otl}$ can be defined as:
\[
\mathcal{Z}_u^{\rm otl} = \{ z \in \mathbb{R} \mid \boldsymbol{\psi}^{\rm otl} z \leq \boldsymbol{\gamma}^{\rm otl} \},
\]
where $\boldsymbol{\psi}^{\rm otl} = (\boldsymbol{\psi}^{\rm otl}_0 \quad \boldsymbol{\psi}^{\rm otl}_1)^{\top}$, $\boldsymbol{\gamma}^{\rm otl} = (\boldsymbol{\gamma}^{\rm otl}_0 \quad \boldsymbol{\gamma}^{\rm otl}_1)^{\top}$.\\

Thus, the set $\mathcal{Z}_u^{\rm otl}$ can be identified by solving a system of linear inequalities with respect to $z$.\\

Next, we compute $\hat{\boldsymbol w}^{\cI}_u(z)$ based on $\hat{\boldsymbol w}^{\cI}_{\cO^{\rm otl}_u}(z)$ in $\eqref{eq:wW-OTL}$:
\begin{equation}\label{eq:w_u-OTL}
\begin{split}
        \hat{\boldsymbol w}^{\cI}_u(z) &=  E^{\rm otl}_u\hat{\boldsymbol w}^{\cI}_{\cO^{\rm otl}_u}(z)\\ &= E^{\rm otl}_u \left(\left ( X_{\cO^{\rm otl}_{u}}^\mathcal{I} \right)^\top X_{\cO^{\rm otl}_{u}}^\mathcal{I}\right)^{-1}\left(\left ( X_{\cO^{\rm otl}_{u}}^\mathcal{I} \right)^\top P \bm{Y}(z)-{n_\mathcal{I}}\lambda_{\boldsymbol{w}} \cS_{\cO^{\rm otl}_u}\right)
\end{split}
\end{equation}
where $E^{\rm otl}_u = \left( \bm{e}_{j_1}, \bm{e}_{j_2},\dots, \bm{e}_{j_{|\cO^{\rm otl}_u|}} \right) \in \mathbb{R}^{p \times |\cO^{\rm otl}_u| }, \quad j_k \in \cO^{\rm otl}_u$.\\

The proofs for $\mathcal{Z}^{\rm otl}_v$ and $\mathcal{Z}^{\rm otl}_t$  proceed in a manner analogous to those for $\mathcal{Z}_v$ $\mathcal{Z}_t$ in Appendices \ref{proof:lemZv} and \ref{proof:lemZt}, with the vectors $\boldsymbol{\nu}^{\rm otl}, \boldsymbol{\kappa}^{\rm otl},  \boldsymbol{\omega}^{\rm otl},  \boldsymbol{\rho}^{\rm otl}$ taking forms corresponding to $\boldsymbol{\nu}, \boldsymbol{\kappa},  \boldsymbol{\omega},  \boldsymbol{\rho}$, respectively, where
\[ Q^{\rm otl} = \left(0_{n_T \times n_{\mathcal{I}}}, I_{n_T} \right) \in \mathbb{R}^{n_T \times \left(n_\mathcal{I}+n_0\right)},\]
\[
F^{\rm otl}_v = \left( \bm{e}_{j_1}, \bm{e}_{j_2},\dots, \bm{e}_{j_{|\cL^{\rm otl}_v|}} \right) \in \mathbb{R}^{p \times |\cL^{\rm otl}_v| }, \quad j_k \in \cL^{\rm otl}_v,
\]
\[
\boldsymbol{\phi}^{\rm otl}_u = Q^{\rm otl} -  X^{(0)}E^{\rm otl}_u \left(\left ( X_{\cO^{\rm otl}_{u}}^\mathcal{I} \right)^\top X_{\cO^{\rm otl}_{u}}^\mathcal{I}\right)^{-1}\left ( X_{\cO^{\rm otl}_{u}}^\mathcal{I} \right)^\top P ,
\]
\[
\boldsymbol{\iota}^{\rm otl}_u = {n_\mathcal{I}}\lambda_{\boldsymbol{w}}  X^{(0)}E^{\rm otl}_u \left(\left ( X_{\cO^{\rm otl}_{u}}^\mathcal{I} \right)^\top X_{\cO^{\rm otl}_{u}}^\mathcal{I}\right)^{-1}\cS_{\cO^{\rm otl}_u},
\]
\resizebox{\linewidth}{!}{$
\begin{gathered}
\boldsymbol{\xi}^{\rm otl}_{uv} = E^{\rm otl}_u \left(\left ( X_{\cO^{\rm otl}_{u}}^\mathcal{I} \right)^\top X_{\cO^{\rm otl}_{u}}^\mathcal{I}\right)^{-1}\left ( X_{\cO^{\rm otl}_{u}}^\mathcal{I} \right)^\top P  + 
F^{\rm otl}_v \left( \left(  X^{(0)}_{\cL^{\rm otl}_v} \right)^\top  X^{(0)}_{\cL^{\rm otl}_v} \right)^{-1} 
\left(  X^{(0)}_{\cL^{\rm otl}_v} \right)^\top \boldsymbol{\phi}^{\rm otl}_u, \\
\boldsymbol{\zeta}^{\rm otl}_{uv} = 
 - {n_\mathcal{I}}\lambda_{\boldsymbol{w}}E^{\rm otl}_u \left(\left ( X_{\cO^{\rm otl}_{u}}^\mathcal{I} \right)^\top X_{\cO^{\rm otl}_{u}}^\mathcal{I}\right)^{-1} \cS_{\cO^{\rm otl}_u} + F^{\rm otl}_v \left( \left(  X^{(0)}_{\cL^{\rm otl}_v} \right)^\top  X^{(0)}_{\cL^{\rm otl}_v} \right)^{-1} 
\left( \left(  X^{(0)}_{\cL^{\rm otl}_v} \right)^\top \boldsymbol{\rho}^{\rm otl}_u - n_T \lambda_{\boldsymbol{\delta}} \cS_{\cL^{\rm otl}_v} \right).
\end{gathered}%
$}
 
\end{appendices}

\end{document}